
\documentclass{article}

\usepackage{microtype}
\usepackage{graphicx}
\usepackage{subcaption}
\usepackage{booktabs} 

\usepackage{hyperref}





\usepackage[accepted]{icml2026}

\usepackage{amsmath}
\usepackage{amssymb}
\usepackage{mathtools}
\usepackage{amsthm}

\usepackage[capitalize,noabbrev]{cleveref}

\theoremstyle{plain}
\newtheorem{theorem}{Theorem}[section]

\theoremstyle{definition}
\newtheorem{definition}[theorem]{Definition}
\newtheorem{assumption}[theorem]{Assumption}
\theoremstyle{remark}

\usepackage[textsize=tiny]{todonotes}


\usepackage{amsmath,amsfonts,bm}









\def\eqref#1{equation~\ref{#1}}









\def\1{\bm{1}}










\DeclareMathAlphabet{\mathsfit}{\encodingdefault}{\sfdefault}{m}{sl}
\SetMathAlphabet{\mathsfit}{bold}{\encodingdefault}{\sfdefault}{bx}{n}













\usepackage{hyperref}
\usepackage{url}
\usepackage{multicol,multirow,booktabs}
\usepackage{graphicx}
\usepackage{xcolor}
\usepackage{wrapfig}
\usepackage{float}
\usepackage{caption}
\usepackage{amsthm}
\usepackage{subcaption}
\usepackage{etoc}
\usepackage{listings}
\usepackage{mathtools}
\usepackage{makecell}

\definecolor{mycolor_red}{HTML}{E74C3C}
\definecolor{mycolor_orange}{HTML}{F5A404}
\definecolor{mycolor_green}{HTML}{298585}
\definecolor{mycolor_purple}{HTML}{58508D}

\definecolor{mygreen}{HTML}{32A626}
\definecolor{myred}{HTML}{E34A17}
\newcommand{\up}[1]{~$_{{\color{mygreen}{#1}}{\color{mygreen}{\uparrow}}}$}
\newcommand{\upbf}[1]{~$_{{\color{mygreen}{\mathbf{#1}}}{\color{mygreen}{\uparrow}}}$}
\newcommand{\down}[1]{~$_{{\color{myred}{#1}}{\color{myred}{\downarrow}}}$}

\newcommand{\tie}[1]{~$_{{\color{gray}{#1}}{\color{gray}{\rightarrow}}}$}

\usepackage{algorithmic}
\usepackage{algorithm}

\usepackage{amssymb}

\usepackage{soul,color,xcolor}      
\definecolor{ReviseColor}{rgb}{0.8039,0,0}   
\makeatletter
\newcommand*{\revise}{\@ifnextchar\bgroup{\revise@}{\color{ReviseColor}}}
\newcommand*{\revise@}[1]{{\textcolor{black}{#1}}}

\lstdefinestyle{promptstyle}{
    basicstyle=\ttfamily\small,
    breaklines=true,
    frame=single,
    framerule=1pt,
    backgroundcolor=\color{gray!5},
    rulecolor=\color{gray!30},
    tabsize=4,
    numbers=left,
    numberstyle=\tiny\color{gray},
    captionpos=b,
    escapechar=!,
    literate={'}{{'}}1
}


\icmltitlerunning{Mining Useful General Data for Low-Resource Domain Adaptation}

\begin{document}


\twocolumn[
  \icmltitle{Mining Useful General Data for Low-Resource Domain Adaptation}



  \icmlsetsymbol{equal}{*}

  \begin{icmlauthorlist}
    \icmlauthor{Pingjie Wang}{equal,sjtu}
    \icmlauthor{Hongcheng Liu}{equal,sjtu}
    \icmlauthor{Yusheng Liao}{sjtu}
    \icmlauthor{Ziqing Fan}{sjtu}
    \icmlauthor{Yaxin Du}{sjtu}
    \icmlauthor{Shuo Tang}{sjtu}
    \\
    \icmlauthor{Yanfeng Wang}{sjtu}
    \icmlauthor{Yu Wang}{sjtu}
  \end{icmlauthorlist}

  \icmlaffiliation{sjtu}{School of Artificial Intelligence, Shanghai Jiao Tong University, Shanghai, China}
  \icmlcorrespondingauthor{Yu Wang}{yuwangsjtu@sjtu.edu.cn}

  \icmlkeywords{Machine Learning, ICML}

  \centerline{Official repository: \url{https://github.com/applewpj/NTK-Selector}}

  \vskip 0.3in
]



\printAffiliationsAndNotice{\icmlEqualContribution}  

\begin{abstract}

Adapting large language models (LLMs) to low-resource domains remains challenging due to the scarcity of domain-specific data. While in-domain data is limited, vast general-domain data shares similar question-answer formats and reasoning patterns with domain tasks. This observation raises an important question: \textbf{\textit{can useful general-domain data be mined to improve low-resource domain adaptation?}} Our initial findings show that general-domain chain-of-thought data contains useful auxiliary signals for domain adaptation, even without careful selection, motivating a new paradigm beyond exclusive reliance on domain-specific data.
To systematically identify the most beneficial general-domain samples, we propose NTK-Selector, motivated by the Neural Tangent Kernel’s ability to capture alignment in training dynamics. Since directly applying NTK to pretrained LLMs is impractical, we introduce a Jacobian-free NTK approximation and empirically demonstrate stable NTK-like behavior during fine-tuning. Extensive experiments across medical, financial, legal, and psychological domains demonstrate that NTK-Selector consistently outperforms domain-only fine-tuning and existing data selection baselines. In particular, NTK-Selector achieves gains of \textbf{+8.7} and \textbf{+5.1} points on Llama3-8B-Instruct and Qwen3-8B, respectively, compared to only \textbf{+0.8} and \textbf{+0.9} points from domain-only fine-tuning.

\end{abstract}

\section{Introduction}

\begin{figure}[t]
    \centering
    \includegraphics[width=0.85\linewidth]{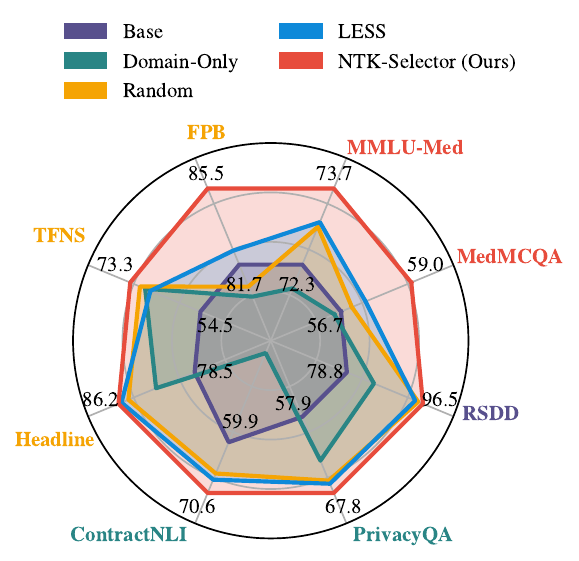}
    \caption{Performance of \textsc{Llama3-8B-Instruct} evaluated on \textcolor{mycolor_red}{medical}, \textcolor{mycolor_orange}{financial}, \textcolor{mycolor_green}{legal}, and \textcolor{mycolor_purple}{psychological} tasks. Each task is augmented with 9K auxiliary samples selected by Random, LESS, and NTK-Selector from Cot Collection based on 1K domain points.}
    \label{fig:radar}
\end{figure}

The emergence of large language models (LLMs) has led to remarkable advancements across a wide spectrum of natural language processing tasks~\citep{touvron2023llama,chowdhery2023palm,yang2025qwen3}. However, their formidable capabilities are predominantly anchored in the availability of immense, high-quality pre-training and instruction-tuning datasets. This dependence is particularly problematic in low-resource domains, where data is scarce, expensive to curate, and often entangled with privacy constraints. In such settings, directly fine-tuning LLMs on limited domain-specific datasets frequently induces severe overfitting and poor generalization~\citep{liu2023towards,bethune2025scaling}, as reflected by the degradation in Figure~\ref{fig:radar}. Consequently, the utility of LLMs for specialized downstream applications remains fundamentally constrained.

While in-domain data is limited, there exists a vast amount of general-domain data that shares similar topics, question–answer formats, and reasoning patterns with domain tasks. This observation raises a natural and important question: \textit{\textbf{can useful general-domain data be mined to improve low-resource domain adaptation?}} To explore this question, we conduct preliminary experiments using randomly sampled general-domain auxiliary data. As shown in Figure~\ref{fig:radar}, even naive incorporation of such data yields consistent improvements over domain-only fine-tuning. This result provides an important insight: general-domain data contains latent auxiliary signals that can enhance domain performance, motivating a new paradigm for domain adaptation beyond exclusive reliance on domain-specific supervision.

In practice, exploiting general-domain data requires principled data selection. Existing methods~\citep{xie2023dsir,liu2024tsds}, such as LESS~\citep{xia2024less}, are designed for task-specific settings and rely on large in-domain data pools or curated validation sets, assumptions that do not hold in low-resource, cross-domain scenarios. As a result, they are not directly applicable to mining useful general-domain data for domain adaptation.

To bridge this gap, we seek a criterion to predict how training on a general-purpose instance will influence the target performance. The Neural Tangent Kernel (NTK)~\citep{jacot2018ntk} provides a principled signal for this purpose, as it characterizes the alignment of training dynamics induced by different samples, independent of explicit semantic similarity. Under standard NTK assumptions, training dynamics are governed by the kernel at initialization, allowing each sample’s influence to be estimated once without recomputation or enumeration over combinations during training.

Applying NTK-based methods to modern pretrained LLMs, however, is non-trivial. Classical NTK formulations rely on assumptions that are not directly satisfied in pretrained, finite-width models, and exact NTK computation is computationally prohibitive at scale. In this work, we show that these challenges can be effectively addressed by empirically demonstrating stable NTK-like behavior in LLMs during fine-tuning and introducing a Jacobian-free NTK approximation that makes NTK-based selection practical for LLMs, which go beyond~\citet{Mohamadi2022AFW}.


Based on these key insights, we propose \textbf{NTK-Selector}, a novel two-stage data selection framework. It first performs a coarse-grained pre-selection of candidates using embedding similarity. This is followed by a fine-grained NTK selection stage, where we compute the NTK utility score for each pre-selected sample. Our method incorporates LoRA-based training and random gradient projections to significantly reduce memory and computational overhead, enabling efficient selection of large-scale auxiliary data. To our knowledge, this is the first work to both empirically investigate the kernel behavior of LLMs in the fine-tuning regime and to leverage the insight for scalable data selection.
In summary, our contributions are summarized as follows:
\begin{itemize}
   \item We identify and formalize a new domain adaptation paradigm that mines useful auxiliary data from large-scale general-domain corpora to improve performance in low-resource domains, under the realistic constraint of having no task-specific validation sets or large in-domain data pools available (\S\ref{subsec:problem_statement}).


   \item We propose \textbf{NTK-Selector}, a principled and efficient two-stage data selection framework grounded in the Neural Tangent Kernel. To enable NTK-based selection for pretrained LLMs, we provide the first empirical evidence of stable NTK-like behavior during fine-tuning and introduce a Jacobian-free NTK approximation that makes NTK computation practical at scale, going beyond prior NTK approximations designed for idealized settings (\S\ref{sec:ntk_for_llms}, \S\ref{sec:ntk_selector}).


   \item We demonstrate that NTK-Selector delivers substantial and consistent performance gains across diverse low-resource domains, including medicine, finance, law, and psychology. In particular, it achieves up to \textbf{+8.7} and \textbf{+5.1} average improvement over domain-only fine-tuning on \textsc{Llama3-8B-Instruct} and \textsc{Qwen3-8B}, respectively (\S\ref{sec:experiments}, \S\ref{sec:analysis}).
\end{itemize}



\section{Preliminaries}
\label{sec:preliminaries}
\subsection{Problem Statement}
\label{subsec:problem_statement}

Formally, let $\mathcal{D}$ denote the small domain-specific dataset and $\mathcal{C}$ the large general-purpose candidate corpus, where $|\mathcal{D}| \ll |\mathcal{C}|$. We aim to select a subset $\mathcal{S} \subset \mathcal{C}$ of size $N$. Let $f(\mathbf{x}, \theta): \mathbb{R}^d \to \mathbb{R}^D$ represent the output of LLM parameterized by $\theta$ with input $\mathbf{x} \in \mathbb{R}^d$, $\mathcal{L}(\theta; \mathcal{X})$ denote the training loss on a dataset $\mathcal{X}$, and $T(f(\cdot;\theta), \mathcal{T}_{\text{ test}})$ denote the task performance metric evaluated on a held-out test set $\mathcal{T}_{\text{val}}$.
Our objective is to identify an optimal subset $\mathcal{S}^*$ that maximizes downstream task performance after fine-tuning on the combined dataset $\mathcal{D} \cup \mathcal{S}^*$. This can be formulated as the following bi-level objective function:
\begin{equation}
\label{eq:optimization_problem}
\begin{aligned}
\mathcal{S}^*
&= \arg\max_{\mathcal{S} \subseteq \mathcal{C},\, |\mathcal{S}|=N}
T\!\left(f\!\left(\cdot; \theta_{\mathcal{S}}\right), \mathcal{T}_{\text{test}}\right), \\
\text{where}\quad
\theta_{\mathcal{S}}
&= \arg\min_{\theta}\, \mathcal{L}\!\left(\theta; \mathcal{D} \cup \mathcal{S}\right).
\end{aligned}
\end{equation}


\begin{figure*}
    \centering
    \begin{subfigure}[b]{0.68\textwidth}
        \includegraphics[width=\textwidth]{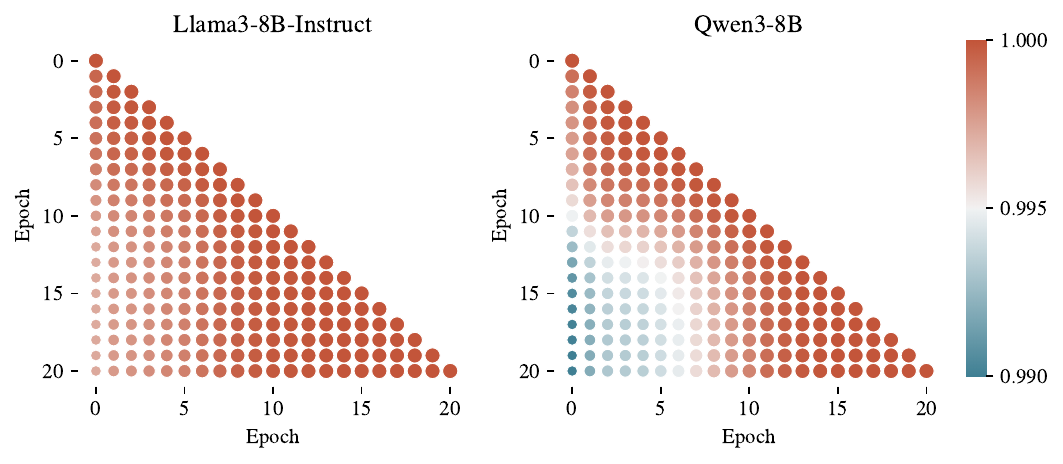}
        \caption{}
        \label{fig:ntk_cosine_similarity} 
    \end{subfigure}
    \begin{subfigure}[b]{0.29\textwidth}
        \includegraphics[width=\textwidth]{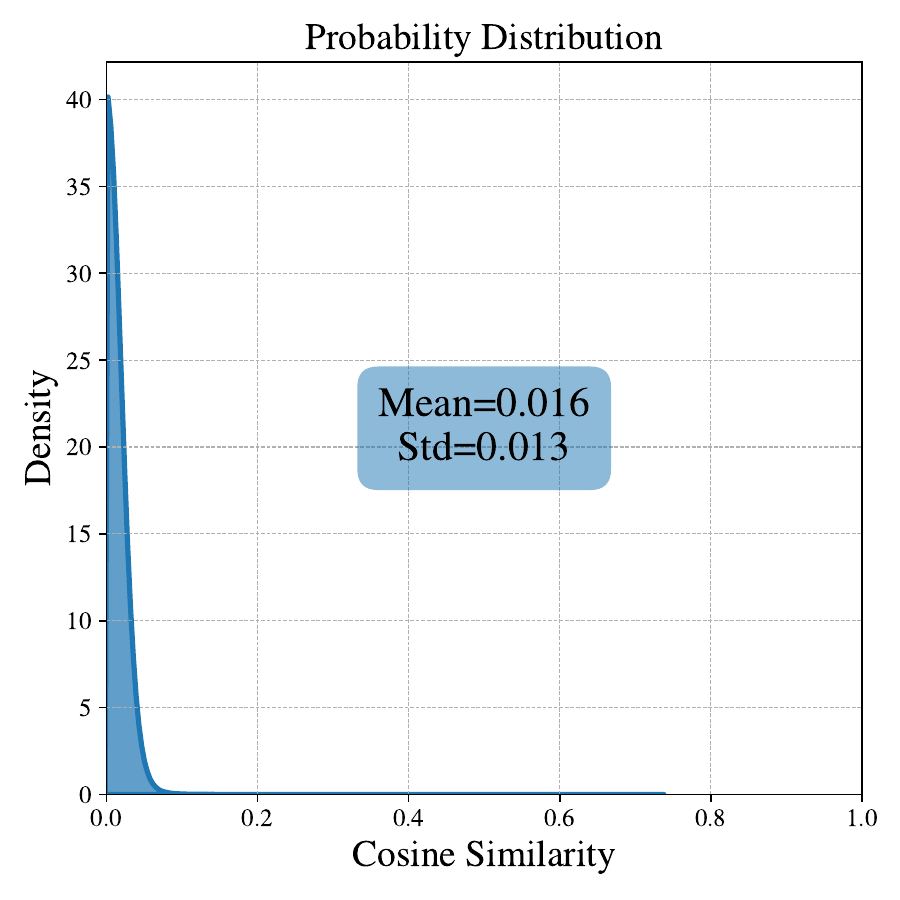}
        \caption{}
        \label{fig:ntk_solution} 
    \end{subfigure}
    \caption{(a) Frobenius cosine similarity between NTK of \textsc{Llama3-8B-Instruct} and \textsc{Qwen3-8B} during LoRA-based instruction tuning towards financial sentiment analysis task. (b) \revise{Gradient similarity distribution of cross output across 100 samples randomly sampled from ContractNLI task.}}
\end{figure*}

\subsection{Neural Tangent Kernel}
\label{subsec:ntk}

The Neural Tangent Kernel (NTK)~\citep{jacot2018ntk} is a theoretical framework that proves that the evolution of an infinitely wide neural network under specific parameterization can be described by a linear kernel regression. This provides a powerful tool for analyzing the dynamics and generalization of deep neural networks. Formally, for a model $f(\cdot; \theta_t)$ with parameters $\theta_t$ in the training step $t$ and two input examples $\mathbf{x}, \mathbf{x}' \in \mathbb{R}^d$, the NTK $\Theta(\mathbf{x}, \mathbf{x}'; \theta_t)$ is defined as:
\begin{equation}
\Theta(\mathbf{x}, \mathbf{x}'; \theta_t) = \left\langle \nabla_{\theta_t} f(\mathbf{x}; \theta_t),\ \nabla_{\theta_t} f(\mathbf{x}'; \theta_t) \right\rangle,
\label{eq:ntk}
\end{equation}
where $\nabla$ denotes the gradient and $\left\langle \cdot \right\rangle$ represents inner product. A high NTK value indicates that $\mathbf{x}$ and $\mathbf{x}'$ will guide the model in similar directions during training, leading to correlated generalization behavior. This makes NTK an ideal metric for selecting the most relevant cross-domain auxiliary data.
Under the infinite-width and Gaussian initialization regime, the NTK remains constant throughout training, so the kernel computed at initialization governs each sample’s influence over the entire optimization trajectory. This invariance allows sample influence to be estimated once, without recomputation or enumeration of sample combinations, making NTK a scalable and principled criterion for cross-domain data selection.

\section{NTK Applications for LLMs}
\label{sec:ntk_for_llms}

While the NTK serves as a powerful tool for data selection, its direct application to LLMs faces two key challenges: (i) LLMs don't meet the theoretical assumptions of NTK, and (ii) the exact NTK computation is infeasible. In this section, we address these challenges by empirically demonstrating an NTK-like behavior in LLMs (\S\ref{subsec:ntk_like_behavior}) and proposing a scalable approximation method (\S\ref{subsec:jacobian_free_approximation}).

\subsection{NTK-like Behavior in LLMs}
\label{subsec:ntk_like_behavior}



While classic NTK theory assumes a Gaussian initialized infinite-width network with a constant kernel matrix throughout training, real-world pre-trained LLMs operating in the fine-tuning regime do not satisfy these assumptions. Our empirical observations reveal that the NTK of a fine-tuning LLM is not strictly constant; however, its directional structure remains remarkably stable. This phenomenon is characterized by the time-evolved kernel being nearly collinear with the initial kernel, as quantified by a high Frobenius cosine similarity (e.g., greater than 0.99), which is visualized in Figure~\ref{fig:ntk_cosine_similarity}. We formalize this observation as NTK-like behavior.
\begin{definition}[NTK-like]
\label{def:ntk_like}
    Given a model $f(\cdot; \theta_t)$  with NTK $\Theta(\cdot, \cdot; \theta_t)$, the model is said to exhibit NTK-like behavior on the interval $[0, T]$ if there exists a small constant $\epsilon \in (0, 1)$ such that for all $t \in [0, T]$,  the Frobenius cosine similarity between the time-evolved NTK and the initial NTK satisfies:
    \begin{equation}
    \cos_F(\Theta(\cdot, \cdot; \theta_t), \Theta(\cdot, \cdot; \theta_0)) \ge 1 - \epsilon.
    \end{equation}
\end{definition}

This NTK-like property suggests that during fine-tuning, the model primarily amplifies feature directions established at initialization rather than learning entirely new, orthogonal features.
We formally show that under this condition, training is equivalent to optimization with a fixed kernel $\Theta(\cdot, \cdot; \theta_0)$ plus a controllable perturbation term. Theoretical results are presented in the following theorem with proofs in Appendix~\ref{apx:proof_ntk_like}.
Empirical results on more model architectures, tasks, and finetuning strategies are detailed in Appendix~\ref{apx:ntk_like_results}.
\begin{theorem}
    \label{thm:ntk_like_reparameter}
    Given a model $f(\cdot; \theta_t)$ with NTK $\Theta (\cdot, \cdot; \theta_t)$ whose training dynamics are governed by a gradient flow $\dot{f}(\cdot;\theta_t) = -\eta \Theta(\cdot, \cdot; \theta_t) \gamma(t)$, where $\gamma (t)=\nabla_f \mathcal{L}(f(\cdot; \theta_t))$ and $\eta$ is the learning rate, if it exhibits NTK-like behavior on $[0, T]$, its training dynamics can be re-parameterized onto a new time axis $u$:
    \begin{equation}
        \dot{f}(\cdot;\theta_{t(u)})=-\eta \Theta (\cdot, \cdot; \theta_0) \gamma(t(u)) + \Delta (u),
    \end{equation}
    where $u(t) \coloneqq \int_0^t a^*(\tau) d\tau$, $a^*(t) = \frac{\langle \Theta (\cdot, \cdot; \theta_t), \Theta (\cdot, \cdot; \theta_0) \rangle}{||\Theta (\cdot, \cdot; \theta_0)||^2}$, and the perturbation term $\Delta (u)$ is bounded with $||\Delta (u)|| \leq \eta ||\Theta (\cdot, \cdot; \theta_0)|| \frac{\sqrt{2\epsilon}}{1-\epsilon}||\gamma(t(u))||$.
\end{theorem}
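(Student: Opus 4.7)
The plan is to orthogonally decompose the time-evolved NTK with respect to its initial value in the Frobenius inner product, then absorb the parallel component into a stretched time variable $u$, so that only the orthogonal residual survives as the perturbation $\Delta$. Concretely, I would write $\Theta(\cdot,\cdot;\theta_t) = a^*(t)\,\Theta(\cdot,\cdot;\theta_0) + R(t)$; the definition of $a^*(t)$ as the Frobenius projection coefficient immediately gives $\langle R(t),\Theta(\cdot,\cdot;\theta_0)\rangle_F = 0$. Substituting this decomposition into the assumed gradient flow yields
\begin{equation}
\dot f(\cdot;\theta_t) \;=\; -\eta\,a^*(t)\,\Theta(\cdot,\cdot;\theta_0)\,\gamma(t) \;-\; \eta\,R(t)\,\gamma(t).
\end{equation}

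The next step is to verify that $t \mapsto u(t)$ is a valid change of variables. From $\langle \Theta(\cdot,\cdot;\theta_t),\Theta(\cdot,\cdot;\theta_0)\rangle_F = \cos_F(\Theta(\cdot,\cdot;\theta_t),\Theta(\cdot,\cdot;\theta_0))\,\|\Theta(\cdot,\cdot;\theta_t)\|_F\,\|\Theta(\cdot,\cdot;\theta_0)\|_F$, one reads off $a^*(t) = \cos_F(\Theta(\cdot,\cdot;\theta_t),\Theta(\cdot,\cdot;\theta_0))\,\|\Theta(\cdot,\cdot;\theta_t)\|_F/\|\Theta(\cdot,\cdot;\theta_0)\|_F$, which is strictly positive on $[0,T]$ since $\cos_F \geq 1-\epsilon > 0$ (under the mild assumption that the kernel does not vanish). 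Hence $u(t)$ is strictly increasing and invertible. Applying the chain rule, $\tfrac{d}{du}f = \tfrac{1}{a^*(t)}\tfrac{d}{dt}f$, and dividing the displayed ODE through by $a^*(t)$ brings it into the advertised form with
\begin{equation}
\Delta(u) \;=\; -\frac{\eta}{a^*(t(u))}\,R(t(u))\,\gamma(t(u)).
\end{equation}

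The bound on $\|\Delta(u)\|$ is then a short computation. Treating $R(t)$ as a linear operator acting on $\gamma(t)$, the matrix--vector inequality gives $\|\Delta(u)\| \leq \eta\,\|R(t)\|_F\,\|\gamma(t)\|/a^*(t)$. Setting $c := \cos_F(\Theta(\cdot,\cdot;\theta_t),\Theta(\cdot,\cdot;\theta_0)) \in [1-\epsilon,1]$, the Pythagorean identity in the orthogonal decomposition gives $\|R(t)\|_F^2 = \|\Theta(\cdot,\cdot;\theta_t)\|_F^2\,(1-c^2)$, so
\begin{equation}
\frac{\|R(t)\|_F}{a^*(t)} \;=\; \|\Theta(\cdot,\cdot;\theta_0)\|_F\,\frac{\sqrt{1-c^2}}{c} \;\leq\; \|\Theta(\cdot,\cdot;\theta_0)\|_F\,\frac{\sqrt{2\epsilon}}{1-\epsilon},
\end{equation}
using the elementary inequalities $1-c^2 \leq 1-(1-\epsilon)^2 \leq 2\epsilon$ and $c \geq 1-\epsilon$. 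Combining with the matrix--vector bound produces the claimed estimate on $\|\Delta(u)\|$.

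The main obstacle is essentially bookkeeping with norms: the NTK-like hypothesis is expressed in the Frobenius norm on kernels, whereas the bound on $\Delta(u)$ is in the output-function norm. Reducing the action of $R(t)$ on $\gamma(t)$ to a Frobenius estimate requires viewing the NTK as a finite-dimensional operator on training outputs, which is the natural setting for the data-selection application. Once this is granted, the proof is just the orthogonal-projection identity together with elementary trigonometric inequalities, with no further nontrivial ingredients.
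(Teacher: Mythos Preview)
Your proposal is correct and follows essentially the same route as the paper: orthogonally decompose $\Theta(\cdot,\cdot;\theta_t)=a^*(t)\Theta(\cdot,\cdot;\theta_0)+R(t)$ via the Frobenius projection, reparameterize time by $u(t)=\int_0^t a^*(\tau)\,d\tau$, and bound $\|R(t)\|_F/a^*(t)$ by $\|\Theta_0\|_F\sqrt{2\epsilon}/(1-\epsilon)$ using the Pythagorean identity and the operator-norm $\leq$ Frobenius-norm inequality. Your additional remark that $a^*(t)>0$ (so the time change is invertible) is a point the paper leaves implicit, and your closing paragraph correctly identifies the only genuine subtlety in the norm bookkeeping.
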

This theorem justifies that, despite the kernel's Frobenius norm changing, the training dynamics can still be accurately approximated using the initial kernel. It allows us to use the NTK calculated at the initial state of an LLM as a reliable proxy for data utility throughout the fine-tuning process, which is the core principle behind our data selection method detailed in \S\ref{sec:ntk_selector}.


\subsection{Jacobian-free Approximation.}
\label{subsec:jacobian_free_approximation}

Computing the exact NTK as defined in Equation~(\ref{eq:ntk}) for LLMs is computationally prohibitive, as it requires constructing and storing the full Jacobian matrix. This is infeasible given the immense dimensionality of both model parameters and outputs. To overcome this challenge, we introduce a scalable, Jacobian-free NTK approximation.
\begin{definition}[Jacobian-free NTK Approximation]
\label{def:ntk_approx}
    Given a model $f(\cdot; \theta_t): \mathbb{R}^d \rightarrow \mathbb{R}^D$, the Jacobian-free NTK approximation between two inputs $\mathbf{x}$ and $\mathbf{x}'$ is defined as:
    \begin{equation}
    \widetilde{\Theta}(\mathbf{x}, \mathbf{x}'; \theta_t) = \left\langle \nabla_{\theta_t} \sum_{k=1}^D f_k(\mathbf{x}; \theta_t) , \nabla_{\theta_t} \sum_{k=1}^D f_k(\mathbf{x}', \theta_t) \right\rangle,
    \label{eq:ntk_approx}
    \end{equation}

\end{definition}
This approximation can be expanded to reveal its relationship with the exact NTK: $\widetilde{\Theta}(\mathbf{x}, \mathbf{x}'; \theta_t) = \sum_{k=1}^D \left\langle \nabla_{\theta_t} f_k(\mathbf{x}; \theta_t) , \nabla_{\theta_t} f_k(\mathbf{x}'; \theta_t) \right\rangle + \sum_{k=1}^D \sum_{m=1, m\neq k}^D \langle \nabla_{\theta_t} f_k(\mathbf{x}; \theta_t) , \nabla_{\theta_t} f_m(\mathbf{x}', \theta_t) \rangle$,
whose concrete derivation is provided in Appendix~\ref{apx:jacobian_free_approximation}. The first term precisely recovers the exact NTK, while the second term accounts for cross-output interactions. In practice, we empirically observe that the gradient directions for different output dimensions tend to be nearly orthogonal, making the cross terms very small. \revise{As shown in Figure~\ref{fig:ntk_solution}, we visualize the probability distribution of gradient similarity between cross outputs, demonstrating that the cross outputs exhibit rather low similarity ($<0.02$). Results on more tasks and the derived kernel similarity are demonstrated in Appendix~\ref{apx:jacobian_free_results}. We also provide a theoretical analysis in Appendix~\ref{apx:theoretical_proof_jacobian_free_ntk} that explains this empirical behavior and goes beyond prior NTK analyses limited to idealized conditions~\citep{Mohamadi2022AFW}.} This confirms that the relative magnitudes of NTK values are well-preserved, which provides a reliable proxy for the exact kernel and justifies our use of $\widetilde{\Theta}$ for efficient data selection.

\section{Method}
\label{sec:ntk_selector}

\subsection{Total Framework}
Based on the above insights, we propose NTK-Selector, a novel two-stage data selection framework. Given a domain dataset $\mathcal{D}$, candidate dataset $\mathcal{C}$, pre-selection size $M$, number of nearest neighbors $K$, LLM $f(\cdot; \theta)$, and random projection dimension matrix $\Pi$, our goal is to efficiently select a high-value subset $\mathcal{S} \subset \mathcal{C}$. The process is outlined in Algorithm \ref{alg:ntk_selection} and contains two main phases: (i) \textbf{Coarse-grained pre-selection} (\S\ref{subsec:coarse_grained_selection}): an efficient embedding-based filtering step that constructs a reduced candidate set $\mathcal{S}_{\text{pre}} \subset \mathcal{C}$; (ii) \textbf{Fine-grained NTK selection} (\S\ref{subsec:fine_grained_ntk_selection}): a computationally refined stage to select $\mathcal{S}$ involving gradient computation, random projection, and Jacobian-free NTK approximation to estimate sample utility over the pre-selected set $\mathcal{S}_{\text{pre}}$.

\subsection{Coarse-grained Pre-selection}
\label{subsec:coarse_grained_selection}

The computational burden of evaluating the NTK over a very large candidate set \(\mathcal{C}\) (where \(|\mathcal{C}| \gg 10^6\)) necessitates an efficient pre-selection mechanism. We thus propose a two-stage procedure, beginning with a coarse-grained filtering step based on embedding similarity, to reduce the candidate set to a tractable size prior to fine-grained NTK assessment.
Let \(\phi: \mathcal{X} \to \mathbb{R}^m\) denote a feature mapping obtained by averaging the hidden states of the final transformer layer. To better capture domain-specific contextual nuances, we perform a warm-up LoRA training phase on the available domain data $\mathcal{D}$ before computing the embeddings. For a domain example \(\mathbf{x}_i \in \mathcal{D}\) and candidate \(\mathbf{x}_j \in \mathcal{C}\), let \(\mathbf{d}_i = \phi(\mathbf{x}_i)\) and \(\mathbf{c}_j = \phi(\mathbf{x}_j)\) be their respective embeddings. We define the Euclidean distance between embeddings as \(d(\mathbf{d}_i, \mathbf{c}_j) = \|\mathbf{d}_i - \mathbf{c}_j\|_2\).
For each \(\mathbf{d}_i\), we identify the set \(N_K(\mathbf{d}_i)\) of indices corresponding to the \(K\) nearest neighbors in \(\{\mathbf{c}_j\}_{j=1}^{|\mathcal{C}|}\) under this metric. The relevance of a candidate \(\mathbf{x}_j\) to the entire domain set \(\mathcal{D}\) is quantified by the number of domain points for which it ranks among the top-\(K\) neighbors: $r(\mathbf{x}_j) = \sum_{i=1}^{|\mathcal{D}|} \mathbf{1} \left( j \in N_K(\mathbf{d}_i) \right)
$, where \(\mathbf{1}(\cdot)\) is the indicator function. The pre-selected set \(\mathcal{S}_{\text{pre}} \subset \mathcal{C}\) of size \(M\) is then constructed by taking the candidates with the highest relevance scores:
$\mathcal{S}_{\rm pre} = \underset{\mathbf{x} \in \mathcal{C}}{\text{Top}_M} (r(\mathbf{x}))$.
This embedding-based pre-selection serves as a scalable proxy for semantic relevance, significantly reducing the number of samples while preserving high-utility candidates.

{
\setlength{\textfloatsep}{15pt}   
\begin{algorithm}[t]
    \small
    \caption{NTK-Selector}
    \label{alg:ntk_selection}
    \begin{algorithmic}[1]
        \REQUIRE $\mathcal{D}$, $\mathcal{C}$, $M$, $K$ $f(\cdot; \theta)$, $\Pi$.
        \ENSURE $\mathcal{S}$.
        \STATE $\mathcal{S}_{\text{pre}}\leftarrow$\textsc{Pre-select}($\mathcal{D}, \mathcal{C}, M, K, f(\cdot; \theta)$); 
        \FOR{$\mathbf{x} \in \mathcal{D} \cup \mathcal{S}_{\text{pre}}$}
        \STATE Compute LoRA gradient: $\widehat{\nabla}_\theta f(\mathbf{x}; \theta) = \nabla_{\theta_{\text{LoRA}}} \left( \sum_{k=1}^D f_k(\mathbf{x}; \theta) \right)$;
        \STATE Project gradient: $\widetilde{\nabla}_\theta f(\mathbf{x}; \theta) = \Pi^\top \widehat{\nabla}_\theta f(\mathbf{x}; \theta)$;
        \ENDFOR
        \FOR{$i = 1$, \dots $|\mathcal{D}|$} 
            \FOR{$j = 1$, \dots $M$}
                \STATE $\widetilde{\Theta}(\mathbf{x}_i, \mathbf{x}_j; \theta) \gets \left\langle \widetilde{\nabla}_\theta f(\mathbf{x}_i; \theta), \widetilde{\nabla}_\theta f(\mathbf{x}_j; \theta) \right\rangle$ where $\mathbf{x}_i \in \mathcal{D}, \mathbf{x}_j \in \mathcal{S}_{\text{pre}}$;
            \ENDFOR
        \ENDFOR
        \STATE Compute NTK utility score: $s_j \gets \frac{1}{|\mathcal{D}|}\sum_{\mathbf{x}_i \in \mathcal{D}} \widetilde{\Theta}(\mathbf{x}_i, \mathbf{x}_j; \theta)$ for $j = 1, \dots, M$;
        \STATE Select Top-$N$ samples: $\mathcal{S} \gets \left\{ \text{Top-$N$ samples from $\mathcal{S}_{\text{pre}}$ ranked by their score $s_j$} \right\}$.
    \end{algorithmic}
\end{algorithm}
}

\subsection{Fine-grained NTK Selection}
\label{subsec:fine_grained_ntk_selection}

Given the pre-selected set $\mathcal{S}_{\text{pre}}$, we now proceed with the fine-grained selection using our NTK-based utility score.
We define the NTK utility score $s_j$ for a candidate point $\mathbf{x}_j \in \mathcal{S}_{\text{pre}}$ as its average NTK similarity to all points in the domain dataset $\mathcal{D}$: $s_j = \frac{1}{|\mathcal{D}|} \sum_{\mathbf{x}_i \in \mathcal{D}} \widetilde{\Theta}(\mathbf{x}_i, \mathbf{x}_j; \theta)$, where $\widetilde{\Theta}$ is our Jacobian-free NTK approximation as described in \S\ref{subsec:jacobian_free_approximation}. As justified in \S\ref{subsec:ntk} and \S\ref{subsec:ntk_like_behavior}, selecting data points that maximize this utility score serves as a practical surrogate for the intractable bi-level objective function in Equation~(\ref{eq:optimization_problem}). This greedy maximization strategy, which selects $N$ candidates with the highest scores, avoids the need for expensive, iterative sample enumeration, training, and evaluation for every candidate subset, providing a scalable and effective solution for selecting relevant auxiliary data.

\paragraph{Parameter Efficient NTK via LoRA.}
\label{prg:lora}
To alleviate the memory pressure of gradient computations, we leverage a low-rank adaptation (LoRA)~\citep{hu2022lora} to restrict gradient estimation to a low-rank parameter subspace.
By focusing on the gradients of the LoRA modules, which are denoted as $\hat{\nabla}_\theta f(\cdot; \theta) := \nabla_{\theta_{\rm LoRA}}f(\cdot; \theta) \in \mathbb{R}^P$ with $P$ LoRA parameters, we can work within a much lower-dimensional parameter space compared to the full model. For instance, in \textsc{Llama3-8B-Instruct}, the gradient dimensionality of the LoRA modules is only about 0.5\% of the full model's gradient space. This significant reduction enables efficient and scalable NTK approximation without compromising empirical performance.

    
    
    

\paragraph{Random Projection for Scalable NTK Estimation.}
\label{prg:ranodm_projection}
To scale our NTK approximation to large candidate sets, we apply random projection to the LoRA gradients, further reducing memory and computational requirements~\citep{park2023trak}. This approach is motivated by the Johnson-Lindenstrauss Lemma~\citep{johnson1984extensions}, which guarantees that inner products are approximately preserved under low-dimensional random projections.
Formally, for each input $\mathbf{x}$ we project its LoRA gradient $\widehat{\nabla}_\theta f(\cdot; \theta) \in \mathbb{R}^P $ into a $p$-dimensional space ($p\ll P$) via $\widetilde{\nabla}_\theta f(\cdot; \theta) = \Pi^\top \widehat{\nabla}_\theta f(\cdot; \theta)$, where $\Pi \in \mathbb{R}^{P\times p}$ is a random projection matrix with entries drawn independently from a Rademacher distribution ($\Pi_{i,j}= \pm 1$ with equal probability ). The same $\Pi$ is applied consistently across all samples using a fixed random seed, ensuring that the approximated inner products used in Equation~(\ref{eq:ntk_approx}) remain consistent and valid for calculation of the utility score. \revise{We evaluate the impact of random projection in Appendix~\ref{apx:random_proj_error}.}


\WFclear

\section{Experiments}
\label{sec:experiments}


\subsection{Setup}
\label{subsec:setup}
\paragraph{Datasets.}
We evaluate on four low-resource domains: medical (\textbf{MedMCQA}~\citep{pal2022medmcqa}, \textbf{MMLU-Med}~\citep{hendrycks2021mmlu}), financial (\textbf{FPB}~\citep{malo2014fpb}, \textbf{TFNS}~\citep{magic2022tfns}, \textbf{Headline}~\citep{sinha2021headline}), legal (\textbf{ContractNLI}~\citep{koreeda2021contractnli}, \textbf{PrivacyQA}~\citep{ravichander2019privacyqa}), and psychological (\textbf{RSDD}~\citep{yates2017depression}) domains. Additional dataset details are provided in Appendix~\ref{apx:details_tasks}. For candidate data, we use the \textbf{Cot Collection}~\citep{kim2023cot} (1.8M instruction-response pairs) for auxiliary data selection. Domain-specific training sets are used as instruction datasets when applicable, except for the medical domain, for which we use \textbf{UltraMedical}~\citep{zhang2024ultramedical} for instruction tuning. For training instances lacking chain-of-thought responses, we generate such responses using GPT-4o-mini~\citep{hurst2024gpt4o} (see reasons in Appendix~\ref{apx:why_augment_with_cot}). Further examples of query templates and generated responses are included in Appendix~\ref{apx:prompt}.

\paragraph{Baselines.}
\label{prg:baselines}
We compare NTK-Selector against the following baselines to evaluate its effectiveness. The simplest is \textbf{Base}, which assesses the out-of-the-box performance of the pre-trained backbone model without any fine-tuning. Another baseline, \textbf{Domain-Only}, involves fine-tuning the model exclusively on a limited amount of domain-specific data without augmentation.
For selecting auxiliary data, we explore several strategies. The most straightforward is \textbf{Random} selection, where general-purpose data is randomly sampled and mixed with the domain-specific dataset. This baseline helps gauge the benefit of simply adding more data. \revise{Besides, we implement selection based on \textbf{Embedding} and \textbf{Gradient} similarity, which is calculated by the averaged last hidden state and loss gradients, respectively.} We also compare with \textbf{DSIR}~\citep{xie2023dsir}, which uses $n$-gram features to weight candidate data and samples based on these weights. \textbf{LESS}~\citep{xia2024less} is another baseline that selects data by approximating the first-order influence of each candidate sample on the validation set. Lastly, we use \textbf{TSDS}~\citep{liu2024tsds}, which captures the discrepancy between the candidate and target data distributions by aligning them using optimal transport. Please refer to Appendix~\ref{apx:baselines} for more implementation details.

\begin{table*}[t]
    \centering
    \caption{Performance of \textsc{LLama3-8B-Instruct} and \textsc{Qwen3-8B} across 8 tasks on medical (MedMCQA and MMLU-Med), financial (FPB, TFNS, and Headline), legal (ContractNLI and PrivacyQA), and psychological (RSDD) domains. The best individual task results for each model are highlighted in \textbf{bold}, and relative performance changes compared to the Base model are denoted by {\color{mygreen}{$\uparrow$}} (increase), {\color{myred}{$\downarrow$}} (decrease), and {\color{gray}{$\rightarrow$}} (no change).}
    \resizebox{\linewidth}{!}{
    
    \begin{tabular}{lll|lll|ll|l|c}
        \toprule
        & \multicolumn{2}{c}{\textbf{Medical}} & \multicolumn{3}{c}{\textbf{Financial}} & \multicolumn{2}{c}{\textbf{Legal}} & \textbf{Psycho.} \\
         & MedMCQA & MMLU-Med & FPB & TFNS & Headline & ContractNLI & PrivacyQA & RSDD & Avg.\\
         \midrule
         \midrule
         \multicolumn{9}{c}{\textsc{Llama3-8B-Instruct}} \\
         \midrule
         Base & 56.7 & 72.3 & 81.7 & 57.2 & 78.5 & 59.9 & 57.9 & 78.8 & 67.9 \\
         Domain-Only & 56.5\down{0.2} & 71.8\down{0.5} & 80.1\down{1.6} & 69.3\up{12.1} & 82.4\up{3.9} & 41.1\down{18.8} & 63.6\up{5.7} & 85.1\up{6.3} & 68.7 \\
         \midrule
         Random & 57.1\up{0.4} & 73.0\up{0.7} & 80.6\down{1.1} & 70.7\up{13.5} & 85.2\up{6.7} & 66.5\up{6.6} & 66.2\up{8.3} & 95.4\up{16.6} & 74.3 \\
         \revise{Embedding} & \revise{57.4\up{0.7}} & \revise{72.7\up{0.3}} & \revise{86.0\up{4.3}} & \revise{71.4\up{14.2}} & \revise{83.7\up{5.2}} & \revise{67.7}\up{7.8} & \revise{66.2\up{8.3}} & \revise{94.4}\up{15.6} & \revise{74.9} \\
         \revise{Gradient} & \revise{57.4\up{0.7}} & \revise{74.0\up{1.6}} & \revise{86.0\up{4.3}} & \revise{68.9\up{11.7}} & \revise{85.4\up{6.9}} & \revise{61.6}\up{1.7} & \revise{62.9}\up{5.0} & \revise{94.6}\up{15.8} & \revise{73.9} \\
         DSIR & 57.5\up{0.8} & 72.7\up{0.4} & 84.6\up{2.9} & 71.2\up{14.0} & \textbf{86.2}\up{\textbf{7.7}} & 64.5\up{4.6} & 66.5\up{8.6} & 94.7\up{15.9} & 74.7 \\
         LESS & 57.5\up{0.8} & 73.1\up{0.8} & 82.4\up{0.7} & 70.7\up{13.5} & 85.9\up{7.4} & 67.8\up{7.9} & 66.6\up{8.7} & 94.7\up{15.9} & 74.8 \\
         TSDS & 57.0\up{0.3} & 71.4\down{0.9} & 83.3\up{1.6} & 71.2\up{14.0} & 85.0\up{{6.5}} & 68.9\up{9.0} & 66.3\up{8.4} & 94.9\up{16.1} & 74.8 \\
         \textbf{NTK-Selector} & \textbf{59.1}\upbf{2.4} & \textbf{73.8}\upbf{1.5} & \textbf{85.5}\upbf{3.8} & \textbf{73.3}\upbf{16.1} & \textbf{86.2}\upbf{7.7} & \textbf{70.6}\upbf{10.7} & \textbf{67.8}\upbf{9.9} & \textbf{96.5}\upbf{17.7} & \textbf{76.6} \\
         \midrule
         \midrule
         \multicolumn{9}{c}{\textsc{Qwen3-8B}} \\
         \midrule
         Base & 59.3 & 83.4 & 80.1 & 70.2 & 74.0 & 73.8 & 52.3 & 94.6 & 73.5 \\
         Domain-Only & 61.1\up{1.8} & 81.9\down{1.5} & 72.2\down{7.9} & 72.0\up{1.8} & 75.3\up{1.3} & 76.7\up{2.9} & 66.5\up{14.2} & 89.4\down{5.2} & 74.4 \\
         \midrule
         Random & 60.8\up{1.5} & 82.6\down{0.8} & 65.2\down{14.9} & 71.9\up{1.7} & 80.9\up{6.9} & 77.3\up{3.5} & 53.9\up{1.6} & 90.8\down{3.8} & 72.9\\
         \revise{Embedding} &\revise{60.9\up{1.6}} & \revise{81.5\down{1.9}} & \revise{85.2}\up{5.1} & \revise{66.7\down{3.5}} & \revise{81.8\up{7.8}} & \revise{79.3}\up{5.5} & \revise{53.3\up{1.0}} & \revise{93.9\down{0.7}} & \revise{75.3} \\
         \revise{Gradient} & \revise{60.7\up{1.4}} & \revise{82.2\down{1.2}} & \revise{82.5\up{2.4}} & \revise{67.9\down{2.3}} & \revise{82.3\up{8.3}} & \revise{75.2\up{1.4}} & \revise{66.1\up{13.8}} & \revise{91.6\down{3.0}} & \revise{76.1} \\
         DSIR & 61.1\up{1.8} & 82.7\down{0.7} & \textbf{85.9}\upbf{5.8} & 69.2\down{1.0} & 82.3\up{8.3} & 78.8\up{5.0} & 67.1\up{14.8} & 94.6\tie{0.0} & 77.7 \\
         LESS & 60.3\up{1.0} & 82.6\down{0.8} & 60.3\down{19.8} & 69.8\down{0.4} & 81.8\up{7.8} & 78.6\up{4.8} & 62.3\up{10.0} & 93.8\down{0.8} & 73.7 \\
         TSDS & 60.6\up{1.3} & 82.3\down{1.1} & 84.7\up{4.6} & 68.1\down{2.1} & 81.9\up{7.9} & 79.8\up{6.0} & 50.1\down{2.2} & 92.5\down{2.1} & 75.0 \\
         \textbf{NTK-Selector} & \textbf{61.4}\upbf{2.1} & \textbf{83.8}\upbf{0.4} & 85.3\up{5.2} & \textbf{72.2}\upbf{2.0} & \textbf{83.4}\upbf{9.4} & \textbf{79.9}\upbf{6.1} & \textbf{67.5}\upbf{15.2} & \textbf{95.1}\upbf{0.5} & \textbf{78.6} \\
        \bottomrule
    \end{tabular}
    }
    \label{tab:overall_results}
\end{table*}

\paragraph{Models \& Settings.}
We employ \textbf{Llama3-8B-Instruct}~\citep{dubey2024llama3} and \textbf{\textsc{Qwen3-8B}}~\citep{yang2025qwen3} as base models. All fine-tuning is performed using LoRA. We randomly sample 1K instances from each of the domain datasets for augmentation, and select 9K instances from the candidate pool for mixed training. Models are mixed training for 3 epochs, and the gradients are randomly projected to 8192-dimensional features. The pre-selection stage is implemented using Faiss~\citep{douze2024faiss} library
with $M=4N$, and $K = M/4$ for coarse-grained pre-selection. To eliminate the effect of the sequence length of each data point, we normalize the gradient in Equation~(\ref{eq:ntk_approx}) with the sequence length. For numerical stability, gradient sums are scaled by a factor of $10^{-5}$. \revise{For the main comparisons, NTK-Selector and all baselines train from scratch on their respective selected datasets, which means they share the same initialization, training procedure, and hyperparameter settings, and differ only in the training data.} Additional training hyperparameters are detailed in Appendix~\ref{apx:details_training} and \ref{apx:details_evaluation}.

\subsection{Main Results}
\label{subsec:main_results}

Table~\ref{tab:overall_results} presents the main results of our approach and various baselines across different models and target domains. \revise{All results are averaged over 3 runs.} Below, we summarize our key findings.

\paragraph{Limited domain data can degrade performance.}
Fine-tuning with only 1K domain-specific data points yielded surprising results. Instead of consistent improvement, we observed that this limited data volume often led to a negligible performance gain or, in some cases, a degradation in task capability (e.g., \textsc{Llama3-8B-Instruct} drops 18.8 points on ContractNLI; \textsc{Qwen3-8B} drops 7.9 points on FPB). We attribute this phenomenon to overfitting~\citep{bethune2025scaling}, where the fine-tuning process overwrites the extensive general knowledge acquired during pre-training. Furthermore, the small dataset size fails to provide a stable optimization landscape, leading to training instability, poor convergence, and a tendency for the model to overfit to noise within the data.
Together, these effects underscore the need for auxiliary supervision beyond domain-only training.

\paragraph{Auxiliary data can usually, but not necessarily, boost the model performance.}
We selected 9K auxiliary samples using various methods and combined them with the 1K target data points for mixed training. For \textsc{Llama3-8B-Instruct}, even random selection improves performance across most domains. In contrast, for \textsc{Qwen3-8B}, random selection often degrades performance, for instance, reducing gains on PrivacyQA from 14.2 to just 1.6 points. This negative effect is even further exacerbated by some advanced selection methods (e.g., LESS induces a 49.8-point drop on FPB). We hypothesize that this difference stems from \textsc{Qwen3-8B}'s superior initial performance, which makes it more sensitive to the quality of auxiliary data, leading low-quality samples to more easily disrupt the training process of a powerful model.

\paragraph{NTK-Selector is the only consistently effective approach.}
While conventional methods like DSIR ($n$-gram features), LESS (influence functions), and TSDS (representation alignment) can be effective for coreset selection, they often fail to provide stable improvements when selecting auxiliary data from a general, out-of-domain corpus. In contrast, NTK-Selector consistently enhances model performance across all domains and architectures. Our approach selects auxiliary data whose optimization trajectories align with those of the target domain, thereby stabilizing training and promoting more robust convergence.
As shown in Table~\ref{tab:overall_results}, NTK-Selector achieves the highest average performance on both \textsc{Llama3-8B-Instruct} (76.6, from a base of 67.9) and \textsc{Qwen3-8B} (78.6, from 73.5). Compared to the domain-only baseline, which improves by only 0.8 and 0.9 points respectively, our method delivers gains of 8.7 and 5.3 points, corresponding to a \textbf{10.9x and 5.7x improvement}. Notably, NTK-Selector is the only method that improves performance on every task, including MMLU-Med and RSDD for \textsc{Qwen3-8B}, where other techniques often degrade results. This demonstrates that NTK-Selector reliably identifies high-value auxiliary examples that complement the target domain, leading to superior and generalizable gains.

\section{Analysis}
\label{sec:analysis}

This section presents an analysis of the NTK-Selector framework through multiple ablations. We first examine how the quantity of auxiliary data affects performance, and how the size of the target dataset influences the utility of added auxiliary examples. We then ablate key hyperparameters to provide practical recommendations, such as pre-selection size and projection dimension. Finally, we compare the performance of our method against empirical NTK regression to further validate the NTK-like behavior of LLMs. Unless specified, all experiments use \textsc{Llama3-8B-Instruct} with 1K domain examples and 9K auxiliary samples. \revise{More analyses are listed in Appendix~\ref{apx:step_acc} to \ref{apx:results_other_low_resource_tasks}.}


\begin{figure}[t]
    \centering
    \includegraphics[width=\linewidth]{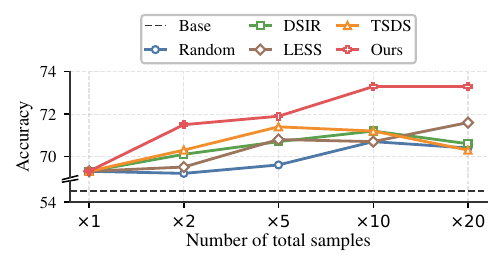}
    \caption{Accuracy on TFNS task with 1K domain samples and various numbers of auxiliary samples, where the total number is enriched by $\times 2$, $\times 5$, $\times 10$, and $\times 20$.}
    \label{fig:ablation_1_1}
\end{figure}

\paragraph{Scaling behavior with auxiliary data}
We investigate how the volume of auxiliary data affects performance by holding the domain set fixed at 1K samples ($\times 1$) and scaling the total data size to $\times 2$, $\times 5$, $\times 10$, and $\times 20$ via added auxiliary examples. As shown in Figure~\ref{fig:ablation_1_1}, NTK-Selector consistently outperforms all baselines at every scale.
Notably, the performance gains exhibit clear diminishing returns: while increasing data from $\times 1$ to $\times 5$ brings substantial gains (69.3$\rightarrow$71.9), further expanding to $\times 10$ and $\times 20$ yields minimal improvement (73.3). We hypothesize that the supply of highly relevant cross-domain samples is inherently limited. Beyond a certain point (e.g.,  $\times 10$), incorporating less relevant data contributes little additional signal and may even dilute fine-tuning efficacy, suggesting the existence of a practical saturation threshold for auxiliary data utility. This observation emphasizes the importance of sample quality over sheer quantity in data selection.

\WFclear

\begin{figure}[t]
    \centering
    \includegraphics[width=\linewidth]{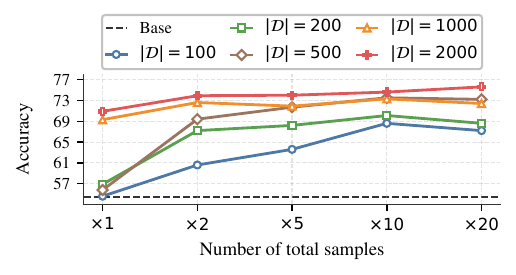}
    \caption{Accuracy on TFNS task with 100, 200, 500, 1K, 2K domain samples enriched by $\times 2$, $\times 5$, $\times 10$, and $\times 20$.}
    \label{fig:ablation_1_2}
\end{figure}

\begin{figure}[t]
    \centering
    \includegraphics[width=\linewidth]{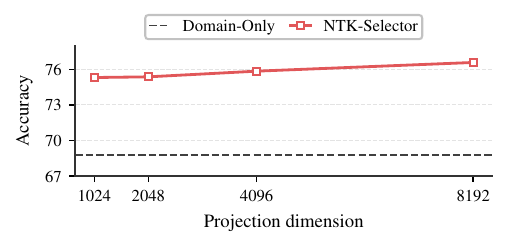}
    \captionof{figure}{Average performance of NTK-Selector using different projected dimensions.
    }
    \label{fig:project_dimension}
\end{figure}

\begin{table*}[t]
    \centering
    
    \begin{minipage}[t]{0.58\textwidth}
        \centering
        \small
        \caption{Accuracy with size $M$ set as $2N$, $4N$ (default), and $16N$ in the coarse-grained pre-selection stage.}
        \resizebox{\linewidth}{!}{
        \begin{tabular}{lllll}
            \toprule
            & \textbf{MMLU-Med} & \textbf{FPB} & \textbf{ContractNLI} & \textbf{RSDD} \\
            \midrule
            Base & 72.3 & 81.7 & 59.9 & 78.8 \\
            Domain-Only & 71.8\down{0.5} & 80.1\down{1.5} & 41.1\down{18.8} & 85.1\up{6.3} \\
            \midrule
            $M=2N$ & 73.3\up{1.0} & 84.5\up{2.8} & 68.9\up{9.0} & 95.5\up{16.7} \\
            $M=4N$ & 73.7\up{1.4} & 85.5\up{3.8} & \textbf{70.6}\upbf{10.7} & \textbf{96.5}\up{17.7} \\
            $M=16N$ & \textbf{74.4}\upbf{2.1} & \textbf{86.9}\upbf{5.2} & \textbf{70.6}\upbf{10.7} & 95.9\up{17.1} \\
            \bottomrule
        \end{tabular}
        }
        \label{tab:preselection_size}
    \end{minipage}
    \hfill
    \begin{minipage}[t]{0.38\textwidth}
        \centering
        \small
        \caption{The fine-tuning (FT.) and eNTK accuracy on different domains.
        {\color{mygreen}{$\checkmark$}} denotes that the kernel analog achieves $>$90\% of FT. accuracy.}
        \resizebox{\linewidth}{!}{
        \begin{tabular}{lccc}
            \toprule
            & \textbf{Fin.} & \textbf{Legal} & \textbf{Psycho.} \\
            \midrule
            FT. & 77.27 & 52.35 & 85.10 \\
            eNTK & 71.57 & 54.45 & 88.40 \\
            \midrule
            $\Delta$ & 5.70~\color{mygreen}{$\checkmark$} & 2.10~\color{mygreen}{$\checkmark$} & 3.30~\color{mygreen}{$\checkmark$} \\
            \bottomrule
        \end{tabular}
        }
        \label{tab:entk}
    \end{minipage}

\end{table*}

\begin{table*}[h]
    \centering
    \caption{Asymptotic complexity and wall-clock runtime (measured as single NVIDIA A100 GPU minutes) with each step in NTK-Selector.}
    \resizebox{\linewidth}{!}{
    \begin{tabular}{cccc|cc}
        \toprule
        & \multicolumn{3}{c}{\textbf{Coarse-grained Pre-selection}} & \multicolumn{2}{c}{\textbf{Fine-grained NTK Selection}} \\
        \midrule
        & \textbf{Warm-up training} & \textbf{Embedding Computation} & \textbf{KNN Selection} & \textbf{Gradient Computation} & \textbf{NTK selection} \\
        \cmidrule{2-4}  \cmidrule{5-6} 
        \textbf{Complexity} & $\mathcal{O}(|\mathcal{D}|)$ & $\mathcal{O}(|\mathcal{C}|)$ & $\mathcal{O}(|\mathcal{C}| \cdot M)$ & $\mathcal{O}(M)$ & $\mathcal{O}(M \cdot N)$\\
        \textbf{Runtime} & 8 Min & 0.9 Min/K samples & 10 Min & 6 Min/K samples & 1 Min \\
        \bottomrule
    \end{tabular}
    }
    \label{tab:complexity}
\end{table*}

\paragraph{Less target data, more pronounced improvement with auxiliary data.}
To further analyze the effectiveness of auxiliary data, we explored its impact on different volumes of target data, ranging from a very small set $|\mathcal{D}|=100$ to a larger one $|\mathcal{D}|=2000$. As shown in Figure~\ref{fig:ablation_1_2}, the performance of the baseline ($\times 1$) predictably improves as the amount of target data increases, and the results demonstrate that our method consistently provides significant gains across all domain data sizes.
Crucially, we found that the relative gain from auxiliary data is most pronounced in the most resource-constrained settings. For example, with only $|\mathcal{D}|=100$ samples, performance improves from 54.6 to 68.6 at the $\times 10$ scale, which results in a gain of 14.0 points (25.6\%). In contrast, with $|\mathcal{D}|=2000$, the improvement is only from 70.9 to 74.6, a gain of 3.7 points (5.2\%). It empirically demonstrates that NTK-Selector is especially powerful in very low-resource scenarios, where it compensates effectively for the lack of in-domain data.

\paragraph{Small projection dimension is sufficient.}
This part investigates the influence of the gradient projection dimension $p$, which varies across 1024, 2048, 4096, and 8192 (default).
As shown in Figure~\ref{fig:project_dimension}, even the smallest dimension ($p$=1024) yields significant improvement over the domain-only baseline. Performance increases steadily with larger dimensions, suggesting that higher-dimensional projections better preserve gradient similarity information.
Therefore, we recommend using a larger projection dimension if computational resources allow.
\WFclear

\paragraph{Scaling pre-selection.}

To enhance scalability over large candidate sets, NTK-Selector employs a coarse-grained pre-selection stage that reduces the candidate pool from millions to a manageable set of $M$ samples. We evaluate the effect of $M$ in Table~\ref{tab:preselection_size} using multiples of the final subset size $N$: $2N$, $4N$ (default), and $16N$. The results confirm that a larger pre-selection size consistently improves accuracy across tasks. For example, increasing $M$ from $2N$ to $16N$ brings substantial gains in MMLU-Med (73.3 → 74.4) and FPB (84.5 → 86.9), with more moderate improvements in ContractNLI and RSDD.
However, larger values of $M$ also entail higher computational and memory costs during the subsequent NTK scoring stage. We therefore recommend choosing $M$ within the range of $4N$ to $16N$, which provides a practical trade-off between selection quality and efficiency.


\paragraph{Empirical NTK regression.}
To further validate the NTK-like behavior of pre-trained LLMs, we compare the fine-tuning accuracy against predictions from empirical Neural Tangent Kernel (eNTK) regression following \cite{wei2022more} and \cite{malladi2023kernel}. For each task, we compute the eNTK kernel matrix and solve the associated kernel regression problem (see Appendix~\ref{apx:entk_evaluation} for implementation details).
As shown in Table~\ref{tab:entk}, eNTK regression achieves 90\% of fine-tuning accuracy in all cases across financial, legal, and psychological domains. This close agreement provides further evidence that the gradient-based dynamics of LLMs during task-specific fine-tuning are well-approximated by a static kernel, supporting our observation in \S\ref{subsec:ntk_like_behavior}. These results reinforce the practical relevance of NTK-based analysis and selection mechanisms for LLMs.



\paragraph{Computational complexity and runtime analysis.}
Table~\ref{tab:complexity} outlines the asymptotic complexity and wall-clock runtime for each stage of NTK-Selector in Algorithm~\ref{alg:ntk_selection}. The wall-clock time is measured in single NVIDIA A100 (80GB) GPU minutes. The most computationally expensive steps are embedding and gradient computation, which scale linearly with the candidate size $|\mathcal{C}|$ and the pre-selection size $M$.
Under our default configuration, which selects $N=9000$ samples from a candidate pool of 1.8 million samples with $M=4N$, the total runtime is approximately 31 hours. This cost scales favorably and can be substantially reduced by using smaller candidate sets or pre-selection sizes, making the method practical for large-scale data selection. \revise{It is noted that the warm-up stage is only applied to the model used for data selection, not to the model used for final performance evaluation.}

\section{Conclusion}

In this paper, we propose NTK-Selector, a principled framework for selecting high-value auxiliary data from general corpora to address data scarcity in low-resource domains. By leveraging the observation that LLMs exhibit stable NTK-like behavior during fine-tuning, NTK-Selector enables efficient and theoretically motivated data selection.
Extensive experiments show that NTK-Selector consistently outperforms existing data selection methods across multiple models and domains, with particularly strong gains in very low-resource settings. Our work establishes a new SOTA in cross-domain data selection and offers deeper insights into the optimization dynamics of LLMs during fine-tuning.

\section*{Acknowledgement}
This work was supported by the National Natural Science Foundation of China (Grant No. 62576209) and the Shanghai Municipal Special Program for Basic Research on General AI Foundation Models (Grant No. 2025SHZDZX025G05).


\section*{Impact Statement}
This paper presents a methodological contribution aimed at improving low-resource domain adaptation for large language models. By enabling the effective selection of auxiliary data from general-domain corpora, our approach may help reduce reliance on scarce, expensive, or sensitive domain-specific data, thereby facilitating broader access to language model capabilities in specialized application areas. The proposed method does not involve new data collection, model deployment, or the use of additional sensitive information, and its ethical considerations are therefore aligned with those commonly associated with research on large language models. No specific negative societal impacts are anticipated beyond these well-established considerations.






\bibliography{example_paper,ref}
\bibliographystyle{icml2026}







\newpage
\onecolumn


\appendix



\section{The Use of Large Language Models}
We leveraged large language models as writing assistants for tasks such as rephrasing sentences, improving grammatical flow, and refining technical descriptions for clarity.

\section{Related Works}
\label{apx:related_works}
\paragraph{Task-agnostic Data Selection.}
Conventional data selection approaches are broadly classified into two main categories: task-agnostic and task-specific methods. Task-agnostic methods aim to curate a high-quality and representative subset of data by filtering out low-quality and redundant samples, without a specific target task in mind. These methods can be further subdivided into three classes: (i) Quality-based approaches often utilize the training loss or other proxy metrics to score and select data. For example, PDS~\citep{gu2025pds} formulates data selection as an optimal control problem and solves it using Pontryagin's Maximum Principle theory~\citep{pontryagin2018pmp} to optimize the AUC under the training loss curve, while IFD~\citep{li2024ifd} measures data quality by calculating the loss difference when removing the instruction input. (ii) Diversity-based algorithms focus on reducing dataset redundancy. They measure data diversity using metrics such as feature distance~\citep{tirumala2023d4}, eigenvalues~\citep{fan2025disf}, and gradient similarity~\citep{zhang2025tagcos}. (iii) Model-based methods employ a proxy model, which can be a small model or a powerful large language model (e.g., ChatGPT~\citep{openai2023chatgpt}), to rate each candidate sample. Notable examples include AlpaGasus~\citep{chen2024alpagasus}, SuperFiltering~\citep{li2024superfiltering}, and ADO~\citep{jiang2025ado}. A key limitation of these task-agnostic approaches is their inability to provide targeted data augmentation for a specific domain or downstream task, as they do not rely on a reference dataset to guide the selection process. 

\paragraph{Task-specific Data Selection.}
In contrast to task-agnostic methods, task-specific approaches focus on curating a subset of data from a large in-domain candidate pool specifically for a target task. These methods often leverage sophisticated techniques to identify the most relevant data points. For instance, LESS~\citep{xia2024less} utilizes a modified influence function tailored for the Adam optimizer, achieving high accuracy with as little as 5\% of the full dataset. Similarly, G-DIG~\citep{pan2024gdig}, which is focused on machine translation, also adopts an influence function-based approach guided by a manually curated seed subset.
Other methods formulate the problem as an optimization task. TSDS~\citep{liu2024tsds} frames data selection for task-specific finetuning as an optimization problem that minimizes the discrepancy between the selected data and the target distribution using an optimal transport-based distribution alignment loss. Alternatively, DSIR~\citep{xie2023dsir} bypasses the need for model features or gradients altogether. It instead constructs an importance weight estimator using $n$-gram features to evaluate the relevance of each candidate sample.
While these approaches have shown effectiveness, they all fundamentally rely on a substantial quantity of in-domain candidate pool and a reference validation set. They do not address scenarios with extremely limited resources or investigate how leveraging out-of-domain data could benefit model training and convergence.

\paragraph{Neural Tangent Kernel.}
Neural Tangent Kernel~\citep{jacot2018ntk} is a foundational concept demonstrating that training an neural network under certain parameterization is equivalent to performing kernel regression as the width of the network approaches infinity. This insight has become a cornerstone for understanding the generalization properties of deep networks. Following this seminal work, subsequent research~\citep{allen2019learning,cao2019generalization,wei2022more} has widely applied the NTK lens to analyze the optimization and generalization behavior of deep learning models.
However, this theoretical perspective has largely been confined to small, randomly initialized networks to satisfy the core assumptions of the NTK framework. There has been little investigation into its application within the pre-training and fine-tuning paradigm, especially for large language models (LLMs). While some recent works, such as those by \cite{malladi2023kernel} and \cite{afzal2024can}, have explored the kernel behavior of pre-trained language models (e.g., RoBERTa~\citep{liu2019roberta}) on the GLUE benchmark~\citep{wang2019glue}, these studies are still limited to small-scale models and classification tasks. This is far from the real-world applications of modern LLMs.
In this paper, we present empirical evidence for stable NTK-like behavior in LLMs, and we leverage this observation to design an NTK-based auxiliary data selection criterion that yields consistent enhancement for low-resource domains. To our knowledge, this is the first systematic study of kernel behavior in large language models under standard instruction tuning.


\section{Proof of Theorem~\ref{thm:ntk_like_reparameter}}
\label{apx:proof_ntk_like}
\begin{theorem}
    Given a model $f(\cdot; \theta_t)$ with NTK $\Theta (\cdot, \cdot; \theta_t)$ whose training dynamics are governed by a gradient flow $\dot{f}(\cdot;\theta_t) = -\eta \Theta(\cdot, \cdot; \theta_t) \gamma(t)$, where $\gamma (t)=\nabla_f \mathcal{L}(f(\cdot; \theta_t))$ and $\eta$ is the learning rate, if it exhibits NTK-like behavior on $[0, T]$, its training dynamics can be re-parameterized onto a new time axis $u$:
    \begin{equation}
        \dot{f}(\cdot;\theta_{t(u)})=-\eta \Theta (\cdot, \cdot; \theta_0) \gamma(t(u)) + \Delta (u),
    \end{equation}
    where $u(t) \coloneqq \int_0^t a^*(\tau) d\tau$, $a^*(t) = \frac{\langle \Theta (\cdot, \cdot; \theta_t), \Theta (\cdot, \cdot; \theta_0) \rangle}{||\Theta (\cdot, \cdot; \theta_0)||^2}$ and the perturbation term $\Delta (u)$ is bounded with $||\Delta (u)|| \leq \eta ||\Theta (\cdot, \cdot; \theta_0)|| \frac{\sqrt{2\epsilon}}{1-\epsilon}||\gamma (t(u))||$.
\end{theorem}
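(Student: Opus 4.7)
The plan is to decompose the time-varying kernel $\Theta(\cdot,\cdot;\theta_t)$ into a component collinear with $\Theta(\cdot,\cdot;\theta_0)$ and a Frobenius-orthogonal residual, then absorb the collinear scale factor into a time change of variables and estimate the residual via the NTK-like hypothesis. Concretely, writing $\Theta_t := \Theta(\cdot,\cdot;\theta_t)$ for brevity, I would first apply the orthogonal decomposition in the Frobenius inner product
\[
\Theta_t = a^*(t)\, \Theta_0 + \Theta_t^{\perp}, \qquad \langle \Theta_t^{\perp}, \Theta_0\rangle = 0,
\]
where $a^*(t) = \langle \Theta_t, \Theta_0\rangle / \|\Theta_0\|^2$ is exactly the coefficient appearing in the theorem. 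Plugging this into the gradient flow $\dot f(\cdot;\theta_t) = -\eta\,\Theta_t\,\gamma(t)$ produces a ``main'' term $-\eta\, a^*(t)\,\Theta_0\,\gamma(t)$ and a residual $-\eta\, \Theta_t^{\perp}\,\gamma(t)$.

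Next I would perform the time reparameterization $u(t) := \int_0^t a^*(\tau)\,d\tau$, so that $du/dt = a^*(t)$ and, by the chain rule, $\dot f(\cdot;\theta_{t(u)}) = (df/dt)/a^*(t)$. Dividing the two terms above by $a^*(t)$ yields exactly
\[
\dot f(\cdot;\theta_{t(u)}) = -\eta\, \Theta_0\,\gamma(t(u)) + \Delta(u), \qquad \Delta(u) = -\frac{\eta}{a^*(t(u))}\,\Theta_{t(u)}^{\perp}\,\gamma(t(u)),
\]
which is the desired re-parameterized form. (A minor sanity check is needed that $a^*(t) > 0$ on $[0,T]$, which follows since the cosine being at least $1-\epsilon > 0$ forces the projection coefficient to be positive.)

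The quantitative bound on $\|\Delta(u)\|$ is the main obstacle and is where the NTK-like hypothesis enters. From the Pythagorean identity $\|\Theta_t\|^2 = (a^*(t))^2 \|\Theta_0\|^2 + \|\Theta_t^{\perp}\|^2$ and the identity $\cos_F(\Theta_t,\Theta_0) = a^*(t)\|\Theta_0\|/\|\Theta_t\|$, the assumption $\cos_F \ge 1-\epsilon$ gives $\|\Theta_t\| \le a^*(t)\|\Theta_0\|/(1-\epsilon)$, which I would plug back into the Pythagorean identity to obtain
\[
\|\Theta_t^{\perp}\|^2 \;\le\; (a^*(t))^2 \|\Theta_0\|^2 \left(\frac{1}{(1-\epsilon)^2} - 1\right) \;=\; (a^*(t))^2 \|\Theta_0\|^2\,\frac{2\epsilon - \epsilon^2}{(1-\epsilon)^2},
\]
and hence $\|\Theta_t^{\perp}\| \le a^*(t)\|\Theta_0\|\,\sqrt{2\epsilon}/(1-\epsilon)$. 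Substituting into the expression for $\Delta(u)$, the $a^*$ factors cancel and the submultiplicative bound $\|\Theta_t^{\perp}\gamma(t)\| \le \|\Theta_t^{\perp}\|\cdot\|\gamma(t)\|$ yields the stated inequality $\|\Delta(u)\| \le \eta\,\|\Theta_0\|\,\tfrac{\sqrt{2\epsilon}}{1-\epsilon}\,\|\gamma(t(u))\|$.

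The nontrivial step is really the middle algebraic manipulation: the fact that a cosine lower bound translates cleanly into a bound on the perpendicular component scaled by $a^*(t)$ (rather than by $\|\Theta_t\|$), so that the $a^*$ factor cancels against the one in the denominator of $\Delta(u)$. Once this cancellation is spotted, everything else is bookkeeping with Cauchy--Schwarz and the chain rule.
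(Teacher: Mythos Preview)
Your proposal is correct and follows essentially the same approach as the paper: the orthogonal decomposition $\Theta_t = a^*(t)\Theta_0 + \Theta_t^{\perp}$, the time change $u(t)=\int_0^t a^*(\tau)d\tau$ with the chain rule, and the Pythagorean/cosine estimate giving $\|\Theta_t^{\perp}\|/a^*(t) \le \|\Theta_0\|\sqrt{2\epsilon}/(1-\epsilon)$ are exactly what the paper does (it writes $E(t):=R(t)/a^*(t)$ and bounds $\|E(t)\|$ directly, but this is the same computation). Your explicit check that $a^*(t)>0$ is a nice addition that the paper leaves implicit.
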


\begin{proof}
For brevity, we let $\dot{f}(t) := \dot{f}(\cdot; \theta_t)$, $\dot{f}(u) := \dot{f}(\cdot; \theta_{t(u)})$, $\Theta(t) := \Theta (\cdot, \cdot; \theta_t)$ and $\Theta_0 := \Theta (0)$.
We decompose $\Theta(t)$ along ${\rm span}(\Theta_0)$ and its orthogonal complement:
\[
\Theta(t) = a^*(t) \Theta_0 + R(t), \quad \text{where} \quad a^*(t) = \arg\min_{a\in\mathbb{R}}||\Theta(t)-a\Theta_0||.
\]
By the orthogonality condition of least squares, we have:
\[
a^*(t) = \frac{\langle \Theta(t), \Theta_0 \rangle}{\|\Theta_0\|^2} \quad \text{and} \quad R(t) = \Theta(t) - a^*(t) \Theta_0.
\]


From the definition of Frobenius cosine similarity, we can obtain the similarity $S(t)$ between $\Theta (t)$ and $\Theta_0$:
\[
S(t) := \frac{\langle \Theta(t), \Theta_0 \rangle}{\|\Theta(t)\| \|\Theta_0\|},
\]
and by the Pythagorean theorem:
\[
\|\Theta(t)\|^2 = \|a^*(t)\Theta_0\|^2 + \|R(t)\|^2 = \frac{\langle \Theta(t), \Theta_0 \rangle^2}{\|\Theta_0\|^2} + \|R(t)\|^2,
\]
we can derive the residual norm:
\[
\|R(t)\|^2 = \|\Theta(t)\|^2 - \frac{\langle \Theta(t), \Theta_0 \rangle^2}{\|\Theta_0\|^2} =  \|\Theta(t)\|^2 (1 - S(t)^2),
\]
so that
\[
\|R(t)\| = \|\Theta(t)\| \sqrt{1 - S(t)^2}.
\]


Now we define the scaled equivalent kernel $\Theta_{\text{eq}}(t)$ as:
\[
\Theta_{\text{eq}}(t) := \frac{1}{a^*(t)} \Theta(t) = \Theta_0 + E(t), \quad \text{where} \quad E(t) = \frac{R(t)}{a^*(t)}.
\]
Using the identity:
\[
a^*(t) = \frac{\|\Theta(t)\|}{\|\Theta_0\|} S(t),
\]
we bound the error term:
\[
\|E(t)\| = \frac{\|R(t)\|}{a^*(t)} = \|\Theta_0\| \frac{\sqrt{1 - S(t)^2}}{S(t)}.
\]
Since $S(t) \geq 1 - \epsilon$ and $S(t)^2 \geq 1 -2\epsilon + \epsilon^2$, it follows that:
\[
\|E(t)\| \leq \|\Theta_0\| \frac{\sqrt{2\epsilon - {\epsilon}^2}}{1 - \epsilon} \leq \|\Theta_0\| \frac{\sqrt{2\epsilon}}{1 - \epsilon} .
\]

Based on the expansion of $\Theta (t)$, the gradient flow can be transformed as:
\[
\dot{f}(t) = -\eta \Theta(t) \gamma(t) = -\eta a^*(t) \Theta_0 \gamma(t) - \eta R(t) \gamma(t).
\]
Here we introduce the time reparameterization from $t$ to $u$:
\[
u(t) = \int_0^t a^*(\tau) d\tau \quad \Rightarrow \quad \frac{du}{dt} = a^*(t).
\]
Then, by the chain rule:
\[
\dot{f}(u) = \frac{df/dt}{du/dt} = -\eta \Theta_0 \gamma(t(u)) - \eta\frac{R(t(u))}{a^*(t(u))} \gamma(t(u)) = -\eta \Theta_0 \gamma (t(u)) + \Delta(u),
\]
where
\[
\Delta(u) = -\eta\frac{R(t(u))}{a^*(t(u))} \gamma(t(u)).
\]

Using the spectral norm product bound,
$\|E(t(u))\,\gamma(t(u))\|\le \|E(t(u))\|_{\mathrm{op}}\|\gamma(t(u))\|$, and the operator norm bound $\| \cdot \|_{\mathrm{op}}\le \| \cdot \|$, we obtain
\[
\|\Delta(u)\|
\;\le\; \eta\,\|E(t(u))\|_{\mathrm{op}}\,\|\gamma(t(u))\|
\;\le\; \eta\,\|E(t(u))\|\,\|\gamma(t(u))\|
\;\le\; \eta\,\|\Theta_0\|\,\frac{\sqrt{2\epsilon}}{1-\epsilon}\,\|\gamma(t(u))\|,
\]
where $\|\cdot\|$ denotes the Frobenius norm and $\|\cdot\|_{\mathrm{op}}$ is the spectral norm.

Thus, under the time variable $u$, the dynamics are equivalent to those of a fixed kernel $\Theta_0$ up to a perturbation $\Delta$ bounded as above, which completes the proof.
\end{proof}

\section{Derivation of Jacobian-free NTK Approximation}
\label{apx:jacobian_free_approximation}
\begin{proof}
    The derivation of Equation~\ref{eq:ntk_approx} is formulated as below:
    \[
    \begin{aligned}
    \widetilde{\Theta}(\mathbf{x}, \mathbf{x}'; \theta_t) &= \left\langle \nabla_{\theta_t} \sum_{k=1}^D f_k(\mathbf{x}; \theta_t) , \nabla_{\theta_t} \sum_{k=1}^D f_k(\mathbf{x}', \theta_t) \right\rangle, \\
    &= \left\langle \sum_{k=1}^D \nabla_{\theta_t} f_k(\mathbf{x}; \theta_t) , \sum_{k=1}^D \nabla_{\theta_t} f_k(\mathbf{x}', \theta_t) \right\rangle \\
    &= \sum_{k=1}^D \sum_{m=1}^D \langle \nabla_{\theta_t} f_k(\mathbf{x}; \theta_t) , \nabla_{\theta_t} f_m(\mathbf{x}', \theta_t) \rangle \\
    &= \sum_{k=m=1}^D \langle \nabla_{\theta_t} f_k(\mathbf{x}; \theta_t) , \nabla_{\theta_t} f_m(\mathbf{x}', \theta_t) \rangle + \sum_{k=1}^D \sum_{m=1, m\neq k}^D \langle \nabla_{\theta_t} f_k(\mathbf{x}; \theta_t) , \nabla_{\theta_t} f_m(\mathbf{x}', \theta_t) \rangle \\
    &= \sum_{k=1}^D \left\langle \nabla_{\theta_t} f_k(\mathbf{x}; \theta_t) , \nabla_{\theta_t} f_k(\mathbf{x}'; \theta_t) \right\rangle + \sum_{k=1}^D \sum_{m=1, m\neq k}^D \langle \nabla_{\theta_t} f_k(\mathbf{x}; \theta_t) , \nabla_{\theta_t} f_m(\mathbf{x}', \theta_t) \rangle. \\
    \end{aligned}
    \]
\end{proof}

\section{Theoretical Justification of the Jacobian-Free NTK Approximation}
\label{apx:theoretical_proof_jacobian_free_ntk}

\revise{In this part, we provide a theoretical justification for why cross-output gradient terms in the calculation of NTK can be neglected for LLMs. We first recall the pseudo-NTK (pNTK) construction of \citet{Mohamadi2022AFW}, who show that the eNTK is asymptotically proportional to the identity across outputs in wide randomly initialised networks. We then show that our Jacobian-free NTK approximation coincides with their pNTK up to a constant factor, and extend the argument to pretrained, finite-width LLMs under mild high-dimensional assumptions.}

\subsection{Pseudo-NTK}
\label{app:pntk-mohamadi}
\revise{
Consider a general multi-output network \(f_\theta(x)\in\mathbb{R}^D\) with parameters \(\theta\in\mathbb{R}^P\) and output dimension $D$, let \[ J_\theta(f_\theta(\mathbf{x})) \in \mathbb{R}^{D\times P} \] denote the Jacobian of the outputs with respect to all parameters. The NTK between two inputs \(\mathbf{x},\mathbf{x}'\) is denoted as
\begin{equation}
    \Theta(\mathbf{x},\mathbf{x}'; \theta)
    = J_\theta(f(\mathbf{x}; \theta))\,J_\theta(f(\mathbf{x}';\theta))^\top \in \mathbb{R}^{D\times D}.
\end{equation}}

\revise{\citet{Mohamadi2022AFW} introduce the pseudo-NTK (pNTK) as the scalar NTK of a single-output network obtained by averaging all outputs:
\begin{equation}
    \widehat{\Theta}(\mathbf{x},\mathbf{x}'; \theta)
    =
    \Big\langle
        \nabla_\theta \frac{1}{\sqrt{D}}
        \sum_{k=1}^D f_k(\mathbf{x}; \theta),
        \;
        \nabla_\theta \frac{1}{\sqrt{D}}
        \sum_{k=1}^D f_k(\mathbf{x}'; \theta)
    \Big\rangle,
    \label{eq:pntk-def}
\end{equation}
where \(f_k(\mathbf{x}; \theta)\) denotes the \(k\)-th output coordinate.}

\revise{For fully connected networks with ReLU activations and width \(n\) in each hidden layer, they show that at random initialization and as \(n\to\infty\), the pNTK approximates the full eNTK in Frobenius norm:
\begin{equation}
    \frac{
        \big\|
            \Theta^{(L)}(\mathbf{x},\mathbf{x}'; \theta)
            - \widehat{\Theta}^{(L)}(\mathbf{x}, \mathbf{x}'; \theta)\,I_D
        \big\|_F
    }{
        \big\|\Theta^{(L)}(\mathbf{x},\mathbf{x}'; \theta)\big\|_F
    }
    = O_{\mathbb{P}}\!\big(n^{-1/2}\big),
    \qquad n \to \infty,
    \label{eq:pntk-frobenius}
\end{equation}
where \(L\) is the number of layers and \(I_D\) is the \(D\times D\) identity. Equivalently, the eNTK matrix becomes asymptotically proportional to \(I_D\). Moreover, they show that the ratio of ``information'' in off-diagonal versus diagonal entries of the eNTK matrix decays as \(O(n^{-1/2})\), reflecting the fact that gradients associated with different outputs become nearly orthogonal in parameter space.}

\revise{Our Jacobian-free NTK approximation is exactly a rescaled version of the pNTK in Equation (\ref{eq:pntk-def}), but we apply it to pretrained finite-width LLMs instead of randomly initialized fully-connected networks. We provide a detailed explanation of the feasibility and make this connection precise, extending it to our setting under high-dimensional assumptions.}

\subsection{Decomposition of Gradients}
\label{app:setup-ntk}

\revise{For an input $\mathbf{x}$, let $h(\mathbf{x};\psi)\in\mathbb{R}^d$ denote the final hidden representation produced by the backbone, parametrized by \(\psi\). Let \(W\in\mathbb{R}^{D\times d}\) be the LM head with \(k\)-th row \(w_k^\top\). The output logits are denoted as
\begin{equation}
    f(\mathbf{x};\theta) = W\,h(\mathbf{x};\psi)\in\mathbb{R}^D,
    \qquad
    \theta = (W,\psi).
\end{equation}
Correspondingly, the \(k\)-th output dimension is \( f_k(\mathbf{x};\theta) = w_k^\top h(\mathbf{x};\psi). \)}

\revise{For each output coordinate \(k\), the gradient with respect to $P$ parameters is formulated as
\begin{equation}
    g_k(\mathbf{x})
    = \nabla_\theta f_k(\mathbf{x};\theta)
    \in \mathbb{R}^P.
\end{equation}
Using the product structure \(\theta=(W,\psi)\), we decompose the gradient into two parts as
\begin{equation}
    g_k(x)
    =
    \Big[
        \tfrac{\partial f_k}{\partial W}(\mathbf{x};\theta),
        \;
        \tfrac{\partial f_k}{\partial \psi}(\mathbf{x};\theta)
    \Big],
\end{equation}
where each of them can be formulated as
\begin{align}
    \frac{\partial f_k}{\partial W}(\mathbf{x};\theta)
    &= e_k \otimes h(\mathbf{x};\psi),
    \\
    \frac{\partial f_k}{\partial \psi}(\mathbf{x};\theta)
    &= J_\psi(\mathbf{x})^\top w_k.
\end{align}
\(e_k\) is the \(k\)-th standard basis vector in \(\mathbb{R}^D\), \(J_\psi(\mathbf{x}) = \frac{\partial h(\mathbf{x};\psi)}{\partial \psi} \in \mathbb{R}^{d\times P_\psi}\) is the backbone Jacobian, and \(\otimes\) denotes the Kronecker product. Thus
\begin{equation}
    g_k(\mathbf{x}; \theta)
    =
    \big[
        e_k \otimes h(\mathbf{x};\psi);
        \;
        J_\psi(\mathbf{x})^\top w_k
    \big].
    \label{eq:gk-decomposition}
\end{equation}}

\revise{
Based on this, we reformulate the standard NTK $\Theta$ and Jacobian-free NTK approximated NTK $\widetilde{\Theta}$ as
\begin{equation}
    \Theta(\mathbf{x}, \mathbf{x}'; \theta)
    =
    \sum_{k=1}^D \big\langle g_k(\mathbf{x}; \theta), g_k(\mathbf{x}'; \theta) \big\rangle.
    \label{eq:theta-scalar-def}
\end{equation}
and
\begin{align}
    \widetilde{\Theta}(\mathbf{x},\mathbf{x}'; \theta)
    & =
    \Big\langle
      \nabla_\theta \sum_{k=1}^D f_k(\mathbf{x};\theta),
      \;
      \nabla_\theta \sum_{k=1}^D f_k(\mathbf{x}';\theta)
    \Big\rangle \\
    & =
    \Big\langle
      \sum_{k=1}^D g_k(\mathbf{x}),
      \;
      \sum_{m=1}^D g_m(\mathbf{x}')
    \Big\rangle \\
    & =
    \sum_{k=1}^D \sum_{m=1}^D
       \big\langle g_k(\mathbf{x}), g_m(\mathbf{x}') \big\rangle.
    \label{eq:theta-tilde-def}
\end{align}
}



\subsection{Decomposition of Cross-output Terms}
\label{app:decomposition-cross}

\revise{
Based on Equation (\ref{eq:gk-decomposition}), we decompose cross-output terms as
\begin{align}
    \big\langle g_k(\mathbf{x}; \theta), g_m(\mathbf{x}'; \theta) \big\rangle
    &=
    \big\langle
        e_k \otimes h(\mathbf{x}),
        e_m \otimes h(\mathbf{x}')
    \big\rangle
    +
    \big\langle
        J_\psi(\mathbf{x})^\top w_k,
        J_\psi(\mathbf{x}')^\top w_m
    \big\rangle
    \nonumber\\
    &=
    \underbrace{
        \delta_{km} \,\big\langle h(\mathbf{x}), h(\mathbf{x}') \big\rangle
    }_{\text{LM head}}
    +
    \underbrace{
        w_k^\top B(\mathbf{x}, \mathbf{x}')\,w_m
    }_{\text{backbone}},
    \label{eq:grad-inner-decomp-app}
\end{align}
where we $B(\mathbf{x}, \mathbf{x}')$ is defined as the backbone Jacobian Gram-type matrix:
\begin{equation}
    B(\mathbf{x},\mathbf{x}')
    =
    J_\psi(\mathbf{x})\,J_\psi(\mathbf{\mathbf{x}}')^\top
    \in \mathbb{R}^{d \times d}.
    \label{eq:B-def}
\end{equation}
For \(\mathbf{x}=\mathbf{x}'\), \(B(\mathbf{x},\mathbf{x})\) is a standard Jacobian Gram matrix and is symmetric positive semidefinite. For \(\mathbf{x}\neq \mathbf{x}'\), \(B(\mathbf{x},\mathbf{x}')\) need not be symmetric.
For notational simplicity, we keep the notation \(B(\mathbf{x},\mathbf{x}')\).}

\revise{Considering Equation (\ref{eq:grad-inner-decomp-app}) for \(k\neq m\), we have
\begin{equation}
    \big\langle g_k(\mathbf{x}), g_m(\mathbf{x}') \big\rangle
    =
    w_k^\top B(\mathbf{x},\mathbf{x}')\,w_m,
    \label{eq:cross-term-B-app}
\end{equation}
and for \(k=m\), it is
\begin{equation}
    \big\langle g_k(\mathbf{x}), g_k(\mathbf{x}') \big\rangle
    =
    \big\langle h(\mathbf{x}), h(\mathbf{x}') \big\rangle
    + w_k^\top B(\mathbf{x}, \mathbf{x}')\,w_k.
    \label{eq:diag-term-B-app}
\end{equation}}

\revise{We sum Equation (\ref{eq:diag-term-B-app}) over \(k\) to reformulate Equation (\ref{eq:theta-scalar-def}) into
\begin{align}
    \Theta(\mathbf{x}, \mathbf{x}'; \theta)
    &= \sum_{k=1}^D
       \big\langle g_k(\mathbf{x}), g_k(\mathbf{x}') \big\rangle
       \nonumber\\
    &= D\,\big\langle h(\mathbf{x}), h(\mathbf{x}') \big\rangle
       + \sum_{k=1}^D w_k^\top B(\mathbf{x},\mathbf{x}')\,w_k
       \nonumber\\
    &= D\,\big\langle h(\mathbf{x}), h(\mathbf{x}') \big\rangle
       + \operatorname{Tr}\!\big( W B(\mathbf{x},\mathbf{x}') W^\top \big),
    \label{eq:theta-trace-app}
\end{align}
and expand Equation (\ref{eq:theta-tilde-def}) into
\begin{align}
    \widetilde{\Theta}(\mathbf{x}, \mathbf{x}'; \theta)
    &= \sum_{k=1}^D \sum_{m=1}^D
       \big\langle g_k(\mathbf{x}), g_m(\mathbf{x}') \big\rangle
       \nonumber\\
    &= D\,\big\langle h(\mathbf{x}), h(\mathbf{x}') \big\rangle
       + \mathbf{1}^\top W B(\mathbf{x},\mathbf{x}') W^\top \mathbf{1},
    \label{eq:theta-tilde-trace-app}
\end{align}
where \(\mathbf{1} \in \mathbb{R}^D\) is the all-ones vector. Their difference is exactly the sum of cross-output backbone terms:
\begin{equation}
    \Delta(\mathbf{x},\mathbf{x}')
    =
    \widetilde{\Theta}(\mathbf{x},\mathbf{x}'; \theta) - \Theta(\mathbf{x},\mathbf{x}'; \theta)
    =
    \sum_{k \neq m} w_k^\top B(\mathbf{x},\mathbf{x}')\,w_m.
    \label{eq:delta-def-app}
\end{equation}
Note that the LM head contribution \(D\langle h(\mathbf{x}),h(\mathbf{x}')\rangle\) cancels in the difference \(\widetilde{\Theta}-\Theta\), so only the backbone Jacobian enters \(\Delta(\mathbf{x},\mathbf{x}')\). In what follows we show that, under mild high-dimensional assumptions on \(W\) and \(B(\mathbf{x},x')\), the magnitude of \(\Delta(\mathbf{x},\mathbf{x}')\) is small relative to \(\Theta(\mathbf{x},\mathbf{x}'; \theta)\).}

\subsection{High-dimensional assumptions}
\label{app:assumptions}

\revise{We now state the two assumptions that drive our analysis. They are standard in the theory of wide networks, and empirically consistent with the behaviour we observe for pretrained LLMs.}

\revise{
\begin{assumption}[Isotropic LM head]
\label{ass:iso-head-app}
Conditionally on the backbone parameters \(\psi\), the rows of the readout matrix \(W \in \mathbb{R}^{D \times d}\) are independent, mean-zero, isotropic random vectors in \(\mathbb{R}^d\):
\begin{equation}
    w_k \sim \mathcal{N}\!\big(0,(\sigma_w^2/d)\,I_d\big),
    \qquad k = 1,\dots,D,
\end{equation}
independently.
\end{assumption}}

\revise{
\paragraph{Justification of Assumption~\ref{ass:iso-head-app}.}
The isotropic LM-head model is standard at random initialization in infinite-width NTK analyses~\citep{Jacot2018NeuralTK} and in the Tensor Programs framework~\citep{Yang2020TensorPII,Yang2020TensorPIII}, where weights at each layer are sampled as i.i.d.\ mean-zero vectors with covariance proportional to the identity. In supervised classification, Neural Collapse results~\citep{Papyan2020PrevalenceON,Zhu2021AGA} further show that, when training drives the empirical risk near zero, last-layer classifier weights converge to a simplex equiangular tight frame. Geometrically, this means that, in the feature subspace, the classifier rows become approximately equinorm and pairwise equiangular, i.e., close to an isotropic configuration.}

\revise{Crucially, token-level language modeling is itself a multi-class classification problem at each position, with the vocabulary playing the role of class labels and the LM head acting as a linear classifier on the contextual representation $h(x;\psi)$. Recent work on Linguistic Collapse~\citep{wu2024linguistic} demonstrates that Neural-Collapse-like geometries also emerge in causal language models, despite strongly imbalanced vocabularies and the number of tokens (classes) exceeding the hidden dimension. It means that as model size and training scale increase, the LM-head rows and certain token-dependent feature prototypes become increasingly equinorm, equiangular, and aligned. These findings suggest that, in high dimensions, the LM head of a pretrained LLM is well-approximated by an ensemble of nearly isotropic row vectors in the relevant feature subspace.}

\revise{In this work, we therefore model the LM head $W$ as having independent, mean-zero, approximately isotropic rows as in Assumption~A1. This is not intended as an exact statistical description of any specific checkpoint, but as a high-dimensional regularity hypothesis consistent with both the standard NTK initialization theory and empirical observations of Neural-Collapse-like behaviour in language models~\citep{wu2024linguistic}.}

\revise{
\begin{assumption}[Approximate isotropy of the backbone Jacobian Gram]
\label{ass:iso-b-app}
For any given input pair \((\mathbf{x},\mathbf{x}')\), the backbone Jacobian Gram-type matrix admits an approximate decomposition
\begin{equation}
    B(\mathbf{x},\mathbf{x}')
    \approx
    \beta(\mathbf{x},\mathbf{x}')\,I_d
    +
    U(\mathbf{x},\mathbf{x}')\,\Lambda(\mathbf{x},\mathbf{x}')\,U(\mathbf{x},\mathbf{x}')^\top,
    \label{eq:B-decomposition}
\end{equation}
where \(\beta(\mathbf{x},x\mathbf{x}) > 0\), \(U(\mathbf{x},\mathbf{x}') \in \mathbb{R}^{d \times r}\) has orthonormal columns with \(r \ll d\), and \(\Lambda(\mathbf{x},\mathbf{x}') \in \mathbb{R}^{r \times r}\) is diagonal with uniformly bounded entries.\footnote{Formally, the decomposition is best viewed as describing the spectrum of the symmetric part \(\frac12(B+B^\top)\), and we drop the symmetrization from the notation for readability.}
\end{assumption}}

\revise{
\paragraph{Justification of Assumption~\ref{ass:iso-b-app}.}
At random initialization and in wide networks, there are analyses of Jacobians that show that their singular values concentrate in a bounded interval, so the Jacobian behaves approximately like a scaled orthogonal transform \citep{Pennington2017ResurrectingTS,Dadoun2025OnTS}. Empirical studies~\citep{Wang2016AnalysisOD} further suggest a spectrum with a few large singular values and a bulk of smaller ones, well approximated by a low-rank perturbation of an approximately isotropic bulk.}

\revise{In the context of LLMs, there are several independent spectral studies that strongly support such a `bulk + spikes' picture. First, plenty of work on activation geometry in LLMs shows that contextual representations are dominated by a small number of `outlier dimensions' with very large variance, while the remaining coordinates form a comparatively flat bulk \citep{haemmerl-etal-2023-exploring,rudman2023outlier,rudman2024stable}. From the perspective of covariance, this is precisely a low-rank anisotropic component on top of an approximately isotropic background in the hidden-state covariance.} 
\revise{Second, recent Hessian-spectral analyses have reported an analogous structure: a small number of large eigenvalues together with a heavy-tailed or nearly-zero bulk, and a robust low-dimensional subspace that captures most of the curvature \citep{tang2025hessian,granziol2025hessformer}. Since the loss Hessian can be written schematically as a weighted sum of Jacobian Gram matrices, these observations strongly suggest that $J_\psi(\mathbf{x})J_\psi(\mathbf{x})^\top$ itself exhibits a dominant low-rank component plus an approximately isotropic bulk in high dimensions. }
\revise{Finally, \citet{tang2024transformeralignment} directly studies the singular value decompositions of residual block Jacobians in a range of pretrained LLMs and finds that the top singular vectors align across depth, indicating that a small set of preferred directions in representation space governs the dynamics while the remaining directions behave more uniformly.}

\revise{Taken together, these results motivate modelling $B(\mathbf{x},\mathbf{x}')$ as in Assumption~\ref{ass:iso-b-app}: a bulk term $\beta(\mathbf{x},\mathbf{x}') I_d$ capturing the high-dimensional isotropic component and a low-rank term $U(\mathbf{x},\mathbf{x}') \Lambda(\mathbf{x},\mathbf{x}') U(\mathbf{x},\mathbf{x}')^\top$ capturing a few task-specific directions. As with Assumption~\ref{ass:iso-head-app}, we do not claim this decomposition to hold exactly for any given checkpoint. Rather, we view it as a high-dimensional regularity hypothesis consistent with current empirical evidence for LLMs. Our bounds only depend on the existence of a bounded-rank anisotropic component and an approximately isotropic bulk, so moderate deviations from this idealized structure merely affect constant factors and not the asymptotic $d$-scaling.}

\subsection{Error bound of cross-output NTK terms }
\label{app:bounds}

\revise{We now quantify the effect of cross-output terms in \(\Delta(\mathbf{x},\mathbf{x}')\) under Assumptions~\ref{ass:iso-head-app} and \ref{ass:iso-b-app}. For clarity, we begin with the purely isotropic backbone model \(B=\beta I_d\), which recovers the \(O(d^{-1/2})\) as proved in \citet{Mohamadi2022AFW}, and then treat the general low-rank case.}

\revise{\paragraph{Isotropic backbone (\(B = \beta I_d\)).}
\label{app:purely-iso}
Assume \(B(\mathbf{x},\mathbf{x}') = \beta(\mathbf{x},\mathbf{x}') I_d\), we suppress \((\mathbf{x},\mathbf{x}')\) for brevity to obtain \(B=\beta I_d\), \(\beta>0\). Define $\langle g_k(\mathbf{x}), g_m(\mathbf{x}') \rangle$ as $X_{km}$, we have
\begin{equation}
    X_{km}
    = w_k^\top B w_m
    = \beta\,w_k^\top w_m,
    \qquad
    X_{kk} = w_k^\top B w_k
    = \beta\,\|w_k\|^2.
\end{equation}
For \(k \neq m\), we have \(\mathbb{E}[X_{km}] = 0\) and, using Assumption~\ref{ass:iso-head-app},
\begin{align}
    \mathbb{E}[X_{km}^2]
    &= \beta^2
       \,\mathbb{E}\big[(w_k^\top w_m)^2\big]
       \nonumber\\
    &= \beta^2
       \sum_{i=1}^d
       \mathbb{E}[w_{k,i}^2]\,\mathbb{E}[w_{m,i}^2]
       \nonumber\\
    &= \beta^2\,d
       \Big(\frac{\sigma_w^2}{d}\Big)^2
       = \beta^2\,\frac{\sigma_w^4}{d}.
\end{align}
Thus
\begin{equation}
    \mathrm{std}(X_{km})
    = \sqrt{\mathbb{E}[X_{km}^2]}
    = \beta\,\frac{\sigma_w^2}{\sqrt{d}}.
\end{equation}
On the other hand,
\begin{equation}
    \mathbb{E}[X_{kk}]
    = \beta\,\mathbb{E}\|w_k\|^2
    = \beta\,\frac{\sigma_w^2}{d}\,\operatorname{Tr}(I_d)
    = \beta\,\sigma_w^2.
\end{equation}
Hence, a single cross-output term is smaller than a typical diagonal backbone term by a factor
\begin{equation}
    \frac{\mathrm{std}(X_{km})}{\mathbb{E}[X_{kk}]}
    = d^{-1/2}.
    \label{eq:single-cross-vs-diag}
\end{equation}}

\revise{Now consider the sum of all cross-output terms}
\revise{\begin{equation}
    \Delta
    = \sum_{k \neq m} X_{km}.
\end{equation}}
\revise{Under Assumption~\ref{ass:iso-head-app}, the family \(\{X_{km}\}_{k\neq m}\) has zero mean and weak dependencies. A variance calculation yields}
\revise{\begin{equation}
    \mathrm{Var}(\Delta)
    \;\lesssim\;
    D(D-1)\,\mathrm{Var}(X_{km})
    = D(D-1)\,\beta^2\,\frac{\sigma_w^4}{d},
\end{equation}}
\revise{where \(\lesssim\) hides constants arising from the covariance terms. Hence}
\revise{\begin{equation}
    \mathrm{std}(\Delta)
    \;\lesssim\;
    \beta\,\sigma_w^2\,\sqrt{\frac{D(D-1)}{d}}.
\end{equation}}
\revise{The backbone contribution to the standard NTK can be represented as}
\revise{\begin{equation}
    S
    = \sum_{k=1}^D X_{kk},
    \qquad
    \mathbb{E}[S] = D\,\beta\,\sigma_w^2.
\end{equation}}
\revise{Combining yields}
\revise{\begin{equation}
    \frac{\mathrm{std}(\Delta)}{\mathbb{E}[S]}
    \;\lesssim\;
    \sqrt{\frac{D-1}{D}}\cdot d^{-1/2}
    = O\!\big(d^{-1/2}\big),
    \qquad d\to\infty.
    \label{eq:bulk-relative-scaling}
\end{equation}}

\revise{Note that the LM head term \(D\langle h(\mathbf{x}),h(\mathbf{x}')\rangle\) is identical in \(\Theta\) and \(\widetilde{\Theta}\), so it does not contribute to \(\Delta\). It does appear in the denominator \(\Theta(\mathbf{x},\mathbf{x}')\), and thus can only reduce the relative error \(|\Delta|/\Theta\).}


\revise{\paragraph{Backbone with low-rank structure (\(B = \beta I_d + U\Lambda U^\top\)).}
\label{app:low-rank}
We now treat the general case of Assumption~\ref{ass:iso-b-app}, where we also suppress \((\mathbf{x},\mathbf{x}')\) for brevity as}
\revise{\begin{equation}
    B
    = \beta I_d + U\Lambda U^\top,
\end{equation}
and decompose
\begin{equation}
    w_k^\top B w_m
    =
    \underbrace{\beta\,w_k^\top w_m}_{X_{km}^{\text{bulk}}}
    +
    \underbrace{(U^\top w_k)^\top \Lambda (U^\top w_m)}_{X_{km}^{\text{low}}}.
\end{equation}
The bulk terms \(X_{km}^{\text{bulk}}\) are exactly as discussed above and yield an \(O_{\mathbb{P}}(d^{-1/2})\) relative error. It remains to bound the low-rank contribution.}

\revise{Let \(u_k = U^\top w_k \in \mathbb{R}^r\). Since \(U\) has orthonormal columns and \(w_k\) are isotropic Gaussian under Assumption~\ref{ass:iso-head-app}, we have
\begin{equation}
    u_k \sim \mathcal{N}\!\big(0,(\sigma_w^2/d) I_r\big),
    \qquad k = 1,\dots,D,
\end{equation}
independently. Then
\begin{equation}
    X_{km}^{\text{low}}
    = u_k^\top \Lambda u_m,
    \qquad k \neq m.
\end{equation}
For \(k \neq m\), \(\mathbb{E}[X_{km}^{\text{low}}]=0\) and
\begin{align}
    \mathbb{E}\big[(X_{km}^{\text{low}})^2\big]
    &= \sum_{i=1}^r \lambda_i^2\,
       \mathbb{E}[u_{k,i}^2]\,\mathbb{E}[u_{m,i}^2]
       \nonumber\\
    &= \sum_{i=1}^r \lambda_i^2
       \Big(\frac{\sigma_w^2}{d}\Big)^2
       = \frac{\sigma_w^4}{d^2}\,\|\Lambda\|_F^2,
\end{align}
where \(\lambda_i\) are the diagonal entries of \(\Lambda\) and \(\|\Lambda\|_F\) is its Frobenius norm. Thus
\begin{equation}
    \mathrm{std}(X_{km}^{\text{low}})
    = \frac{\sigma_w^2}{d}\,\|\Lambda\|_F
    \le \frac{\sigma_w^2}{d}\,\sqrt{r}\,\|\Lambda\|_2,
\end{equation}
where \(\|\Lambda\|_2\) is the spectral norm.}

\revise{Similarly, the diagonal low-rank contribution is
\begin{equation}
    Y_k^{\text{low}}
    = u_k^\top \Lambda u_k,
    \qquad
    \mathbb{E}[Y_k^{\text{low}}]
    = \frac{\sigma_w^2}{d}\,\operatorname{Tr}(\Lambda).
\end{equation}
For a single pair \((k,m)\), the ratio
\[
\frac{\mathrm{std}(X_{km}^{\text{low}})}{\mathbb{E}[Y_k^{\text{low}}]}
=
\frac{\|\Lambda\|_F}{\operatorname{Tr}(\Lambda)}
\le
\frac{\sqrt{r}\,\|\Lambda\|_2}{\operatorname{Tr}(\Lambda)}
\]
does not depend on \(d\). Under the mild condition that \(\operatorname{Tr}(\Lambda)\) and \(\sqrt{r}\|\Lambda\|_2\) are of the same order (no extreme anisotropy within the low-rank subspace), this ratio is \(O(1)\). Thus low-rank cross-terms are not individually suppressed by \(d^{-1/2}\). Instead, their high-dimensional smallness comes from the dominance of the isotropic bulk when we normalise by the total backbone NTK.}

\revise{
Therefore, we aggregate over all \((k,m)\) and combining bulk and low-rank parts and obtain 
\begin{equation}
    \frac{\big|\widetilde{\Theta}(\mathbf{x},\mathbf{x}') - \Theta(\mathbf{x},\mathbf{x}')\big|}
         {|\Theta(\mathbf{x},\mathbf{x}')|}
    =
    O_{\mathbb{P}}\!\Big(d^{-1/2} + \frac{\sqrt{r}}{d}\Big),
    \qquad d \to \infty.
\end{equation}
}

\section{Empirical NTK Regression}
\label{apx:entk_evaluation}

\subsection{Kernel ridge regression}
Given a model $f(\cdot; \theta)$ with NTK $\Theta(\cdot, \cdot; \theta) \in \mathbb
R^{n \times n}$ on a dataset $\mathcal{X}=\{\mathbf{x}_i,\mathbf{y}_i\}, i=1\dots n$, we can perform standard kernel ridge regression (KRR) to predict the unseen test sample. For brevity, the model parameter $\theta_t$ is omitted in this part. The objective of KRR is to find a function $g$ within the Reproducing Kernel Hilbert Space (RKHS)~\citep{berlinet2011rkhs} $\mathcal{H}_\Theta$ that minimizes a regularized squared error loss, which is formulated as
\begin{equation}
    \min_{g\in \mathcal{H}_\Theta} \frac{1}{n} \sum\limits_{i=1}^n \left( g(\mathbf{x}_i) - \mathbf{y}_i \right)^2 + \lambda ||g||_{\mathcal{H}_\Theta}^2,
    \label{eq:krr}
\end{equation}
where $\lambda > 0$ is a user-defined regularization parameter. By the Representer Theorem, the optimal solution $g^*$ can be expressed as a linear combination of the kernel functions evaluated at the training data points:
\begin{equation}
    g^* (\mathbf{x}) = \sum\limits_{i=1}^n \alpha_i \Theta(\mathbf{x}, \mathbf{x}_i).
\end{equation}
Substituting this representation into Equation~(\ref{eq:krr}) yields a convex optimization problem with respect to the coefficient vector $\alpha = [\alpha_1,\dots,\alpha_n]^\top$. Therefore, Equation~(\ref{eq:krr}) can be rewritten as
\begin{equation}
    \min_\alpha \frac{1}{n} ||\Theta (\cdot, \cdot)\alpha - \mathbf{y}||^2 + \lambda \alpha^\top \Theta \alpha.
\end{equation}
By setting the gradient with respect to $\alpha$ to zero, we can obtain a closed-form solution as
\begin{equation}
    \alpha = (\Theta (\cdot, \cdot)+n\lambda I)^{-1} \mathbf{y},
\end{equation}
where $I\in \mathbb{R}^{n \times n}$ is the identity matrix.
With the optimal coefficient vector $\alpha$ determined, we can obtain prediction $\hat{\mathbf{y}}_{\rm test}$ for a new, unseen test sample $\mathbf{x}_{\rm test}$ by applying the learned linear combination to the kernel values between the test sample and the entire training set, which is formulated as
\begin{equation}
    \hat{\mathbf{y}}_{\rm test} = \Theta_{\rm test} (\mathbf{x}_{\rm test}, \cdot) \alpha,
\end{equation}
where $\Theta_{\rm test}  (\mathbf{x}_{\rm test}, \cdot)\in \mathbb{R}^{1 \times n}$ is a vector containing the empirical NTK values between $\mathbf{x}_{\rm test}$ and each of the training samples.

\subsection{Implementation}

Due to the intricate nature of LLM outputs, which involve generating a step-by-step reasoning path followed by a final answer, directly applying a kernel regression model to this process is challenging. To address this, we simplify the problem by reframing it as a classification task. Specifically, for each task, we align the label sets of the training and test data to ensure class consistency.
For instance, in the FPB task, we unify the original seven sentiment classes (strong negative, moderately negative, mildly negative, neutral, mildly positive, moderately positive, strong positive) from part of the training set into three broader categories: negative, neutral, and positive. This re-labeling allows us to align the classes with the test set, enabling a direct comparison of performance.

Our experiments are conducted by training the model on 1,000 domain samples and evaluating its performance on the corresponding test sets. We compare the classification accuracy of our regression-based approach against standard fine-tuning methods. The optimal regularization parameter $\lambda$ for our regression model is determined through a hyperparameter search over the range [1e-5, 1e-4, 3e-4, 1e-3, 3e-3, 1e-2, 1e-1].
It is important to note that this experimental setup could not be applied to the medical question-answering tasks (MedMCQA and MMLU-Med), as their answer options for individual instances are not consistent across the datasets. Consequently, direct label alignment for these tasks was not feasible. Therefore, we did not report results for these specific tasks in Table~\ref{tab:entk}.

\section{Experimental Details}
\label{apx:experimental_details}

\subsection{Tasks}
\label{apx:details_tasks}
\paragraph{MedMCQA.} Medical Multiple-Choice Question Answering (MedMCQA)~\citep{pal2022medmcqa} is a benchmark for medical question answering, designed to address real-world medical entrance exam questions. It comprises 2.4K healthcare topics across 21 medical disciplines, with each question featuring four answer choices. This task necessitates a sophisticated integration of semantic comprehension, extensive factual recall, and advanced logical and causal reasoning. We train on UltraMedical~\citep{zhang2024ultramedical} and evaluate on the MedMCQA test set (4,183 samples).

\paragraph{MMLU-Med.} Massive Multitask  Language Understanding (MMLU)~\citep{hendrycks2021mmlu} serves as a multitask language understanding benchmark consisting of 57 tasks. In this paper, we exclusively adopt its medical-related subset (1,089 samples), MMLU-Med, which is sourced from various public and academic datasets, covering a wide range of medical subfields from clinical medicine to professional ethics. Distinct from MedMCQA, this task is designed to assess the generalization and robust knowledge of a pre-trained model rather than its fine-tuned performance on a narrow domain. We similarly leverage the UltraMedical~\citep{zhang2024ultramedical} corpus for training. More broadly, recent efforts have also begun to build domain-specific foundations in healthcare beyond text QA, such as body-sound diagnostics resources \citep{wang2024auscultabase,zhao2024hsdreport,zhao2025auscmllm}.

\paragraph{FPB \& TFNS.} Financial PhraseBank (FPB)~\citep{malo2014fpb} and Twitter Financial News Sentiment (TFNS)~\citep{magic2022tfns} constitute benchmarks for financial sentiment analysis, with FPB providing sentence-level annotations and TFNS covering Twitter-based financial news. Models must classify sentiment into positive, neutral, or negative, requiring robust semantic understanding of context-dependent financial language and fine-grained classification across formal documents and informal social media. We employ their sentiment datasets for instruction tuning~\citep{liu2023fingpt} and evaluate on a random sample of 1,000 test instances.

\paragraph{Headline}~\citep{sinha2021headline} is a fine-grained classification benchmark for commodity market news headlines. It comprises 11,412 human-annotated gold-related headlines spanning from 2000 to 2019, requiring models to capture nuanced aspects including price direction, temporal context, and asset comparisons. Unlike sentiment analysis, Headline necessitates precise classification of price movements through contextual cues and distinction between historical and forward-looking trends, which enable the extraction of actionable insights for investors and policymakers. Considering the large quantity of the test set, we evaluate on a randomly sampled subset of 1,000 instances.

\paragraph{ContractNLI}~\citep{koreeda2021contractnli} is a benchmark for evaluating a model's ability to perform document-level natural language inference (NLI) on contract texts to address the significant time and cost associated with manual contract review. The model is given a contract and a set of hypotheses, and for each hypothesis, it must classify whether it is entailed, contradictory, or neutral to the contract. This task is particularly challenging due to the unique linguistic properties of legal documents and the complexity of document-level reasoning. Similarly, we adopted a randomly sampled test set of 1,000 data points.

\paragraph{PrivacyQA}~\citep{ravichander2019privacyqa} assesses a model's ability to answer questions based on a privacy policy document. This task is framed as a question-answering problem where a model is given a question about a privacy policy and must provide a concise, factual answer. Building on this, it requires the model to not only read and understand a lengthy, complex legal document but also to locate specific information and synthesize a correct response. We use a random sample of 1,000 examples for held-out evaluation.

\paragraph{RSDD.} Reddit Self-reported Depression Diagnosis (RSDD)~\citep{yates2017depression} task is a benchmark for evaluating a model's capability to identify users with depression from their online forum language alone. This task is framed as a user-level binary classification problem, where a system must determine if a user, based on the corresponding posting history, has a self-reported depression diagnosis or belongs to a matched control group. Beyond simple keyword matching, this task requires the model to capture nuanced linguistic and socio-linguistic patterns associated with mental health. As the dataset lacks predefined splits, we evaluate on a random sample of 1,000 instances and use the remainder for training.

\subsection{Baselines}
\label{apx:baselines}

We now detail the implementation of the data selection baselines. For random selection, a single auxiliary dataset is sampled once from the full candidate pool and shared across all tasks to ensure consistency. \revise{For embedding-based and gradient-based selection, we average the hidden states of the last layer or loss gradients, and each candidate is scored by its average similarity to all domain samples to select the top $N$ samples.} For other task-specific selection approaches like DSIR\footnote{\url{https://github.com/p-lambda/dsir}}~\citep{xie2023dsir}, we consider the domain dataset $\mathcal{D}$ as the high-quality targets and identify similar data from the candidate pool $\mathcal{C}$. For LESS\footnote{\url{https://github.com/princeton-nlp/LESS}}~\citep{xia2024less}, due to the large scale of candidate pool $\mathcal{C}$, we perform warm-up training on 0.1\% ($\approx$2K samples) of the candidate data. Following the original setup, Adam gradients are used for candidates and SGD gradients for domain samples. For TSDS\footnote{\url{https://github.com/ZifanL/TSDS}}~\citep{liu2024tsds}, we adopt the recommended hyperparameters: scaling constant $C=5.0$, diversity-alignment trade-off coefficient $\alpha =0.5$, and prefetching/kernel density estimation neighborhood sizes 
$K=5000$ and $K=1000$, respectively.

\subsection{Training}
\label{apx:details_training}

We report the LoRA-based instruction tuning configuration in Table~\ref{tab:training_details}. The same training setup is consistently applied across all data combinations generated via distinct selection strategies to ensure fair comparison. This LoRA configuration is also shared with the NTK selection stage.

\begin{table}[H]
    \centering
    \caption{Training details of LoRA-based instruction tuning.}
    \label{tab:training_details}
    \begin{tabular}{ll}
        \toprule
        Parameter Name & Value \\
        \midrule
        LoRA $\alpha$ & 32 \\
        LoRA $r$ & 16 \\
        LoRA dropout & 0.05 \\
        LoRA target modules & q\_proj, k\_proj, v\_proj, o\_proj, up\_proj, down\_proj, gate\_proj \\
        Epoch & 3 \\
        Warm-up ratio & 0.03 \\
        Weight decay & 0.0 \\
        Learning rate & 5e-4 \\
        Learning rate schedule & Cosine \\
        Max sequence length & 3072 \\
        Batch size per device & 8 \\
        Gradient accumulation steps & 8 \\
        Platform & 4 NVIDIA A100 Tensor Core GPU \\
        \bottomrule
    \end{tabular}
\end{table}

\subsection{Evaluation}
\label{apx:details_evaluation}

For evaluation, we employ Chain-of-Thought (CoT) evaluation for each task. This approach first generates the reasoning process and subsequently produces the final answer by using the question concatenated with the historical reasoning path as input. To facilitate a more efficient evaluation, we leverage vLLM~\citep{kwon2023vllm} to accelerate the generation process. Additionally, drawing upon the observations of \cite{jiang2024taia} to enhance tolerance to data mismatches, we adopt the TAIA~\citep{jiang2024taia} evaluation strategy for medical and legal tasks when using mixed training with auxiliary data.


\section{More Experimental Results}
\subsection{Empirical Results of NTK-like Behavior}
\label{apx:ntk_like_results}
\subsubsection{NTK-like Behavior in More Models and Tasks}
\revise{
To validate the NTK-like behavior in more tasks with multiple independent runs, we calculate the cosine similarity of the obtained NTK matrices derived by different epochs through the training, and report the mean values and standard deviation. As shown in Table~\ref{tab:ntk_like_mean_std}, in most cases, the model exhibits a high similarity of over 0.99, which demonstrates that NTK-like behavior holds for diverse models and tasks during finetuning.
}
\begin{table}[]
\caption{Cosine similarity of NTK matrices between Epoch 0 (the initial state) and Epoch 1$\sim$10, which are calculated by \textsc{Llama3-8B-Instruct} and \textsc{Qwen3-8B} for financial, medical, legal, and psychological domains with 3 independent runs.}
\resizebox{\linewidth}{!}{
\begin{tabular}{lcccccccccc}
\toprule
Domain & \textbf{Epoch 1} & \textbf{Epoch 2} & \textbf{Epoch 3} & \textbf{Epoch 4} & \textbf{Epoch 5} & \textbf{Epoch 6} & \textbf{Epoch 7} & \textbf{Epoch 8} & \textbf{Epoch 9} & \textbf{Epoch 10} \\
\midrule
\multicolumn{11}{c}{\textsc{Llama3-8B-Instruct}} \\
\midrule
Financial sentiment analysis & 0.9793±0.0281 & 0.9992±0.0001 & 0.9991±0.0001 & 0.9992±0.0001 & 0.9992±0.0001 & 0.9992±0.0000 & 0.9992±0.0001 & 0.9991±0.0000 & 0.9991±0.0000 & 0.9991±0.0000 \\
 Medical QA & 0.9991±0.0001 & 0.9984±0.0000 & 0.9982±0.0001 & 0.9975±0.0003 & 0.9965±0.0006 & 0.9954±0.0008 & 0.9945±0.0009 & 0.9940±0.0009 & 0.9938±0.0010 & 0.9938±0.0010 \\
 Legal NLI & 0.9997±0.0000 & 0.9995±0.0001 & 0.9996±0.0001 & 0.9995±0.0000 & 0.9995±0.0000 & 0.9995±0.0000 & 0.9995±0.0000 & 0.9995±0.0001 & 0.9994±0.0001 & 0.9994±0.0001 \\
 Depression diagnosis & 0.9996±0.0001 & 0.9996±0.0001 & 0.9989±0.0000 & 0.9990±0.0000 & 0.9990±0.0001 & 0.9987±0.0004 & 0.9985±0.0006 & 0.9984±0.0006 & 0.9983±0.0006 & 0.9983±0.0006 \\
\midrule \midrule
\multicolumn{11}{c}{\textsc{Qwen3-8B}} \\
\midrule
Financial sentiment analysis & 0.9991±0.0001 & 0.9985±0.0001 & 0.9982±0.0001 & 0.9981±0.0001 & 0.9979±0.0003 & 0.9977±0.0003 & 0.9976±0.0002 & 0.9974±0.0003 & 0.9974±0.0002 & 0.9973±0.0003 \\
Medical QA & 0.9992±0.0001 & 0.9990±0.0001 & 0.9986±0.0001 & 0.9980±0.0002 & 0.9970±0.0004 & 0.9954±0.0004 & 0.9937±0.0008 & 0.9917±0.0011 & 0.9922±0.0008 & 0.9789±0.0196 \\
Legal NLI & 0.9994±0.0000 & 0.9994±0.0001 & 0.9993±0.0001 & 0.9993±0.0001 & 0.9993±0.0001 & 0.9992±0.0001 & 0.9991±0.0000 & 0.9991±0.0001 & 0.9991±0.0001 & 0.9991±0.0001 \\
Depression diagnosis & 0.9995±0.0001 & 0.9988±0.0002 & 0.9977±0.0004 & 0.9978±0.0004 & 0.9982±0.0002 & 0.9981±0.0001 & 0.9980±0.0001 & 0.9978±0.0001 & 0.9978±0.0001 & 0.9978±0.0001 \\ 
\bottomrule
\end{tabular}
}
\label{tab:ntk_like_mean_std}
\end{table}
\subsubsection{NTK-like Behavior Within More Finetuning Strategies}
\revise{
Beyond the original experiments on \textsc{Llama3-8B-Instruct} and \textsc{Qwen-8B} with LoRA, we add results on two additional architectures, \textsc{DeepSeek-R1-1.5B}~\citep{guo2025deepseek} and Hermes3-8B~\citep{quesnellehermes}, and we evaluate three fine-tuning strategies for all models: LoRA, IA3~\cite{liu2022ia3}, and full-parameter fine-tuning. For each configuration, we compute the NTK matrix on a fixed input set with 100 samples from the financial sentiment analysis domain at initialization and after the N-th epoch, and report the cosine similarity between the two. As shown in Table~\ref{tab:ntk_like_finetune}, the similarities remain consistently high across all four models and three strategies (typically $\geq$0.99, and $\geq$0.985 even in the most challenging full fine-tuning cases). These results indicate that NTK-like stability is not specific to a particular model or to LoRA, but appears robust across different architectures and both parameter-efficient and full-parameter fine-tuning in the regimes we study. We will include these experiments and a brief discussion in the revised version as soon as possible.
}

\begin{table}[]
\caption{Cosine similarity of NTK matrices between Epoch 0 (the initial state) and Epoch 1$\sim$10, which are calculated by \textsc{Llama3-8B-Instruct}, \textsc{Qwen3-8B}, \textsc{DeepSeek-R1-1.5B}, \textsc{Hermes3-8B} on the financial sentiment analysis domain.}
\resizebox{\linewidth}{!}{
\begin{tabular}{lcccccccccc}
\toprule
\multicolumn{1}{c}{\textbf{Strategy}} & \multicolumn{1}{c}{\textbf{Epoch 1}} & \multicolumn{1}{c}{\textbf{Epoch 2}} & \multicolumn{1}{c}{\textbf{Epoch 3}} & \multicolumn{1}{c}{\textbf{Epoch 4}} & \multicolumn{1}{c}{\textbf{Epoch 5}} & \multicolumn{1}{c}{\textbf{Epoch 6}} & \multicolumn{1}{c}{\textbf{Epoch 7}} & \multicolumn{1}{c}{\textbf{Epoch 8}} & \multicolumn{1}{c}{\textbf{Epoch 9}} & \multicolumn{1}{c}{\textbf{Epoch 10}} \\
\midrule
\midrule
\multicolumn{11}{c}{\textsc{Llama3-8B-Instruct}} \\
\midrule
LoRA & 0.9594 & 0.9993 & 0.9992 & 0.9993 & 0.9993 & 0.9992 & 0.9992 & 0.9991 & 0.9991 & 0.9991 \\
IA3 & 1.0000 & 0.9997 & 0.9998 & 0.9982 & 0.9988 & 0.9988 & 0.9988 & 0.9988 & 0.9988 & 0.9988 \\
Full & 0.9971 & 0.9967 & 0.9970 & 0.9965 & 0.9963 & 0.9961 & 0.9962 & 0.9961 & 0.9961 & 0.9961 \\
\midrule
\midrule
\multicolumn{11}{c}{\textsc{Qwen-8B}} \\
\midrule
LoRA & 0.9991 & 0.9986 & 0.9983 & 0.9982 & 0.9981 & 0.9979 & 0.9977 & 0.9976 & 0.9975 & 0.9975 \\
IA3 & 1.0000 & 1.0000 & 0.9998 & 1.0000 & 1.0000 & 1.0000 & 1.0000 & 1.0000 & 1.0000 & 1.0000 \\
Full & 0.9878 & 0.9879 & 0.9873 & 0.9865 & 0.9862 & 0.9861 & 0.9858 & 0.9855 & 0.9853 & 0.9853 \\
\midrule
\midrule
\multicolumn{11}{c}{\textsc{DeepSeek-R1-1.5B}} \\
\midrule
LoRA & 0.9995 & 0.9992 & 0.9991 & 0.9991 & 0.9991 & 0.9990 & 0.9990 & 0.9990 & 0.9990 & 0.9989 \\
IA3 & 1.0000 & 1.0000 & 1.0000 & 1.0000 & 1.0000 & 0.9999 & 0.9999 & 0.9999 & 0.9999 & 0.9999 \\
Full & 0.9989 & 0.9983 & 0.9983 & 0.9982 & 0.9981 & 0.9981 & 0.9980 & 0.9980 & 0.9979 & 0.9979 \\
\midrule
\midrule
\multicolumn{11}{c}{\textsc{Hermes3-8B}} \\
\midrule
LoRA & 0.9918 & 0.9923 & 0.9924 & 0.9923 & 0.9923 & 0.9921 & 0.9917 & 0.9917 & 0.9913 & 0.9915 \\
IA3 & 1.0000 & 0.9995 & 0.9985 & 0.9994 & 0.9993 & 0.9992 & 0.9990 & 0.9991 & 0.9992 & 0.9992 \\
Full & 0.9899 & 0.9895 & 0.9905 & 0.9897 & 0.9892 & 0.9888 & 0.9888 & 0.9883 & 0.9883 & 0.9883 \\
\bottomrule
\end{tabular}
}
\label{tab:ntk_like_finetune}
\end{table}

\subsubsection{NTK-like Behavior With Various Training Strategies}
\revise{
To probe the limits of this regime, we extend training to 50 epochs and vary the learning rate and batch size, which are set as 2e-4 and 32 per device as detailed in Appendix~\ref{apx:details_training}. Table~\ref{tab:ntk_like_training_setting} shows the NTK cosine similarity under these settings. When the learning rate is set to an unrealistically large value (2e-2), training diverges, and the NTK becomes NaN. For all reasonable hyperparameters (e.g., lr $\in \{\text{2e-4, 2e-3}\}$, batch size $\in \{1, 32\}$), the NTK remains highly stable even up to 50 epochs, approximately from 0.998 to 0.999.
Intuitively, for well-pretrained LLMs fine-tuned with moderate learning rates and standard parameter-efficient-finetuning (PEFT) or full finetuning recipes, the adaptation behaves as a relatively small perturbation of the pretrained parameters, so the NTK does not rotate dramatically.
}

\begin{table}[]
\caption{Cosine similarity of NTK matrices between Epoch 0 (the initial state) and Epoch 1$\sim$50 with an interval of 5, which are calculated by \textsc{Llama3-8B-Instruct} with various hyperparameter settings on the financial sentiment analysis domain.}
\resizebox{\linewidth}{!}{
\begin{tabular}{cllllllllllll}
\toprule
\textbf{Lr} & \multicolumn{1}{c}{\textbf{Batch size}} & \multicolumn{1}{c}{\textbf{Epoch 1}} & \multicolumn{1}{c}{\textbf{Epoch 5}} & \multicolumn{1}{c}{\textbf{Epoch 10}} & \multicolumn{1}{c}{\textbf{N=15}} & \multicolumn{1}{c}{\textbf{Epoch 20}} & \multicolumn{1}{c}{\textbf{Epoch 25}} & \multicolumn{1}{c}{\textbf{Epoch 30}} & \multicolumn{1}{c}{\textbf{Epoch 35}} & \multicolumn{1}{c}{\textbf{Epoch 40}} & \multicolumn{1}{c}{\textbf{Epoch 45}} & \multicolumn{1}{c}{\textbf{Epoch 50}} \\
\midrule
\multicolumn{1}{l}{\textbf{2e-4}} & 32 & 0.9997 & 0.9995 & 0.9984 & 0.9986 & 0.9988 & 0.9988 & 0.9988 & 0.9989 & 0.9989 & 0.9989 & 0.9989 \\
\multicolumn{1}{l}{\textbf{2e-2}} & 32 & NaN & NaN & NaN & NaN & NaN & NaN & NaN & NaN & NaN & NaN & NaN \\
\multicolumn{1}{l}{\textbf{2e-3}} & 32 & 0.9993 & 0.9991 & 0.9989 & 0.9985 & 0.9990 & 0.9991 & 0.9989 & 0.9991 & 0.9990 & 0.9983 & 0.9983 \\
\multicolumn{1}{l}{\textbf{2e-4}} & 1 & 0.9995 & 0.9987 & 0.9976 & 0.9977 & 0.9982 & 0.9984 & 0.9988 & 0.9987 & 0.9987 & 0.9987 & 0.9987 \\
\bottomrule
\end{tabular}
}
\label{tab:ntk_like_training_setting}
\end{table}

\subsection{Empirical Results of Jacobian-free Approximation}
\label{apx:jacobian_free_results}
\subsubsection{Gradient Similarity of Cross Output}
\revise{
This part visualizes how the gradients of cross channels correlate with each other. Considering an input $x$, the cross output gradient similarity between channel $k$ and $m$ is calculated as $ \langle\nabla f_k(x; \theta), \nabla f_m(x; \theta)\rangle$. Figure~\ref{fig:jacobian_free_gradient_similarity} presents the gradient cosine similarity distribution of cross output on 100 randomly selected samples. Notably, most of the gradient similarities are below 0.1 with an average of around 0.02, supporting the practical validity of the near-orthogonality assumption underlying our Jacobian-free approximation as described in \S~\ref{subsec:jacobian_free_approximation}.
}

\begin{figure}
    \begin{subfigure}[b]{0.33\linewidth}
        \includegraphics[width=\linewidth]{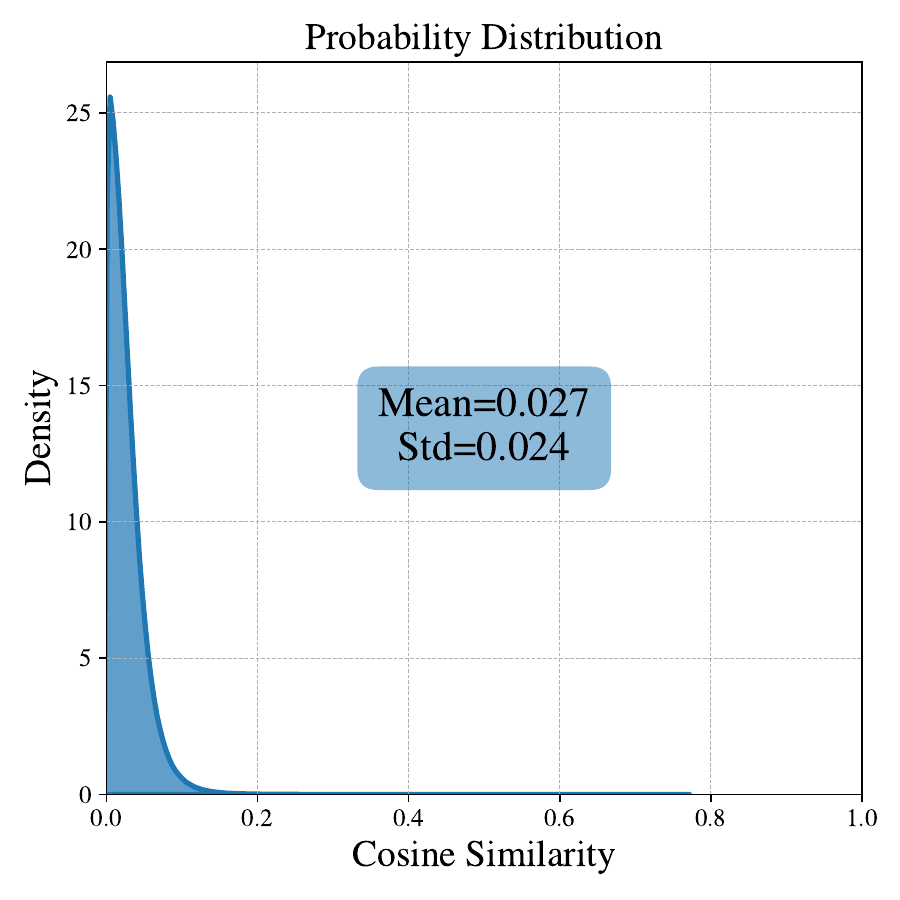}
        \caption{Medical QA}
    \end{subfigure}
    \begin{subfigure}[b]{0.33\linewidth}
        \includegraphics[width=\linewidth]{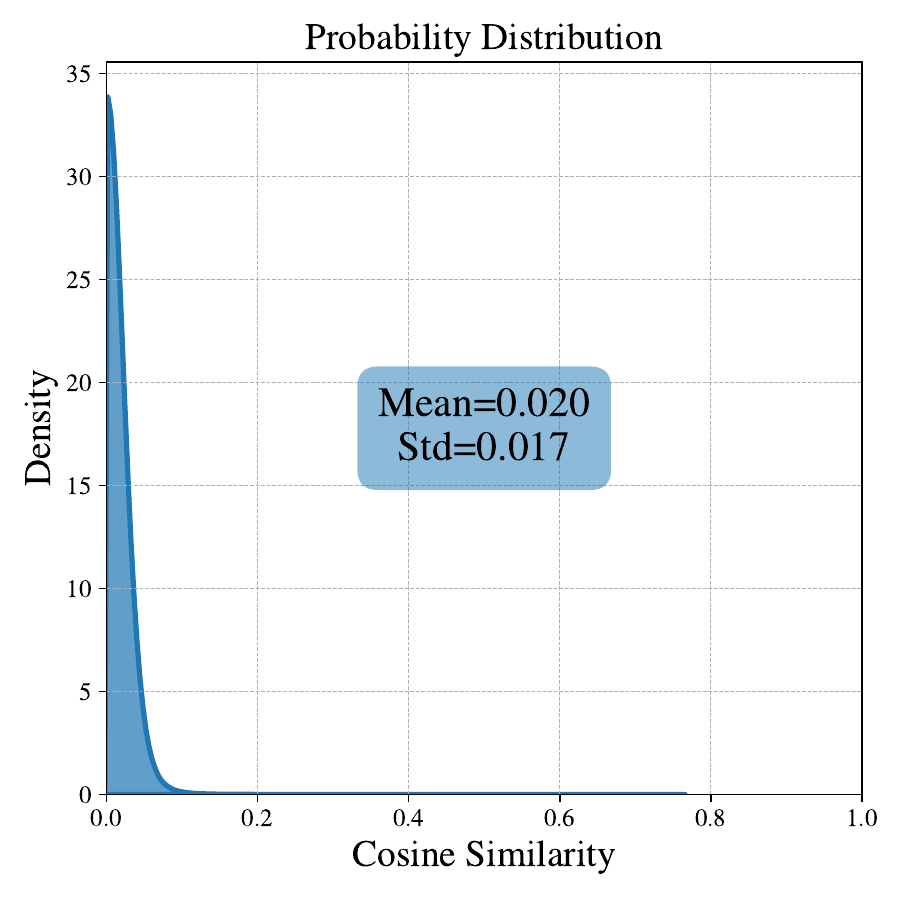}
        \caption{Financial sentiment analysis}
    \end{subfigure}
    \begin{subfigure}[b]{0.33\linewidth}
        \includegraphics[width=\linewidth]{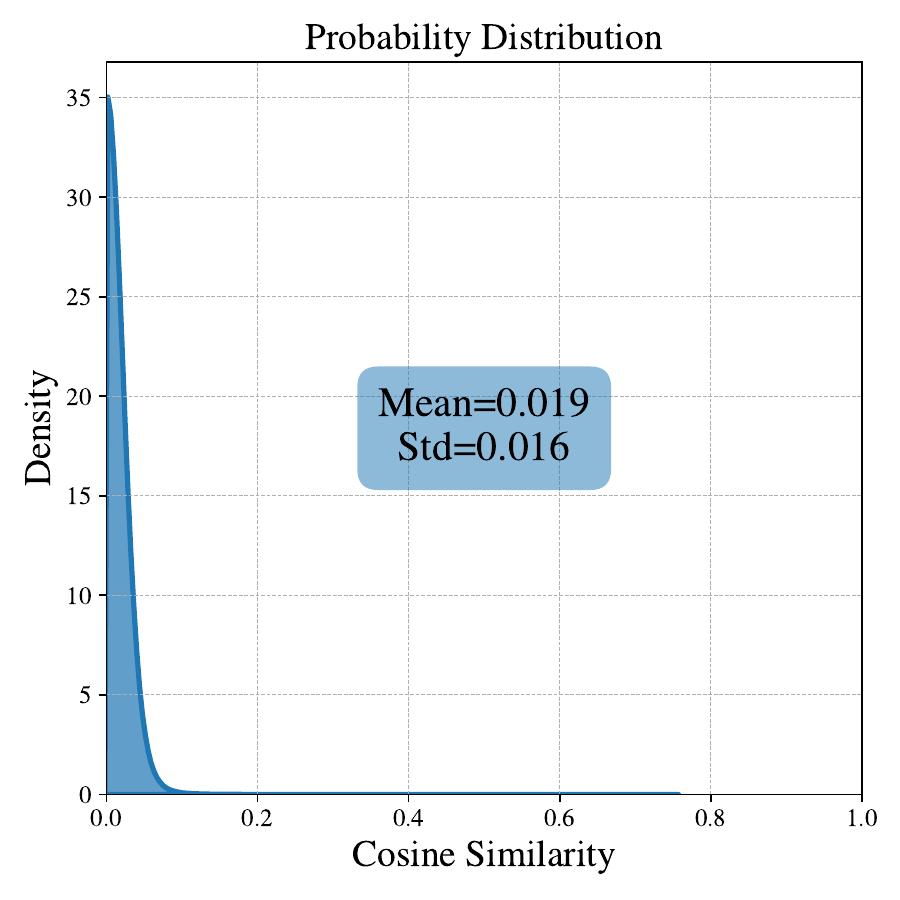}
        \caption{Financial headline classification}
    \end{subfigure}
    \begin{subfigure}[b]{0.33\linewidth}
        \includegraphics[width=\linewidth]{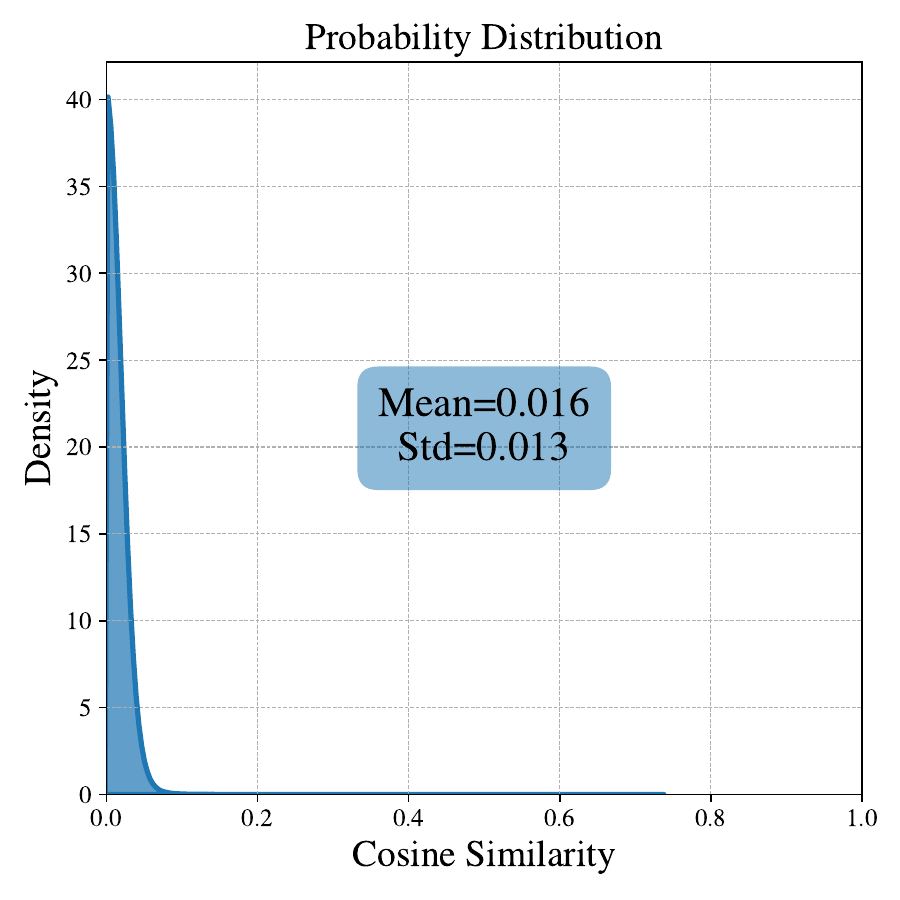}
        \caption{Legal NLI}
    \end{subfigure}
    \begin{subfigure}[b]{0.33\linewidth}
        \includegraphics[width=\linewidth]{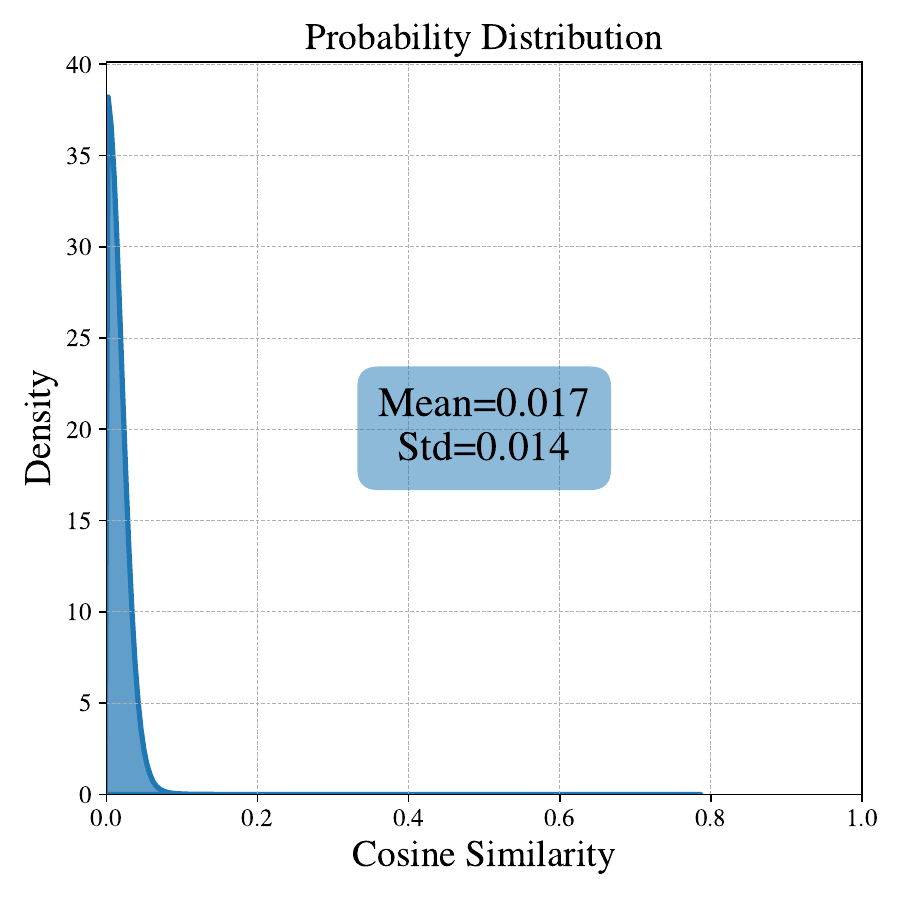}
        \caption{Legal privacy QA}
    \end{subfigure}
    \begin{subfigure}[b]{0.33\linewidth}
        \includegraphics[width=\linewidth]{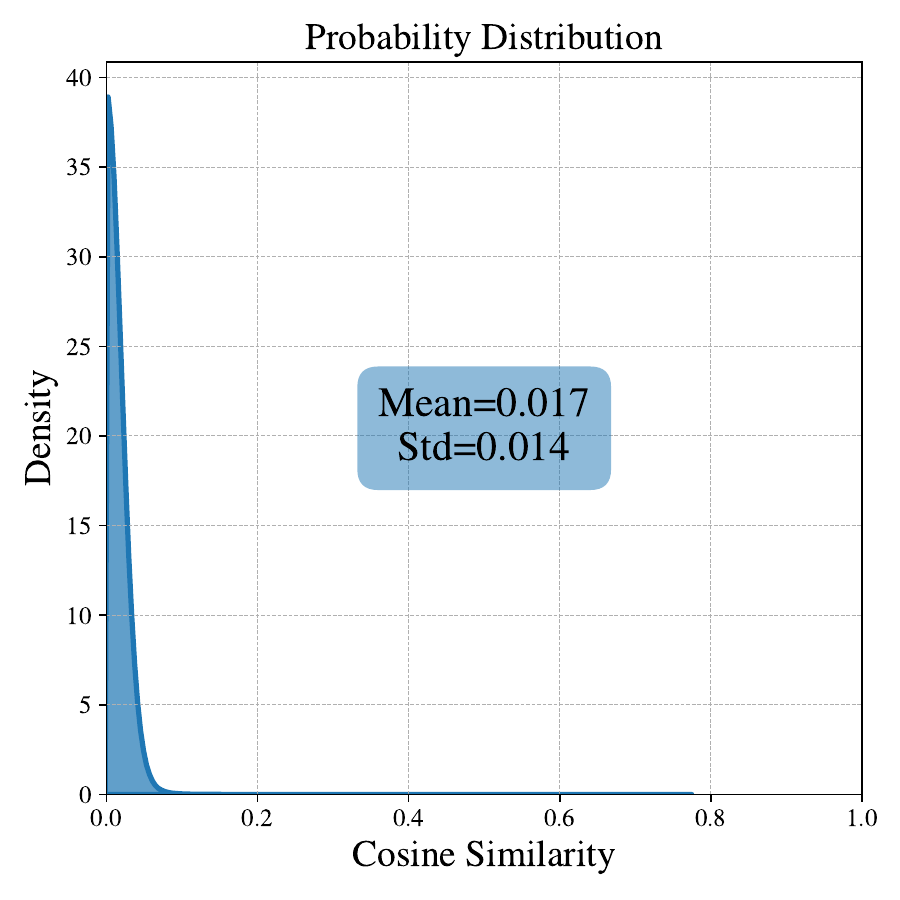}
        \caption{Depression diagnosis}
    \end{subfigure}
    \caption{Gradient similarity of cross output of \textsc{Llama3-8B-Instruct} on different tasks.}
    \label{fig:jacobian_free_gradient_similarity}
\end{figure}

\subsubsection{Similarity Between the Approximated NTK With the Standard One}
\revise{
In this part, we compute the Spearman rank correlation between the Jacobian-free approximated NTK $\widetilde{\Theta}$ and the standard one $\Theta$. As shown in Table~\ref{tab:jacobian_free_ntk_similarity}, the correlations are consistently above 0.95 across all models and tasks. This indicates that although our approximation introduces some numerical error, it preserves the relative ordering of NTK values very well, which is eactly what matters in our data selection application.
}
\begin{table}[]
    \centering
    \caption{Spearman rank correlation between the Jacobian-free approximated NTK and the standard one on different model architectures and domains.}
    \resizebox{\linewidth}{!}{
    \begin{tabular}{lllll}
        \toprule
        \textbf{Model} & \textbf{Financial Sentiment Analysis} & \textbf{Medical QA} & \textbf{Legal NLI} & \textbf{Depression Diagnosis}  \\
        \midrule
        \textsc{Llama3-8B-Instruct} & 0.9924 & 0.9788 & 0.9896 & 0.9989 \\
        \textsc{Qwen3-8B} & 0.9837 & 0.9566 & 0.9726 & 0.9980 \\
        \textsc{DeepSeek-R1-1.5B} & 0.9907 & 0.9861 & 0.9801 & 0.9977 \\
        \textsc{Hermes-8B} & 0.9885 & 0.9586 & 0.9932 & 0.9576 \\
        \bottomrule
    \end{tabular}
    }
    \label{tab:jacobian_free_ntk_similarity}
\end{table}

\subsection{Error of Random Projection}
\label{apx:random_proj_error}
\revise{
In this part, we quantify the error introduced by random projection by calculating the Spearman similarity of the calculated NTK matrices derived by using random projection or not. As Table~\ref{tab:rand_proj_similarity} shows, random projection brings negligible error to the calculated NTK matrices, therefore enhancing accurate data selection.
}

\begin{table}[]
    \caption{Spearman rank correlation of the NTK matrices by applying random projection or not on different model architectures and domains}
    \centering
    \resizebox{\linewidth}{!}{
    \begin{tabular}{lllll}
        \toprule
        & \textbf{Financial Sentiment Analysis} & \textbf{Medical QA} & \textbf{Legal NLI} & \textbf{Depression Diagnosis} \\
        \midrule
        \textsc{Llama3-8B-Instruct} & 0.9970 & 0.9968 & 0.9975 & 1.0000 \\
        \textsc{Qwen3-8B} & 0.9969 & 0.9951 & 0.9963 & 0.9997 \\
        \textsc{DeepSeek-R1-1.5B} & 0.9957 & 0.9968 & 0.9892 & 0.9995 \\
        \textsc{Hermes3-8B} & 0.9870 & 0.9955 & 0.9972 & 0.9963 \\
        \bottomrule
    \end{tabular}
    }
    \label{tab:rand_proj_similarity}
\end{table}

\subsection{Optimization Steps vs. Accuracy}
\label{apx:step_acc}
\revise{
In this part, we present how the task performance evolves under different target domain data budgets and auxiliary data. Specifically, we set the domain data budget $|\mathcal{D}|$ as 100, 200, 1000, and 2000, respectively. For each configuration, we augment with auxiliary samples by NTK-Selector to increase the data pool and therefore optimization steps to be $2\times, 5\times,10\times, 20\times$. All the training settings hold the default as described in Appendix~\ref{apx:details_training}. As Figure~\ref{fig:step_acc} shows, more optimization steps with auxiliary data points improve task performance significantly, especially under extremely low domain data budget conditions (e.g. $|\mathcal{D}|=100$).
}

\begin{figure}
    \centering
    \includegraphics[width=0.7\linewidth]{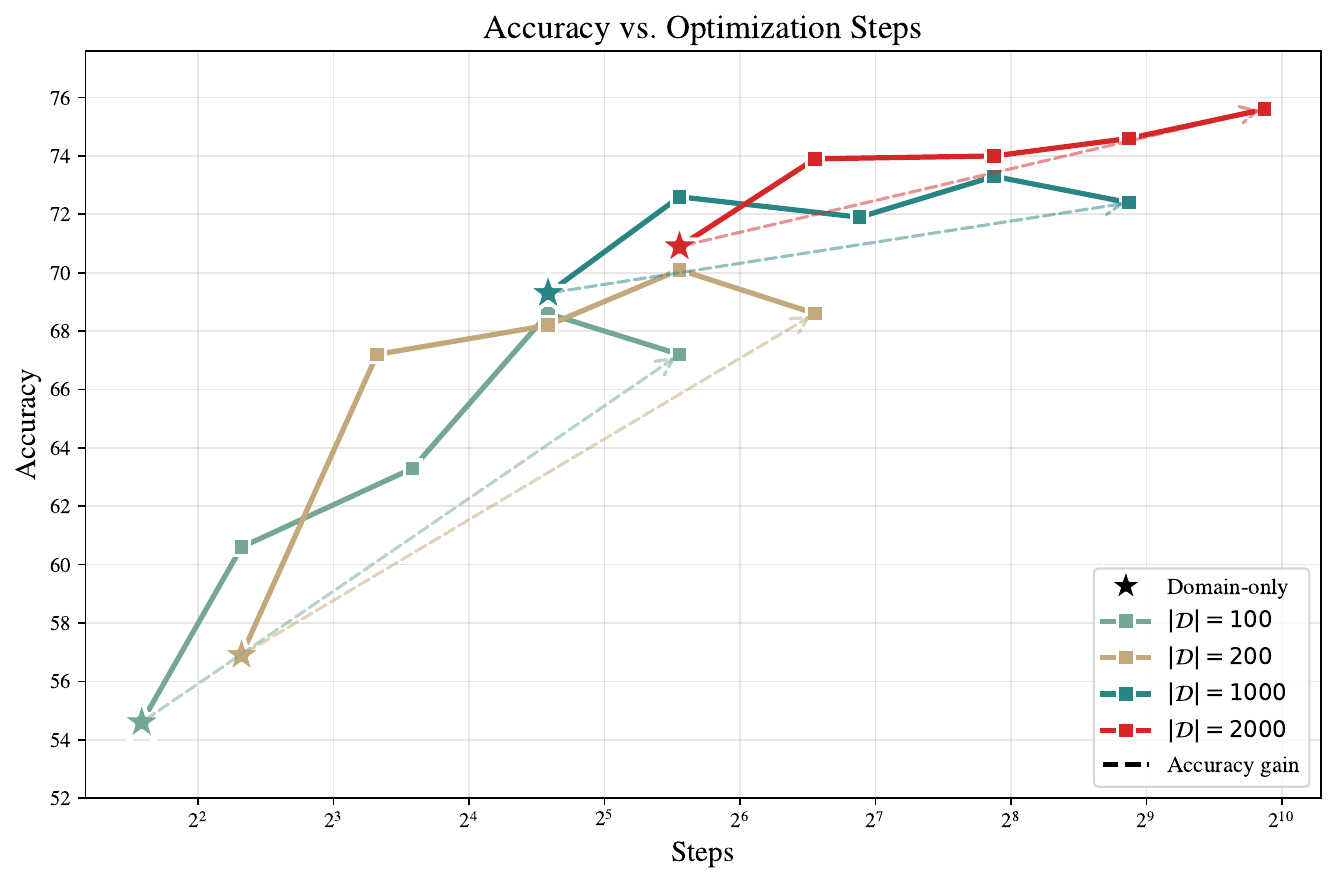}
    \caption{Optimization steps and accuracy of TFNS task on \textsc{Llama3-8B-Instruct} under different target domain data budgets ($|\mathcal{D}|=100,200,1000,2000$) augmented with various numbers of auxiliary samples ($2\times, 5\times,10\times,20\times$). The starred point of each line denotes the Domain-only setting, and other dots represent the augmented setting with selected auxiliary data.}
    \label{fig:step_acc}
\end{figure}

\subsection{Compute vs. Accuracy}
\label{apx:compute_acc}
\revise{
This section aims to provide practical guidance on how to choose hyperparameter settings based on the computational resources budget. Specifically, we investigate the impact of two important hyperparameters: pre-selection size $M=2N, 4N, 16N$ and projection size $p=1024,2048,4096,8192$, with the selection budget $N=9000$ by default. Figure~\ref{fig:m_p_time_acc} demonstrates the relationship between wall-clock run time in NTK selection stage and average task performance on \textsc{Llama3-8B-Instruct}. Table~\ref{fig:m_p_time_acc} exhibits the storage consumption of NTK selection under various hyperparameter settings. Within the acceptable computational resources budget, it is recommended to adopt a higher pre-selection size and projection size.
}

\begin{figure}
    \centering
    \includegraphics[width=0.7\linewidth]{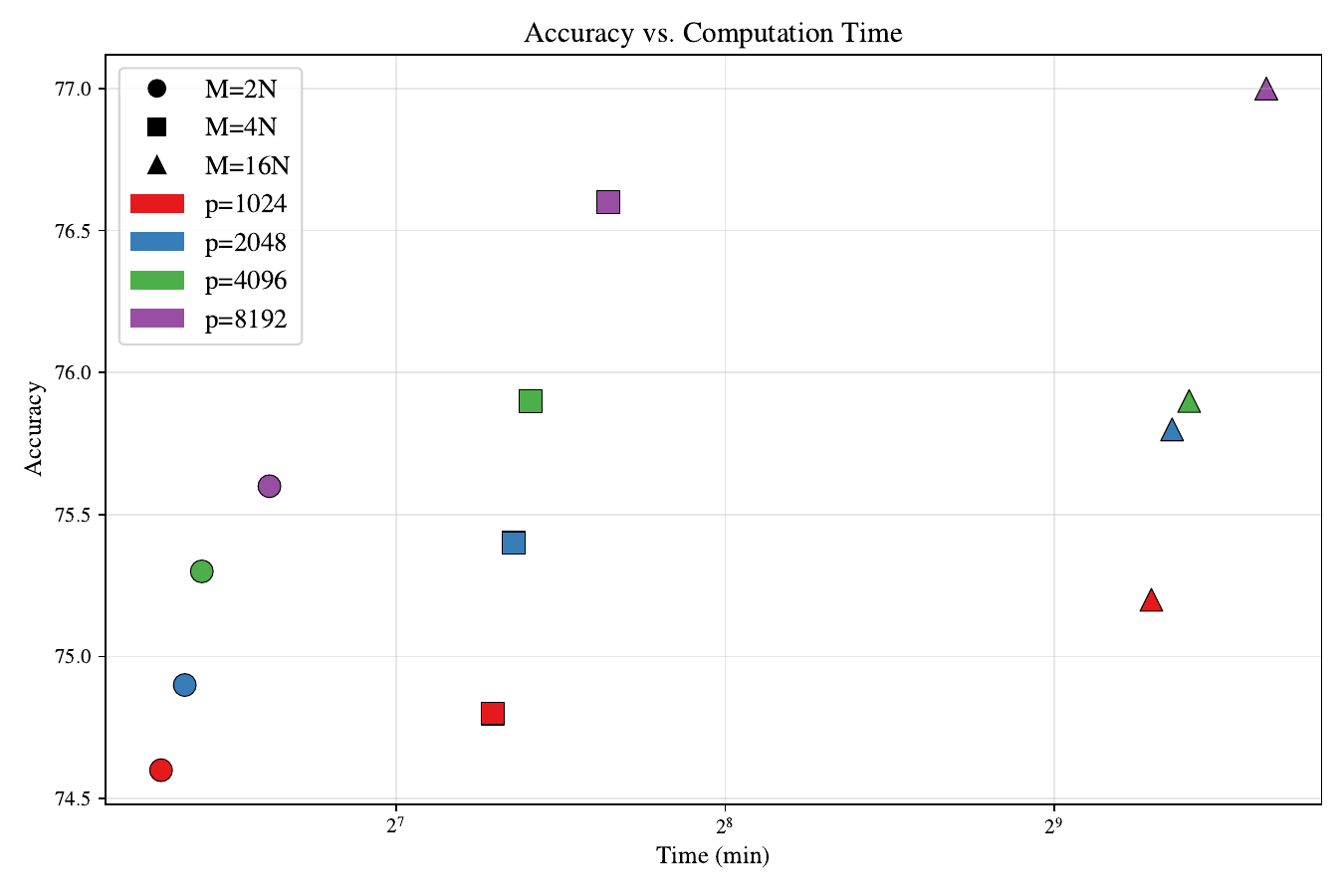}
    \caption{Time consumption and average accuracy on \textsc{Llama3-8B-Instruct} with different hyperparameter settings: pre-selection size $M=2N,4N,16N$, projection dimension $p=1024,2048,4096,8192$. The time consumption is computed for the NTK selection stage on one single A100 GPU.}
    \label{fig:m_p_time_acc}
\end{figure}

\begin{table}[]
    \centering
    \caption{Storage consumption of NTK selection stage with different hyperparameter settings: pre-selection size $M=2N,4N,16N$, projection dimension $p=1024,2048,4096,8192$. All the gradients are stored in torch.float32 format.}
    \begin{tabular}{lllll}
        \toprule
        & $p=1024$ & $p=2048$ & $p=4096$ & $p=8192$ \\
        \midrule
        $M=2N$ & 0.07 GB & 0.14 GB & 0.27 GB & 0.55 GB \\
        $M=4N$ & 0.14 GB & 0.27 GB & 0.55 GB & 1.10 GB \\
        $M=16N$ & 0.55 GB & 1.10 GB & 2.20 GB & 4.39 GB \\
        \bottomrule
    \end{tabular}
    \label{tab:m_p_storage}
\end{table}

\subsection{Compute-matched Comparison with Domain-only}
\label{apx:compute_matched_comparison}
\revise{
To prove the improvement does not stem from more optimization steps, we perform a compute-matched study where we increase the number of training epochs for Domain-only on Llama3-8B-Instruct by a factor of 10, so that the total number of update steps (file $\times$ epoch) roughly matches that of NTK-Selector. The results are shown in Table~\ref{tab:compute_match_domain_only}. We observe that increasing the compute for Domain-only from 1k$\times$3 to 1k$\times$30 leads to only a marginal average improvement (68.7 → 69.5), and in several tasks (e.g., MedMCQA, MMLU-Med, Headline), performance actually drops, suggesting overfitting rather than genuine gains. In contrast, NTK-Selector remains clearly ahead even when Domain-only is given a comparable update budget. This indicates that our improvements are not simply due to higher compute, but to better data selection. Orthogonal to our data-centric gains, model-centric approaches have also improved deployment efficiency through retraining-free pruning and reconstruction \citep{wang2024pruning,wang2024reconstruct}.
}

\begin{table}[]
    \centering
    \caption{Compute-matched comparison between the domain-only setting and NTK-Selector.}
    \resizebox{\linewidth}{!}{
    \begin{tabular}{llccccccccc}
    \toprule
    & \makecell[c]{\textbf{Training budget}\\(file * epoch)} & \textbf{MedMCQA} & \textbf{MMLU-Med} & \textbf{FPB} & \textbf{TFNS} & \textbf{Headline} & \textbf{ContractNLI} & \textbf{PrivacyQA} & \textbf{RSDD} & \textbf{Avg.} \\
    \midrule
    Base& - & 56.7 & 72.3 & 81.7 & 57.2 & 78.5 & 69.9 & 57.9 & 78.8 & 67.9 \\
    Domain-only & 1k$\times$3 & 56.5 & 71.8 & 80.1 & 69.3 & 82.4 & 41.1 & 63.6 & 85.1 & 68.7 \\
    & 1k$\times$30 & 54.8 & 69.3 & 81.2 & 71.7 & 80.3 & 43.4 & 69.1 & 86.2 & 69.5 \\
    \midrule
    NTK-Selector & 10k$\times$3 & \textbf{59.1} & \textbf{73.8} & \textbf{85.5} & \textbf{73.3} & \textbf{86.2} & \textbf{70.6} & \textbf{67.8} & \textbf{96.5} & \textbf{76.6} \\
    \bottomrule
    \end{tabular}
    }
    \label{tab:compute_match_domain_only}
\end{table}

\subsection{Effect of Warm-up Stage in Pre-selection.}
\label{apx:no_warmp}
\revise{
To analyze the effect of the warm-up stage in pre-selection, we perform an ablation on \textsc{Llama3-8B-Instruct} to isolate the effect of the warm-up stage before NTK-based selection. Specifically, we remove the warm-up and compare against NTK-Selector. Table~\ref{tab:no_warmup} shows that, even without warm-up, NTK-Selector still outperforms all baselines in average score, and is very close to the full version with warm-up. This indicates that the main performance gains come from the NTK-based selection itself, while the warm-up serves primarily as a modest enhancement to better exploit the limited target-domain data, rather than as a crucial extra-compute advantage.
}

\begin{table}[]
\caption{Task performance by removing the warm-up stage in pre-selection on \textsc{Llama3-8B-Instruct}.}
\resizebox{\linewidth}{!}{
\begin{tabular}{lccccccccc}
    \toprule
    \textbf{} & \multicolumn{1}{c}{\textbf{MedMCQA}} & \multicolumn{1}{c}{\textbf{MMLU-Med}} & \multicolumn{1}{c}{\textbf{FPB}} & \multicolumn{1}{c}{\textbf{TFNS}} & \multicolumn{1}{c}{\textbf{Headline}} & \multicolumn{1}{c}{\textbf{ContractNLI}} & \multicolumn{1}{c}{\textbf{PrivacyQA}} & \multicolumn{1}{c}{\textbf{RSDD}} & \multicolumn{1}{c}{\textbf{Avg.}} \\
    \midrule
    \textbf{NTK-Selector} & 59.1 & 73.8 & 85.5 & 73.3 & 86.2 & 70.6 & 67.8 & 96.5 & 76.6 \\
    \textbf{- Warmup} & 58.7 & 73.2 & 85.8 & 72.7 & 85.6 & 71.6 & 66.9 & 95.8 & 76.3 \\
    \bottomrule
\end{tabular}
}
\label{tab:no_warmup}
\end{table}

\subsection{Documented Overlap}
\label{apx:overlap}
\revise{
To investigate the documented overlap, we computed, for each text $x$ in CoT Collection and the test set for each task, the maximum 3-gram overlap $\max_i \text{sim}(x_, y_i)$ with the test samples $\{y_1, y_2, \dots\}$ of that task, and then summarized the overlap distribution. As shown in Table~\ref{tab:3gram_overlap}, for all tasks, especially the legal and privacy benchmarks, almost all CoT Collection samples have very low overlap ($<$0.1) with any test example, and we find essentially no near-duplicates. These results suggest that, under a conservative 3-gram criterion, the candidate pool contains virtually no duplicated or near-duplicated test examples, and we do not observe evidence of leakage from CoT Collection into our evaluation sets.
}

\begin{table}[]
\caption{Overlap proportion between Cot Collection and the test set of each task by calculating the 3-gram overlap.}
\resizebox{\linewidth}{!}{
\begin{tabular}{cllllllll}
    \toprule
    \textbf{Overlap} & \multicolumn{1}{c}{\textbf{MedMCQA}} & \multicolumn{1}{c}{\textbf{MMLU-Med}} & \multicolumn{1}{c}{\textbf{FPB}} & \multicolumn{1}{c}{\textbf{TFNS}} & \multicolumn{1}{c}{\textbf{Headline}} & \multicolumn{1}{c}{\textbf{ContractNLI}} & \multicolumn{1}{c}{\textbf{PrivacyQA}} & \multicolumn{1}{c}{\textbf{RSDD}} \\
    \midrule
    \textbf{{[}0.00, 0.10)} & 99.96\% & 99.99\% & 99.87\% & 100.00\% & 100.00\% & 100.00\% & 100.00\% & 100.00\% \\
    \textbf{{[}0.10, 0.20)} & 0.04\% & 0.01\% & 0.01\% & 0.00\% & 0.00\% & 0.00\% & 0.00\% & 0.00\% \\
    \textbf{{[}0.20, 0.40)} & 0.00\% & 0.00\% & 0.06\% & 0.00\% & 0.00\% & 0.00\% & 0.00\% & 0.00\% \\
    \textbf{{[}0.40, 0.60)} & 0.00\% & 0.00\% & 0.05\% & 0.00\% & 0.00\% & 0.00\% & 0.00\% & 0.00\% \\
    \textbf{{[}0.60, 0.80)} & 0.00\% & 0.00\% & 0.00\% & 0.00\% & 0.00\% & 0.00\% & 0.00\% & 0.00\% \\
    \textbf{{[}0.80, 1.00)} & 0.00\% & 0.00\% & 0.00\% & 0.00\% & 0.00\% & 0.00\% & 0.00\% & 0.00\% \\
    \bottomrule
\end{tabular}
}
\label{tab:3gram_overlap}
\end{table}

\subsection{Why Augment with CoT.}
\label{apx:why_augment_with_cot}
\revise{
Our decision to perform data selection on chain-of-thought (CoT) to annotate examples follows recent work showing that supervising models on intermediate reasoning steps, rather than only final answers, can substantially improve training effectiveness across tasks~\citep{li2023symbolic,hsieh2023distilling}. CoT labels provide a richer signal about the model’s reasoning process, which is exactly what NTK-Selector aims to match. In preliminary experiments where the reference domain data contained only final answers (no CoT), we observed that selection tends to favor examples with extremely short or even uninformative answers. This makes it difficult to align auxiliary samples with the model’s actual reasoning or analysis patterns. Adding CoT rationales mitigates this issue by giving a more faithful representation of how the model is expected to “think” on the task.
}

\revise{Importantly, within each task listed before, all methods (Domain-only, baselines, NTK-Selector) use the same set of generated CoT-annotated domain instances. The synthetic CoT is therefore part of the shared supervision signal, not a special advantage for NTK-Selector. In fact, we often see that CoT annotation also improves the Domain-only baseline, since richer rationales help the model learn better reasoning even without auxiliary data. As a result, the relative performance gains of NTK-Selector over other methods are not driven by CoT generation itself, but by how effectively each method chooses auxiliary data given the same domain-CoT supervision.
}

\subsection{Ablation over Selection Pipeline}
\revise{
In this part, we conduct an ablation over the selection pipeline on different domains where the selection size $N$ is set to be 9000 by default. As shown in Table~\ref{tab:ablation_pre_ntk_selection}, adding NTK selection on top of pre-selection brings clear additional gains over using pre-selection alone (e.g., from 64.6 to 71.7 on ContractNLI), and these gains generally increase as $M$ grows. We believe these results suggest that the NTK component provides a complementary signal beyond embedding-based semantic similarity.
}

\begin{table}[]
    \centering
    \caption{Performance of \textsc{Llama3-8B-Instruct} by ablating pre-selection and NTK selection.}
    \begin{tabular}{lcccc}
        \toprule
        & \textbf{MMLU-Med} & \textbf{FPB} & \textbf{ContractNLI} & \textbf{RSDD} \\ 
        \midrule
        NTK-Selector ($M=4N$, default) & 73.7 & 85.5 & 70.6 & 96.5 \\
        Only Pre-selection ($M=N$) & 73.0 & 85.1 & 64.6 & 94.3 \\
        Pre-selection + NTK selection ($M=2N$) & 73.3 & 84.5 & 68.9 & 95.5 \\
        Pre-selection + NTK selection ($M=16N$) & 74.4 & 86.9 & 70.6 & 95.9 \\
        Only NTK selection ($M=|\mathcal{C}|$) & 74.6 & 87.0 & 71.7 & 96.7\\
        \bottomrule
    \end{tabular}
    \label{tab:ablation_pre_ntk_selection}
\end{table}

\subsection{Results on Other Low-resource Tasks}
\label{apx:results_other_low_resource_tasks}
\revise{
To further evaluate the efficacy of NTK-Selector in domains that are less commonly studied, we introduce a low-resource language task: Kinyarwanda language new classification (KINNEWS~\citep{niyongabo2020kinnews}). Besides, to introduce more diversified QA formats, we add extraction QA tasks ranging from agricultural irrigation (AgXQA~\citep{kpodo2024agxqa}) to solar cell QA (SCQA~\citep{li2025scqa}). Additionally, we also introduce an open-ended QA task about climate QA (ClimaQA Freeform~\citep{manivannan2025climaqa}). We report F1 scores for all AgXQA, SCQA, and KINNEWS, and report BertScore and Rouge-L for ClimaQA Freeform. As shown in Table~\ref{tab:low_resource_task}, NTK-Selector continues to perform strongly and typically outperforms all baselines, suggesting that our approach remains effective in genuinely low-resource, specialized domains across extractive and free-form QA as well as classification tasks.
}

\begin{table}[]
    \centering
    \caption{Performance of \textsc{Qwen3-8B} on agricultural irrigation QA task (AgXQA), solar cell QA task (SCQA), open-ended climate QA task (ClimaQA Freeform), and Kinyarwanda language news classification task (KINNEWS).}
    \begin{tabular}{lcccc}
        \toprule
         & AgXQA & SCQA & ClimaQA Freeform & KINNEWS \\
         \midrule
         Base & 69.7 & 46.8 & 77.9 / 27.0 & 54.5 \\
         Domain-only & 69.8 & 52.3 & 79.9 / 35.1 & 51.8 \\
         Random & 72.1 & 64.0 & 78.4 / 27.6 & 56.9 \\
         DSIR & 68.7 & 63.0 & 73.5 / 18.5 & 65.6 \\
         LESS & 73.4 & 64.1 & 73.5 / 18.5 & 65.6 \\
         TSDS & 76.0 & 63.5 & 77.7 / 29.1 & 67.6 \\
         \textbf{NTK-Selector} & \textbf{78.3} & \textbf{64.5} & \textbf{81.3} / \textbf{40.9} & \textbf{70.8} \\
         \bottomrule
    \end{tabular}
    \label{tab:low_resource_task}
\end{table}

\section{Query Prompts}
For the tasks as detailed in Appendix~\ref{apx:details_tasks}, the corresponding instruction datasets often contain very brief responses that lack a reasoning process, such as single-character choices (e.g., “A”, “B”, “C”) or several words. This brevity limits a model's ability to learn complex reasoning chains and generate detailed explanations crucial for improving task accuracy.
To guide the generation process, we designed a specific prompt template that incorporates the original instruction dataset. We then used this template to query GPT-4o-mini~\citep{hurst2024gpt4o}, as demonstrated in Figures~\ref{fig:prompt_fpb_tfns}, \ref{fig:prompt_headline}, \ref{fig:prompt_contractnli}, \ref{fig:prompt_privacyqa}, and \ref{fig:prompt_rsdd}. Each template comprises three main components: a persona-defining system prompt (e.g., ``You are a financial analyst..."), a multi-step instructional prompt outlining the required analysis (e.g., ``Analyze the sentiment..."), and placeholders for the original instruction and right answer. By leveraging these structured prompts, we consistently generated outputs that adhered to a predetermined format and analytical style.

\label{apx:prompt}
\begin{figure}
    \centering
    \includegraphics[width=\linewidth]{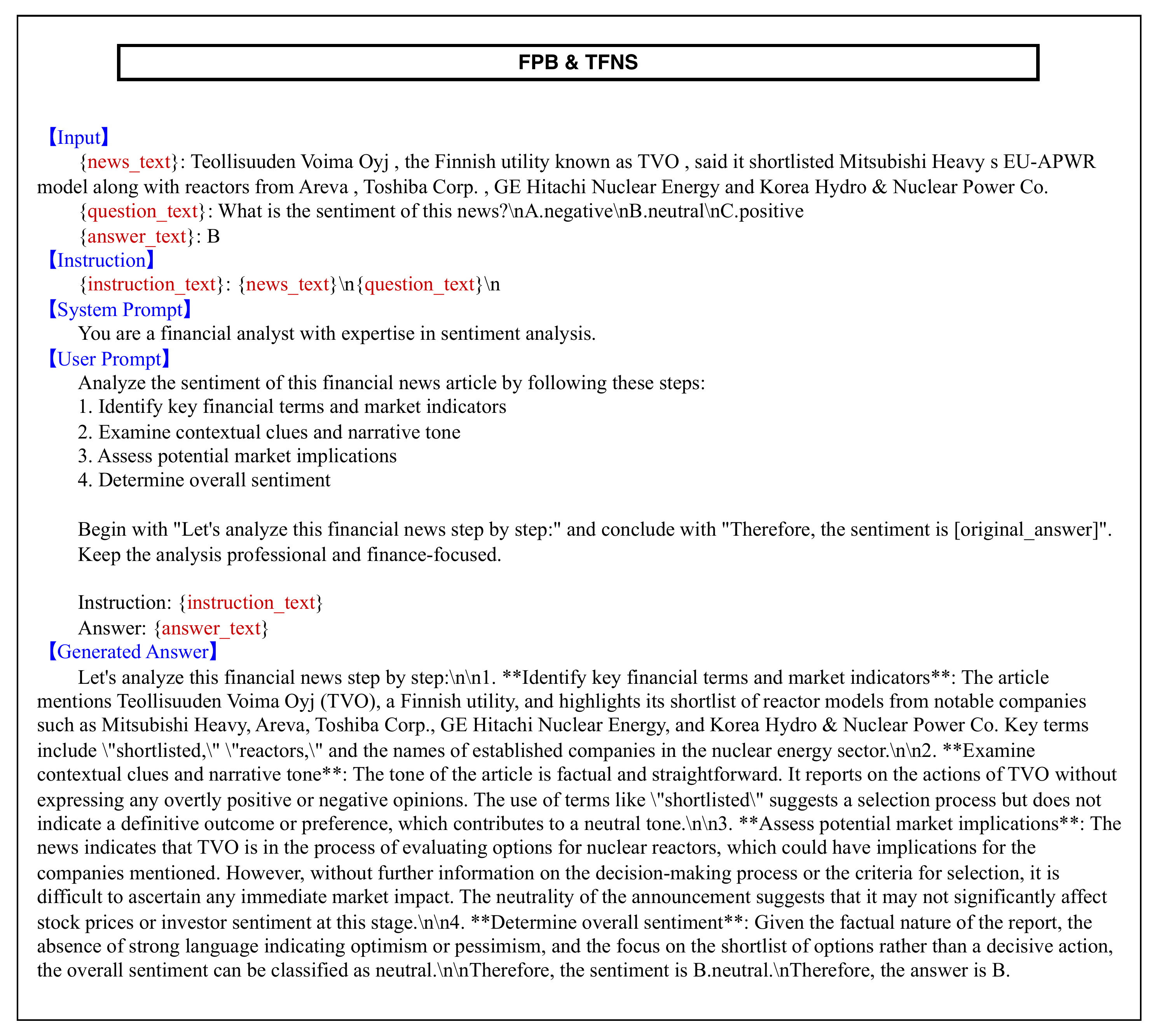}
    \caption{Query prompt and example of generated answer for financial sentiment analysis (FPB and TFNS) tasks.}
    \label{fig:prompt_fpb_tfns}
\end{figure}

\begin{figure}
    \centering
    \includegraphics[width=\linewidth]{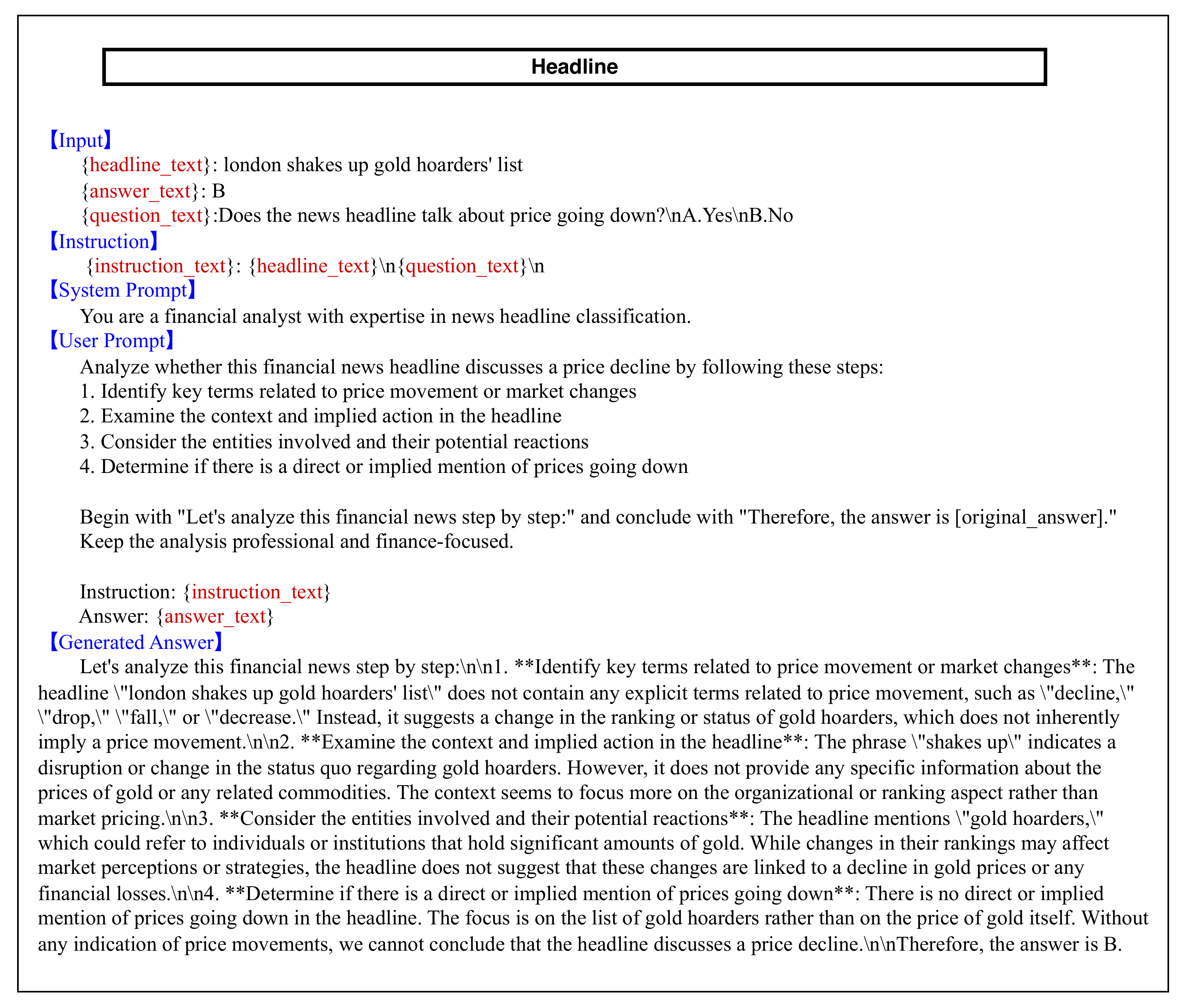}
    \caption{Query prompt and example of generated answer for financial headline classification (Headline) task.}
    \label{fig:prompt_headline}
\end{figure}

\begin{figure}
    \centering
    \includegraphics[width=\linewidth]{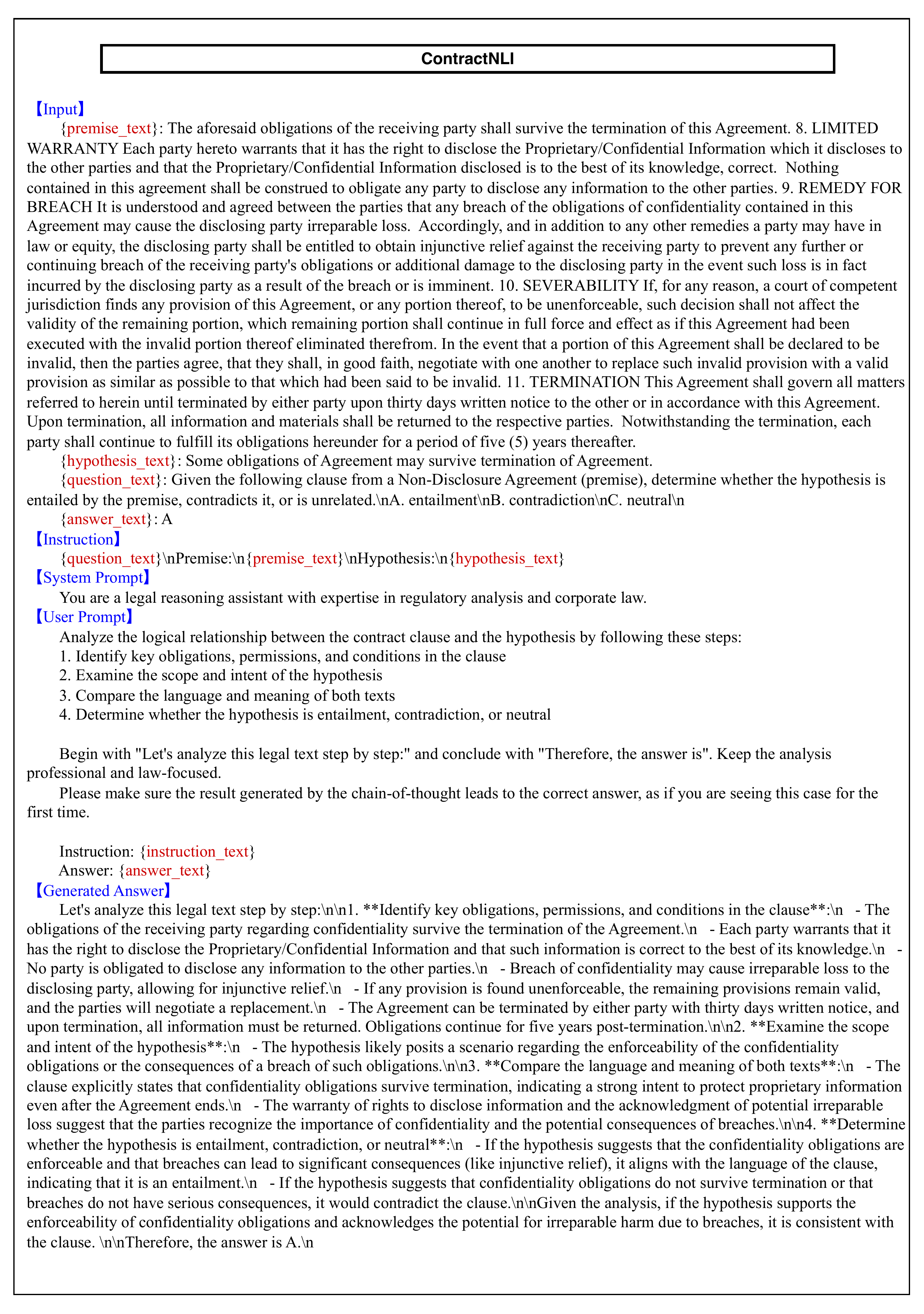}
    \caption{Query prompt and example of generated answer for legal natural language inference (ContractNLI) task.}
    \label{fig:prompt_contractnli}
\end{figure}

\begin{figure}
    \centering
    \includegraphics[width=\linewidth]{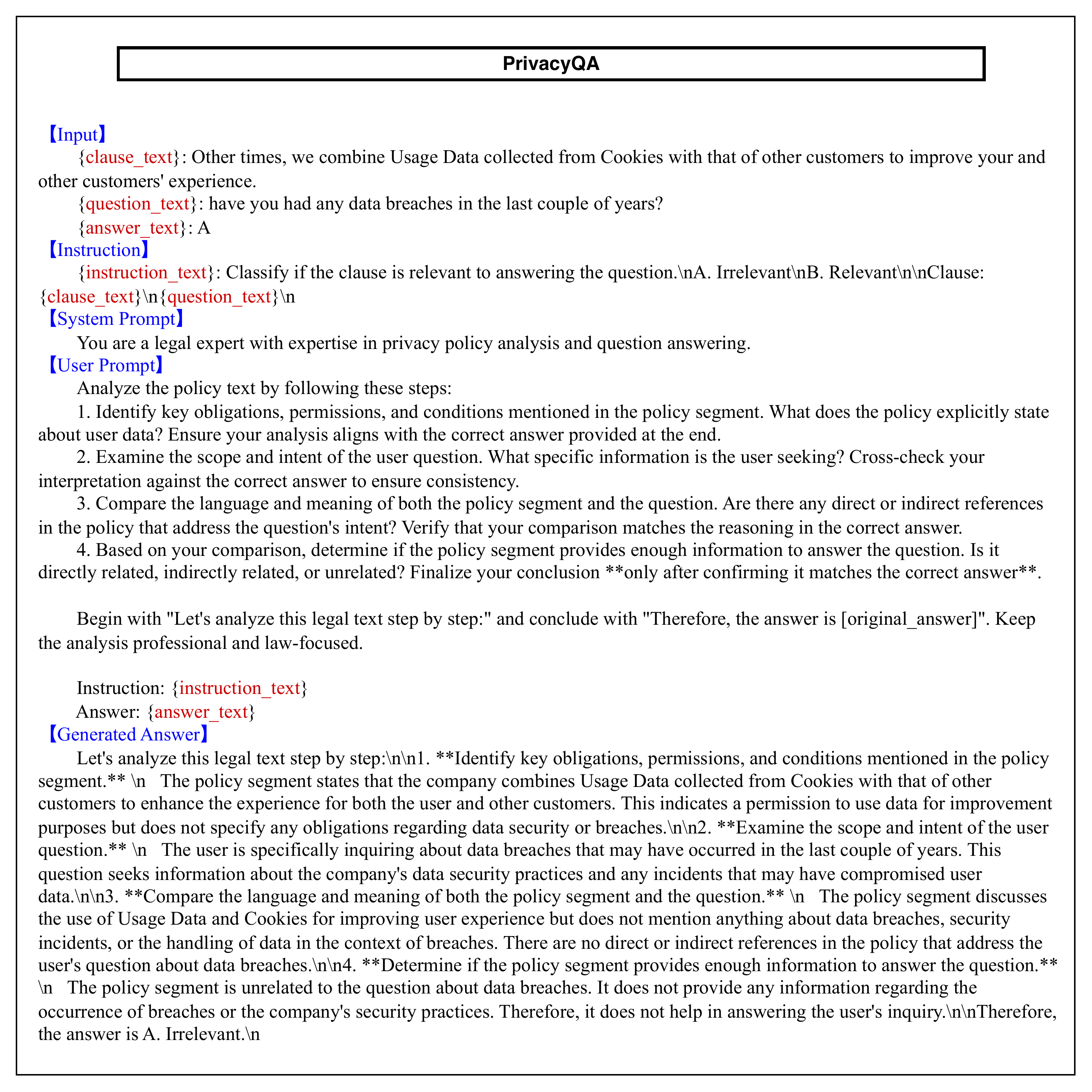}
    \caption{Query prompt and example of generated answer for privacy policy-based question-answering (PrivacyQA) task.}
    \label{fig:prompt_privacyqa}
\end{figure}

\begin{figure}
    \centering
    \includegraphics[width=\linewidth]{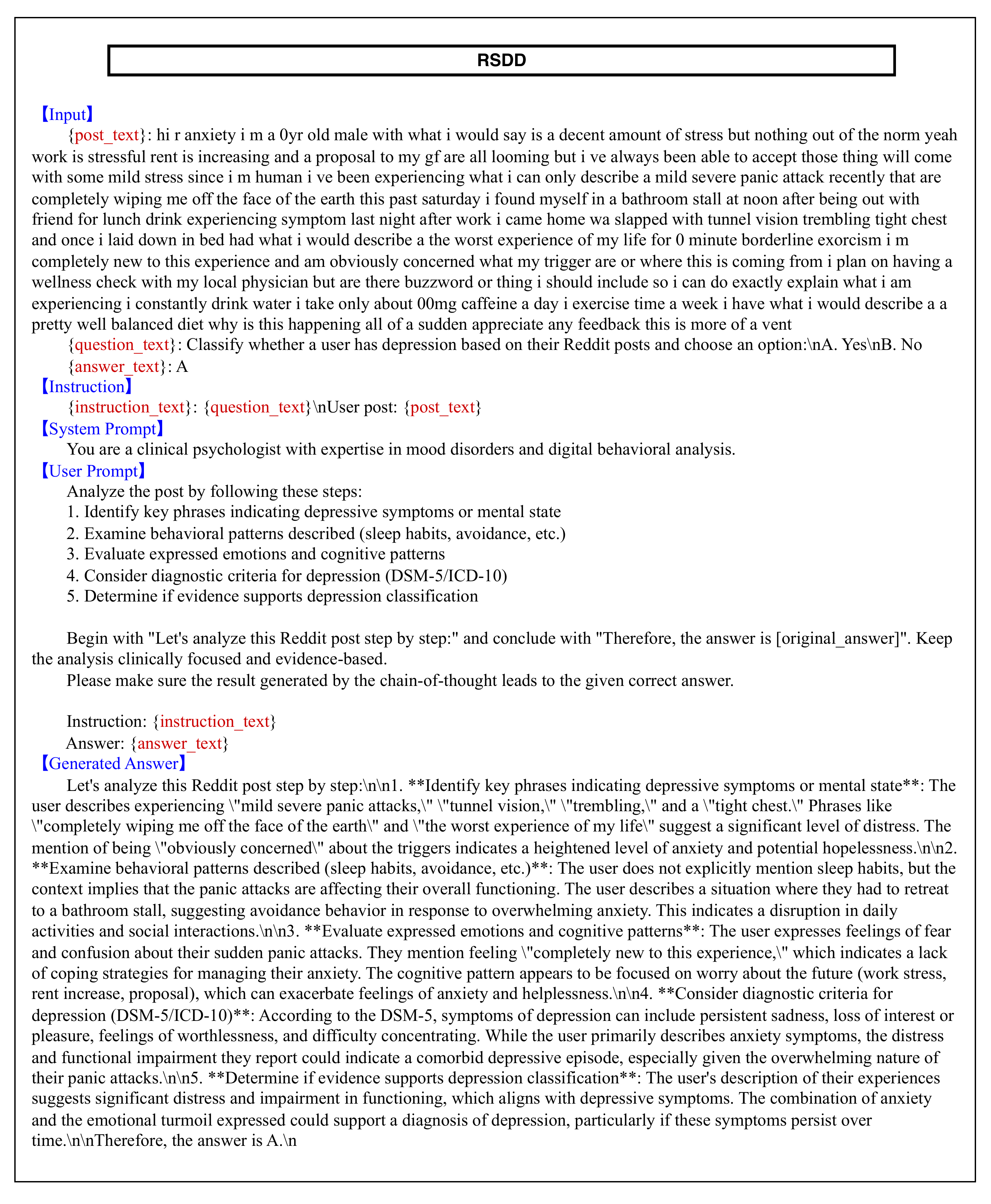}
    \caption{Query prompt and example of generated answer for depression diagnosis (RSDD) task.}
    \label{fig:prompt_rsdd}
\end{figure}

\section{Case Study of Selected Auxiliary Samples}
\label{apx:selected_samples}
In this part, we demonstrate examples of the domain sample and the selected auxiliary sample by NTK-Selector for each task as shown in Figures~\ref{fig:case_medmcqa_mmlu_fpb_tfns}, \ref{fig:case_headline_contractnli}, and \ref{fig:case_privacyqa_rsdd}, showcasing their shared topics, reasoning patterns, or inherent capabilities. For instance, in the ContractNLI task, although the domain sample's topic (non-disclosure agreements) and the auxiliary sample's topic (constitutional issues) differ, both are textual entailment tasks and require a similar reasoning pattern to determine the relationship between the premise and the hypothesis. This example highlights that our selector effectively identifies samples that share underlying logical structures, even when their semantic content varies.

\begin{figure}
    \centering
    \includegraphics[width=\linewidth]{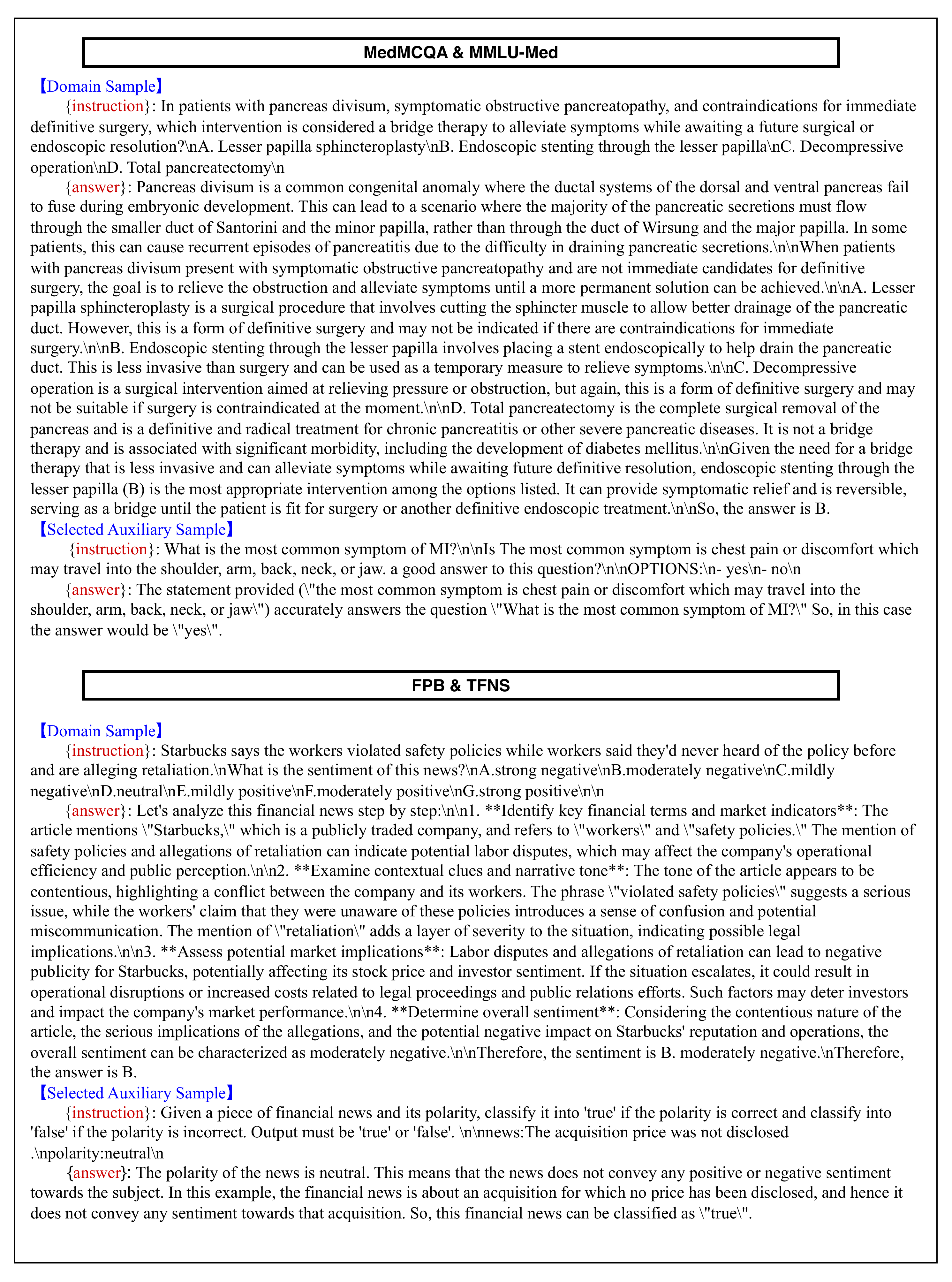}
    \caption{Example of domain sample and selected auxiliary sample of medical question answering (MedMCQA and MMLU-Med) and financial sentiment analysis (FPB and TFNS) tasks.}
    \label{fig:case_medmcqa_mmlu_fpb_tfns}
\end{figure}

\begin{figure}
    \centering
    \includegraphics[width=\linewidth]{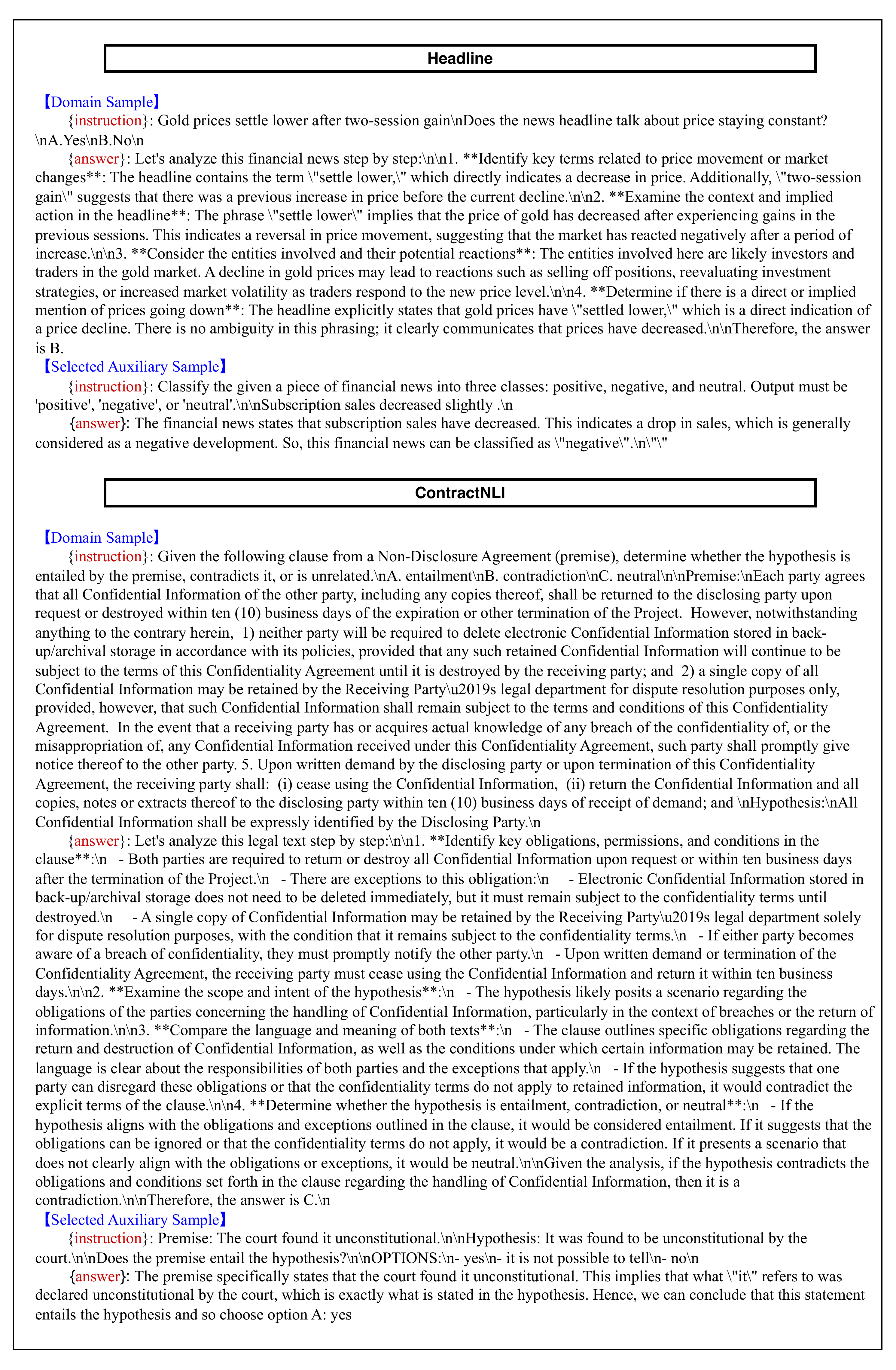}
    \caption{Example of domain sample and selected auxiliary sample of financial headline classification (Headline) and contract natural language inference (ContractNLI) tasks.}
    \label{fig:case_headline_contractnli}
\end{figure}

\begin{figure}
    \centering
    \includegraphics[width=\linewidth]{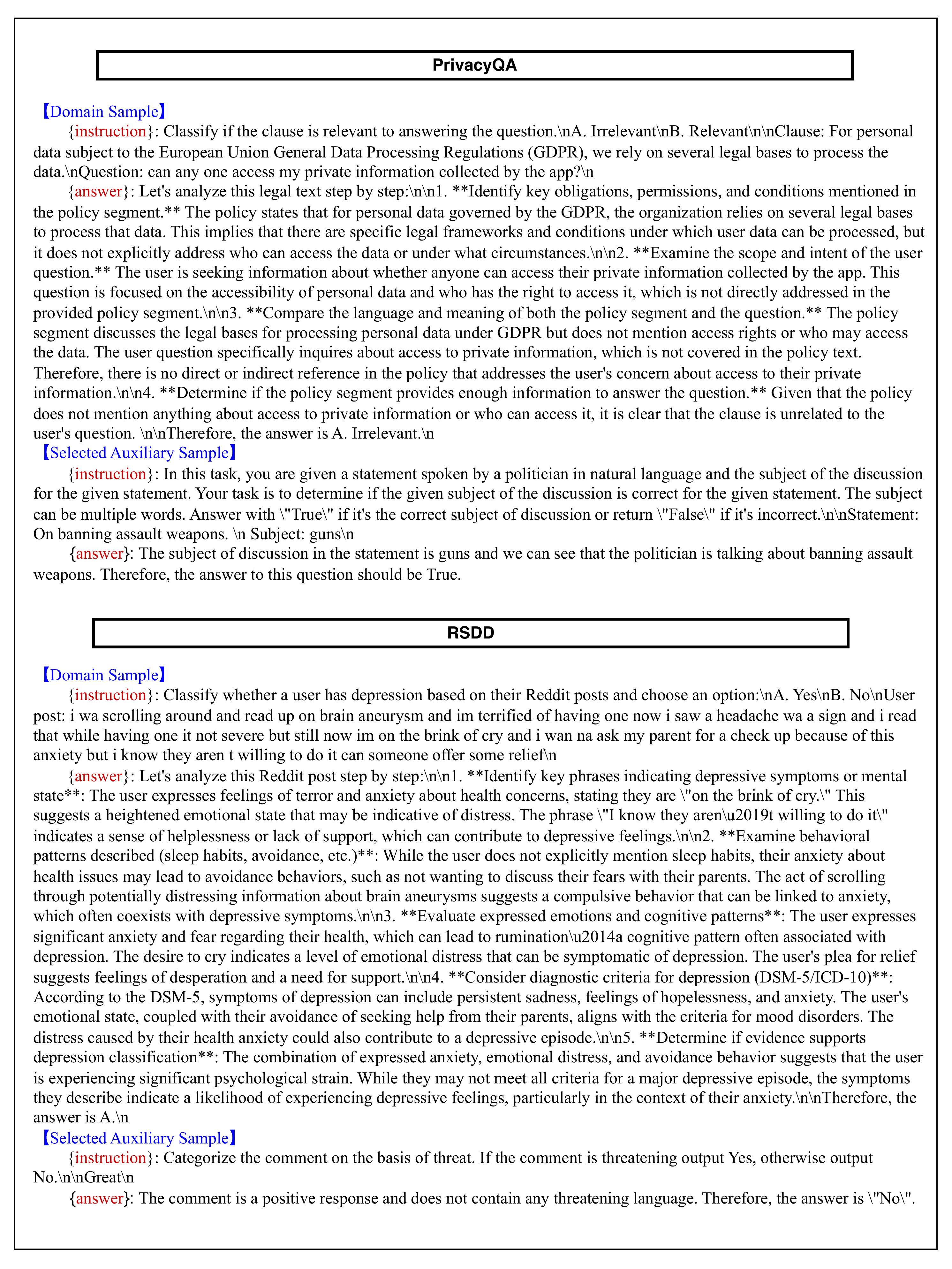}
    \caption{Example of domain sample and selected auxiliary sample of privacy policy question-answering (PrivacyQA) and depression diagnosis (RSDD) tasks.}
    \label{fig:case_privacyqa_rsdd}
\end{figure}


\end{document}